\documentclass{article}

\pdfoutput=1

\newcommand{\dom}{\mathrm{dom}}
\newcommand{\OO}{\mathbb{O}}
\newcommand{\bS}{\mathbb{S}}
\usepackage[ruled,linesnumbered]{algorithm2e}

\usepackage{mystyle}
\usepackage[para]{footmisc}
\usepackage{xr}

\usepackage[utf8]{inputenc} 
\usepackage[T1]{fontenc}    
\usepackage{hyperref}       
\usepackage{url}            
\usepackage{booktabs}       
\usepackage{amsfonts}       
\usepackage{nicefrac}       
\usepackage{microtype}      
\usepackage{xcolor}         
\usepackage{wrapfig} 
\usepackage{enumitem}

\newcommand{\citep}[1]{\cite{#1}}

\newcommand{\CompileArxiv}[2]{#1}

\title{Smooth Bilevel Programming \\ for Sparse Regularization}

\author{%
Clarice Poon\thanks{Department of Mathematical Sciences, University of Bath, Bath BA2 7AY, UK \texttt{cmhsp20@bath.ac.uk}}, \quad%
Gabriel Peyr\'e\thanks{CNRS and DMA, Ecole Normale Sup\'erieure, PSL University, 45 rue d'Ulm, F-75230 PARIS cedex 05, FRANCE, \texttt{gabriel.peyre@ens.fr} }%
}

\newcommand{\rev}[1]{{#1}}

\begin{document}

\maketitle


\begin{abstract}
Iteratively reweighted least square (IRLS) is a popular approach to solve sparsity-enforcing regression problems in machine learning. 
State of the art approaches are more efficient but typically rely on specific coordinate pruning schemes. 
In this work, we show how a surprisingly simple re-parametrization of IRLS, coupled with a bilevel resolution (instead of an alternating scheme) is able to achieve top performances on a wide range of sparsity (such as Lasso, group Lasso and trace norm regularizations), regularization strength (including hard constraints), and design matrices (ranging from correlated designs to differential operators).
Similarly to IRLS, our method only involves linear systems resolutions, but in sharp contrast, corresponds to the minimization of a smooth function. Despite being non-convex, we show that there are no spurious minima and that saddle points are ``ridable'', so that there always exists a descent direction. 
We thus advocate for the use of a BFGS quasi-Newton solver, which makes our approach  simple, robust and efficient. 
We perform a numerical benchmark of the convergence speed of our algorithm against state of the art solvers for Lasso, group Lasso, trace norm and linearly constrained problems. These results highlight the versatility of our approach, removing the need to use different solvers depending on the specificity of the ML problem under study.
%
\end{abstract}

\newcommand{\fbp}{f_{\mathrm{bp}}}

\section{Introduction}

Regularized empirical risk minimization is a workhorse of supervised learning, and for a linear model, it reads
\begin{equation}\label{eq:lasso_gen}
	\min_{\beta\in\RR^n}   R(\beta) +\frac{1}{\lambda} L(X\beta,y) \tag{$\Pp_\la$}
\end{equation}
where $X\in\RR^{m\times n}$ is the design matrix ($n$ being the number of samples and $m$ the number of features), $L:\RR^m\times \RR^m \to [0,\infty)$ is the loss function, and $R:\RR^n \to [0,\infty)$ the regularizer. Here $\la \geq 0$ is the regularisation parameter which is typically tuned by cross-validation, and in the limit case $\la=0$, $(\Pp_0)$ is a constraint problem $\min_\be R(\be)$ under the constraint $L(X\beta,y) = 0$. 

In this work, we focus our attention to sparsity enforcing penalties, which induce some form of structure on the solution of~\eqref{eq:lasso_gen}, the most celebrated examples (reviewed in Section~\ref{sec:quad-var}) being the Lasso, group-Lasso and trace norm regularizers. 
All these regularizers, and much more (as detailed in Section), can be conveniently re-written as an infimum of quadratic functions. While Section~\ref{sec:quad-var} reviews more general formulations, this so-called ``quadratic variational form'' is especially simple in the case of block-separable functionals (such as Lasso and group-Lasso), where one has 
\begin{equation}\label{eq:etatrick0}
	R(\beta) = \min_{\eta \in\RR^k_+}  \frac{1}{2} \sum_{g\in\Gg} \frac{\norm{\beta_g}_2^2}{\eta_g} +\frac{1}{2} h(\eta), 
\end{equation}
where $\Gg$ is a partition of $\ens{1,\ldots, n}$, $k = \abs{\Gg}$ is the number of groups and $h:\RR^k_+\to [0,\infty)$. 
An important example is the  group-Lasso, where $R(\be)=\sum_g \norm{\be_g}_2$ is a group-$\ell_1$ norm, in which case $h(\eta) = \sum_i \eta_i$. The special case of the Lasso, corresponding to the $\ell_1$ norm is obtained when $g= \ens{i}$ for $i=1,\ldots, n$ and $k=n$. 
This quadratic variational form~\eqref{eq:etatrick0} is at the heart of the Iterative Reweighted Least Squares (IRLS) approach, reviewed in Section~\ref{sec:biblio-etatrick}. 
We refer to Section \ref{sec:quad-var} for an in-depth exposition of these formulations.

Sparsity regularized problems~\eqref{eq:lasso_gen} are notoriously difficult to solve, especially for small $\la$, because $R$ is a non-smooth function. It is the non-smoothness of $R$ which forces the solutions of~\eqref{eq:lasso_gen} to belong to low-dimensional spaces (or more generally manifolds), the canonical example being spaces of sparse vectors when solving a Lasso problem. We refer to~\cite{bach2011optimization} for an overview of sparsity-enforcing regularization methods. 
The core idea of our algorithm is that a simple re-parameterization of~\eqref{eq:etatrick0} combined with a bi-level programming (i.e. solving two nested optimization problems) can turn~\eqref{eq:lasso_gen} into a smooth program which is much better conditioned, and can be tackled using standard but highly efficient optimization techniques such as quasi-Newton (L-BFGS). 
Indeed, by doing the change of variable $(v_g,u_g) \triangleq (\sqrt{\eta_g}, \beta_g/\sqrt{\eta_g})$ in \eqref{eq:etatrick0}, \eqref{eq:lasso_gen} is equivalent to  \rev{
\begin{align}
	\min_{v\in\RR^k}  f(v) 
	\qwhereq
	&f(v) \eqdef   \min_{u\in\RR^n} G(u,v) \label{eq:intro_noncvx}\\
	&G(u,v)\eqdef \frac{1}{2} h(v\odot v) + \frac{1}{2} \norm{u}^2 +\frac{1}{\lambda}  L(X(v\odot_\Gg u), y ). 
	\label{eq:intro_noncvx_G}
\end{align}}
Throughout, we define $\odot$ to be the standard  Hadamard product and for $v\in\RR^k$ and $u\in\RR^n$,  we define $v\odot_\Gg u \in\RR^n$ to be such that $( v\odot_\Gg u )_g = v_g u_g$. Provided that $v\mapsto h(v\odot v)$ is differentiable and $L(\cdot, y)$ is a convex, proper, lower semicontinuous function,  the inner minimisation problem has a unique solution  and $f$ is differentiable. Moreover, in the case of the quadratic loss $L(z,y) \eqdef \frac12 \norm{z-y}_2^2$, the gradient of $f$ can be computed in closed form, by solving a linear system of dimension $m$ or $n$. This paper is thus devoted to study the theoretical and algorithmic implications of this simple twist on the celebrated IRLS approach.


\CompileArxiv{

\paragraph{Comparison with proximal gradient}
To provide some intuition about the proposed approach, Figure~\ref{fig:ISTAvsVarpro}  contrasts the iterations of gradient descent on $f$ and of the iterative soft thresholding algorithm (ISTA) on the Lasso.
We consider a random Gaussian matrix $X\in \RR^{10\times 20}$ with $\lambda = \norm{X^\top y}_\infty/10$.
The ISTA trajectory is non-smooth when some feature crosses 0. In particular, if a coefficient (such as the red one on the figure) is initialized with the wrong sign, it takes many iterations for ISTA to flip sign. In sharp contrast, the gradient flow of $f$ does not exhibit such a singularity and exhibits smooth geometric convergence. We refer to the appendix for an analysis of this phenomenon.

\begin{figure}
\begin{center}
\begin{tabular}{c@{\hspace{1mm}}c@{\hspace{10mm}}c@{\hspace{1mm}}c}
\rotatebox{90}{\hspace{2em}ISTA}  &
\includegraphics[width=0.4\linewidth]{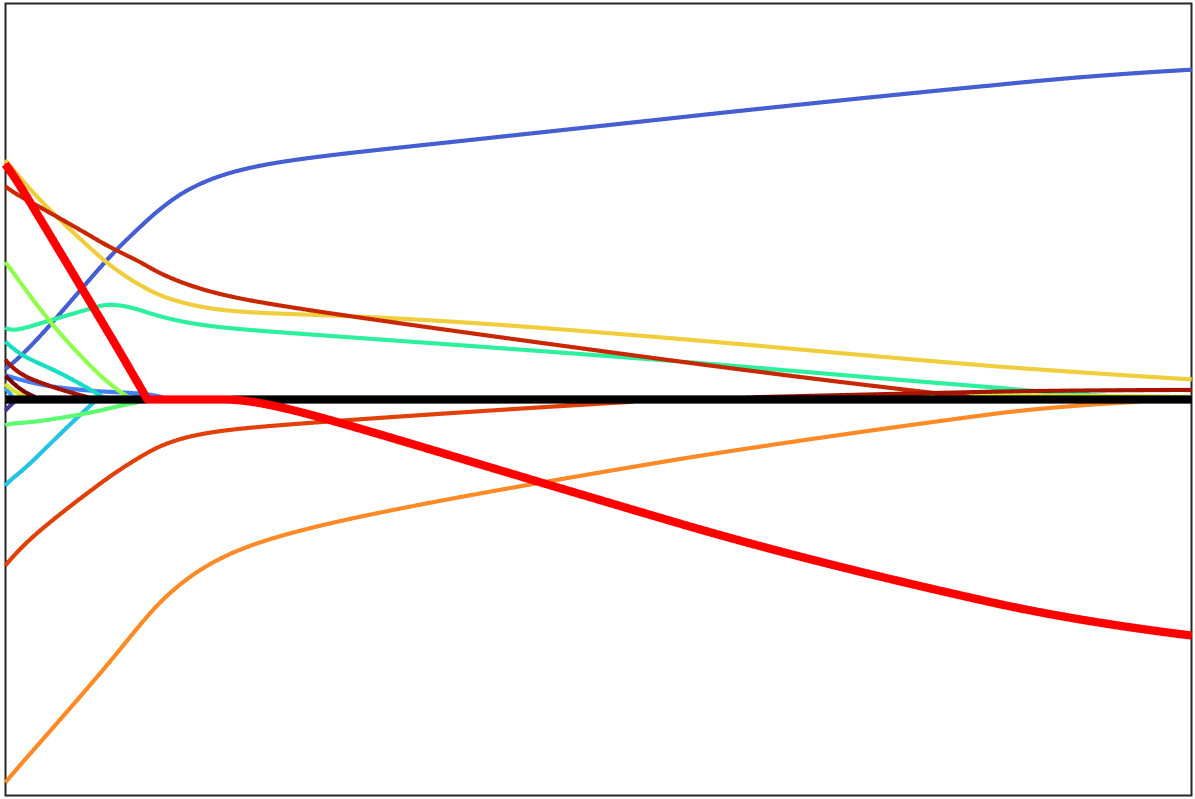} &
\rotatebox{90}{\hspace{2em}Noncvx-Pro} &
\includegraphics[width=0.4\linewidth]{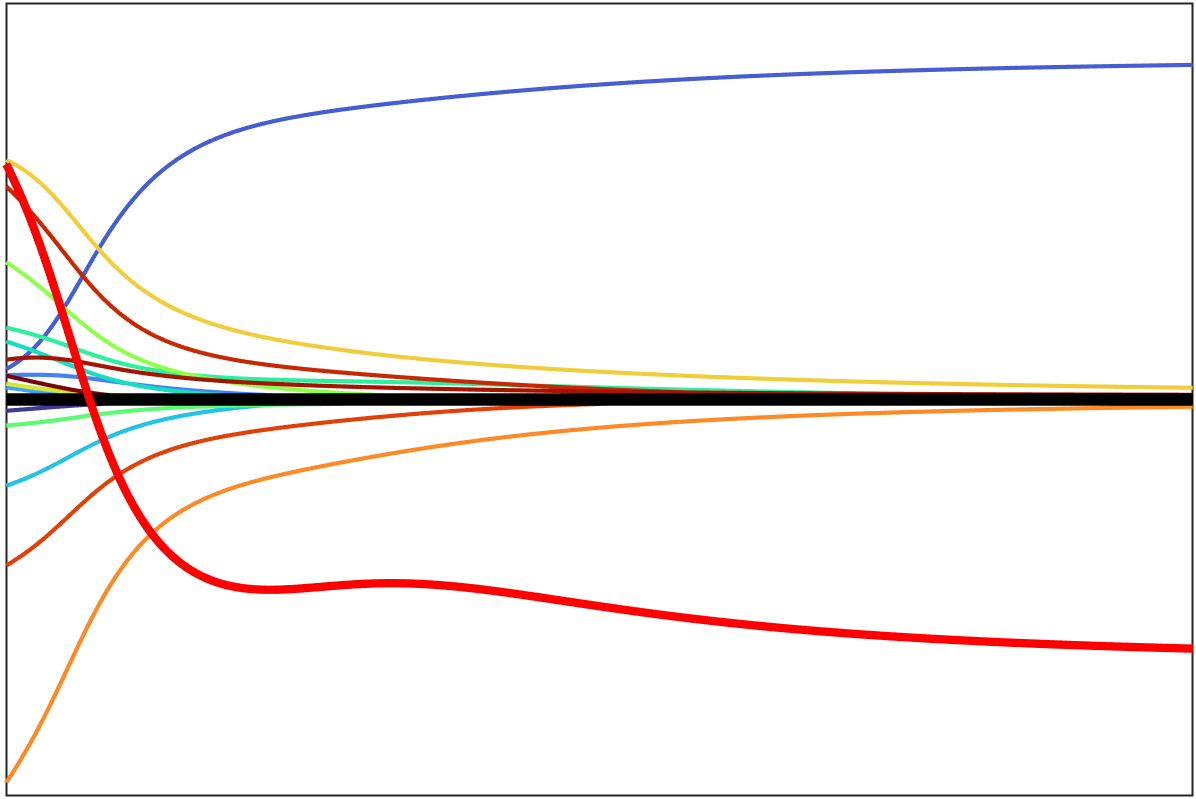}
\end{tabular}
\end{center}
\caption{Evolution of 20 coefficients via ISTA and  gradient descent of $f$. \label{fig:ISTAvsVarpro}}
\end{figure}

}{

\begin{minipage}[b]{0.78\linewidth}
\paragraph{Comparison with proximal gradient}
To provide some intuition about the proposed approach, the figure on the right contrasts the iterations of a gradient descent on $f$ and of the iterative soft thresholding algorithm (ISTA) on the Lasso.
We consider a random Gaussian matrix $X\in \RR^{10\times 20}$ with $\lambda = \norm{X^\top y}_\infty/10$.
The ISTA trajectory is non-smooth when some feature crosses 0. In particular, if a coefficient (such as the red one on the figure) is initialized with the wrong sign, it takes many iteration for ISTA to flip sign. In sharp contrast, the gradient flow of $f$ does not exhibit such a singularity and exhibits a smooth geometric convergence. We refer to the appendix for an analysis of this phenomenon.
\end{minipage} \hfill
\begin{minipage}[t]{0.2\linewidth}
   \centering
   \includegraphics[width=\linewidth]{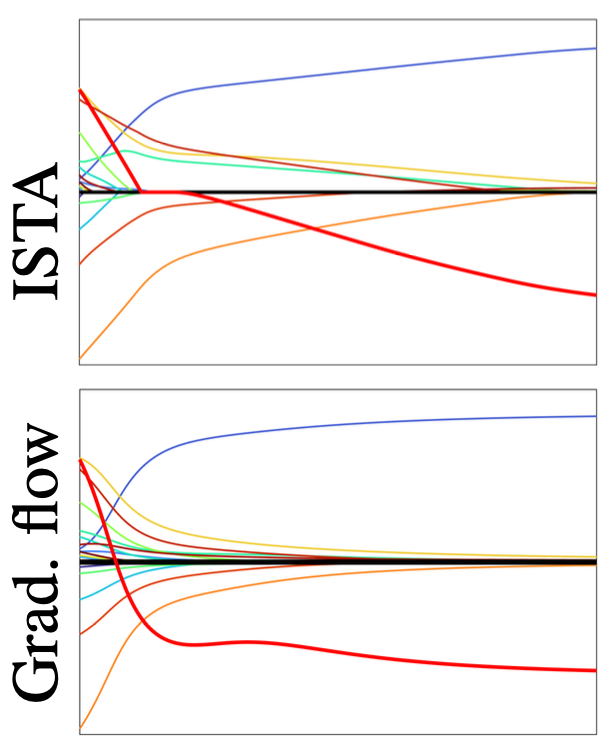}
\end{minipage}

}




\paragraph{Contributions}
Our main contribution is a new versatile algorithm for sparse regularization, which applies standard smooth optimization methods to minimize the function $f$ in~\eqref{eq:intro_noncvx}. 
We first propose in Section~\ref{sec:quad-var} a generic class of regularizers $R$ that enjoy a quadratic variational form. This section recaps existing results under a common umbrella and shows the generality of our approach.
Section~\ref{sec:gradientprops} then gathers the theoretical analysis of the method, and in particular the proof that while being non-convex, the function $f$ has no local minimum and only ``ridable'' saddle points.  As a result, one can guarantee convergence to a global minimum for many optimisation schemes, such as  gradient descent with random perturbations \citep{lee2017first,jin2017escape} or trust region type methods \citep{pascanu2014saddle}. 
Furthermore, for the case of the group Lasso, we show that $f$ is  an infinitely differentiable function with  uniformly bounded Hessian. Consequently, standard solvers such as Newton's method/BFGS can be applied and with a superlinear convergence guarantee. 
Section~\ref{sec:numerics} performs a detailed numerical study of the method and benchmarks it against several popular competing algorithms for Lasso, group-Lasso and trace norm regularization. Our method is consistently amongst the best performers, and is in particular very efficient for small values of $\lambda$, and can even cope with the constrained case $\la=0$.

\subsection{Related works}\label{sec:biblio-etatrick}

\paragraph{State of the art solvers for sparse optimisation}
Popular approaches to nonsmooth sparse optimisation include proximal based methods. The simplest instance is the Forward-Backward algorithm \citep{lions1979splitting,daubechies2004iterative} which handles the case where $L$ is a smooth function. There are many related inertial based acceleration schemes, such as FISTA \citep{beck2009fast} and particularly effective techniques that leads to substantial speedups are the adaptive use of stepsizes and restarting strategies \citep{o2015adaptive}. Other related approaches are
 proximal  quasi-Newton and variable metric methods \citep{combettes2014variable,becker2019quasi}, which incorporate variable metrics into the quadratic term of the proximal operator. 
 Another popular approach, particularly for the Lasso-like problems are coordinate descent schemes \citep{friedman2010regularization}. These schemes are typically combined with support pruning schemes \citep{ghaoui2010safe,ndiaye2017gap,massias2018celer}. 
 In the case where both the regularisation and loss terms are nonsmooth (e.g. in the basis pursuit setting), typical solvers are the primal-dual \citep{chambolle2011first}, ADMM \citep{boyd2011distributed} and Douglas-Rachford algorithms \citep{douglas1956numerical}. Although these schemes are very popular due to their relatively low per iteration complexity, these methods have sublinear convergence rates in general, with  linear convergence under strong convexity. 

\paragraph{The quadratic variational formulation and IRLS}
Quadratic variational formulations such as~\eqref{eq:etatrick0} have been  exploited in many early computer vision works, such as  \cite{geiger1991common}  and \cite{geman1992constrained}.  One of the first theoretical results can be found in \cite{geman1992constrained}, see also \cite{black1996unification} which lists many examples. Our result in Theorem 1 can be seen as a generalisation of these early works. Norm regularizers of this form were introduced in \cite{micchelli2013regularizers}, and further studied in the monograph \cite{bach2011optimization} under the name of subquadratic norms.  
The main algorithmic consequence of the quadratic variational formulation in the literature is the iterative reweighted least squares (IRLS) algorithm. When $L(z,y) = \frac12 \norm{z-y}_2^2$ is the quadratic loss, a natural optimisation strategy is alternating minimising. However, 
due to the $1/\eta_g$ term, one  needs to introduce some regularisation to ensure convergence. 
One popular approach is to add $\frac{\epsilon}{2}\sum_g \eta_g^{-1}$ to the formulation~\eqref{eq:etatrick0} which leads to the IRLS algorithm \citep{daubechies2010iteratively}.
%
The $\epsilon$-term can be seen as a barrier to keep $\eta_g$ positive which is  reminiscent of interior point methods. The idea of IRLS is to do  alternating minimisation over  $\beta$ and $\eta$, where the minimisation with respect to $\beta$ is a least squares problem, and the minimisation with respect to $\eta$ admits a closed form solution. A nuclear norm version of IRLS has been used in \cite{argyriou2008convex} where an alternating minimisation algorithm was introduced. Finally, we remark that although nonconvex formulations for various low complexity regularizers have appeared in the literature, see for instance \cite{rennie2005fast,hastie2015matrix,mardani2015estimating,hoff2017lasso} for the case of the $\ell_1$ and nuclear norms, they are typically associated with alternating minimisation algorithms.



\paragraph{Variable projection/reduced gradient approaches}
IRLS methods are  quite slow because the resulting minimization problem is poorly conditioned. Adding the smoothing term $\frac{\epsilon}{2}\sum_g \eta_g^{-1}$ only partly alleviates this, and also breaks the sparsity enforcing property of the regularizer $R$. 
We avoid both issues in~\eqref{eq:intro_noncvx} by solving a ``reduced'' problem which is much better conditioned and smooth. 
This idea of solving a bi-variate problem by re-casting it as a bilevel program is classical, we refer in particular to~\cite[Chap. 10]{rockafellar2009variational} for some general theoretical results on reduced gradients, and also Danskin's theorem (although this is restricted to the convex setting) in~\cite{bertsekas1997nonlinear}.
Our formulation falls directly into the framework of  variable projection \citep{ruhe1980algorithms,golub2003separable},  introduced initially for solving nonlinear least squares problems. Properties and advantages of variable projection have been studied in \cite{ruhe1980algorithms}, \rev{we refer also to \cite{hong2017revisiting,zach2018descending} for more recent studies}. Nonsmooth variable projection is studied in \cite{van2016non}, although the present work is in the classical setting of variable projection due to our smooth reparametrization. 
Reduced gradients have also been associated with the quadratic variational formulation  in several works \citep{bach2011optimization,pong2010trace,rakotomamonjy2008simplemkl}. The idea is to apply descent methods over $g(\eta) = \min_\beta R_0(\eta,\beta)+\frac12 \norm{X\beta - y}_2^2$. Although the function over $\eta$ and $\beta$ is discontinuous, the function $g$ over $\eta$ is smooth  and   one can apply first order methods, such as proximal gradient descent to minimise $g$ under positivity constraints. While quasi-Newton methods can be applied in this setting with bound constraints, we show in Section \ref{sec:numerics_lasso} that this approach is typically less effective than our  nonconvex bilevel approach.  In the setting of the trace norm, the optimisation problem is constrained on the set of positive semidefinite matrices, so one is  restricted to using first order methods \citep{pong2010trace}.



\section{Quadratic variational formulations} \label{sec:quad-var}

We describe in this section some general results about when a regulariser has a quadratic variational form. 
Our first result brings together  results which are scattered in the literature: it is closely related to  Theorem 1 in \cite{geman1992constrained}, but their  proof was only for strictly concave differentiable functions and did not explicitly connect to convex conjugates, while the setting for norms have been characterized in  the monograph \cite{bach2011optimization} under the name of subquadratic norms.

%
%

\begin{thm}\label{thm:folklore}
Let $R:\RR^n\to \RR$. The following are equivalent: \\
\hbox{}\quad (i) $R(\beta) = \phi(\beta\odot \beta)$ where $\phi$ is proper, concave and upper semi-continuous, with domain $\RR_+^d$. \\
\hbox{}\quad (ii) There exists a convex function $\psi$ for which  $R(\beta) = \inf_{z\in\RR^n_+} \frac12 \sum_{i=1}^n z_i \beta_i^2 + \psi(z)$. \\
Furthermore,  
 $\psi(z) = (-\phi)^*(-z/2)$ is defined via the convex conjugate $(-\phi)^*$ of  $-\phi$, leading to~\eqref{eq:etatrick0} using the change of variable $\eta \leftarrow 1/z$ and $h(\eta) = 2\psi(1/\eta)$.
When $R$ is a norm,   the function $h$ can be written in terms of  the dual norm  $R^*$ as
$h(\eta) = \max_{R^*(w)\leq 1}  \sum_i w_i^2 \eta_i$.
Moreover,
$R(\beta)^2 = \inf_{\eta \in\RR^n_+} \enscond{  \sum_i {\beta_i^2}\eta_i^{-1}}{h(\eta)\leq 1}$.
\end{thm}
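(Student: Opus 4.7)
The plan is to establish the equivalence (i)$\Leftrightarrow$(ii) by Fenchel biconjugation of $-\phi$, and then to extract the norm--specific formulas by combining the resulting variational identity with Cauchy--Schwarz.

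For (i)$\Rightarrow$(ii), I extend $-\phi$ from $\RR_+^n$ to all of $\RR^n$ by setting it to $+\infty$ outside $\RR_+^n$, so that $-\phi$ becomes proper, convex and lower semicontinuous. Fenchel--Moreau then gives
\[
\phi(t)=\inf_{z\in\RR^n}\bigl\{(-\phi)^*(z)-\langle z,t\rangle\bigr\}.
\]
Specialising at $t=\beta\odot\beta$ and substituting $z\mapsto -z/2$ rewrites $R(\beta)$ as $\inf_z\bigl\{\psi(z)+\tfrac12\sum_i z_i\beta_i^2\bigr\}$ with $\psi(z):=(-\phi)^*(-z/2)$, which is convex as a convex conjugate. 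A short check confirms $\dom(\psi)\subset\RR_+^n$: by the definition $\psi(z)=\sup_{t\in\RR_+^n}\{\phi(t)-\tfrac12\langle z,t\rangle\}$, any negative coordinate $z_i<0$ drives the supremum to $+\infty$ as $t_i\to\infty$ (using that $\phi$, being concave on $\RR_+^n$, grows at most affinely along a ray). Hence the infimum in (ii) may be restricted to $z\in\RR_+^n$. The converse (ii)$\Rightarrow$(i) defines $\phi(t):=\inf_{z\in\RR_+^n}\bigl\{\tfrac12\langle z,t\rangle+\psi(z)\bigr\}$, which is concave and upper semicontinuous as an infimum of affine functionals in $t$, and satisfies $R(\beta)=\phi(\beta\odot\beta)$ by construction. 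The change of variables $\eta=1/z$, $h(\eta)=2\psi(1/\eta)$ transforms the expression into exactly \eqref{eq:etatrick0}.

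For the norm statement, first observe that $R(\beta)=\phi(\beta\odot\beta)$ depends only on $|\beta_i|$, so $R$ is an \emph{absolute} norm and the dual ball $\{R^*(w)\le 1\}$ is invariant under coordinate sign flips. Hence
\[
R(\beta)^2=\sup_{R^*(w)\le 1}\langle w,\beta\rangle^2.
\]
Cauchy--Schwarz gives, for any $\eta>0$,
\[
\langle w,\beta\rangle^2\le\Bigl(\sum_i w_i^2\eta_i\Bigr)\Bigl(\sum_i \beta_i^2/\eta_i\Bigr),
\]
so taking the supremum over $R^*(w)\le 1$ yields $R(\beta)^2\le h(\eta)\sum_i\beta_i^2/\eta_i$, where $h(\eta):=\sup_{R^*(w)\le 1}\sum_i w_i^2\eta_i$. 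Specialising to $h(\eta)\le 1$ then gives the upper bound $R(\beta)^2\le\inf\{\sum_i\beta_i^2/\eta_i:h(\eta)\le 1\}$. For the matching lower bound I use $1$-homogeneity of $h$: minimising $\tfrac12\sum_i\beta_i^2/\eta_i+\tfrac12 h(\eta)$ over positive rescalings $\eta\mapsto t\eta$ reduces the objective to $\sqrt{(\sum_i\beta_i^2/\eta_i)\,h(\eta)}$, after which the normalisation $h(\eta)=1$ shows the infimum equals $R(\beta)^2$. Tightness is witnessed explicitly by $\eta_i^\star\propto\beta_i/w_i^\star$ for an optimal dual vector $w^\star$ (whose coordinates share signs with $\beta$ by absoluteness of $R^*$), which makes Cauchy--Schwarz an equality.

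The main obstacles I expect are edge cases: (a) justifying $\dom(\psi)\subset\RR_+^n$ when $\phi$ is only concave rather than strictly concave, which requires a careful recession--direction analysis, and (b) dealing with degenerate coordinates $\beta_i=0$ or $w_i^\star=0$ in the explicit construction of $\eta^\star$, which one handles by restricting to the support of $\beta$ and passing to a limit. Both are technical rather than conceptual, and the core of the argument is the single biconjugation step together with the Cauchy--Schwarz/homogeneity computation.
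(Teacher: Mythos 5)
Your treatment of the equivalence (i)$\Leftrightarrow$(ii) follows the same route as the paper (Fenchel--Moreau biconjugation of $-\phi$ for the forward direction, and concavity/upper semicontinuity of an infimum of affine functions for the converse), and your recession argument for $\mathrm{dom}(\psi)\subset\RR_+^n$ is a sound supplement to what the paper defers to a lemma. The gap is in the norm part. The theorem asserts that the \emph{specific} function $h(\eta)=2\psi(1/\eta)$ coming from the conjugate admits the dual-norm representation $\max_{R^*(w)\leq 1}\sum_i w_i^2\eta_i$; you instead \emph{define} $h$ by that formula and never prove the two coincide. The paper establishes this by computing $\psi(z)=\max_{\beta}\,\{-\sum_i z_i\beta_i^2+R(\beta)\}$, inserting $R(\beta)=\max_{R^*(w)\leq1}\sum_i w_i\beta_i$, swapping the two suprema, and solving the inner quadratic maximisation over $\beta$ via the stationarity condition $2\beta\odot z=w$, which yields $\psi(z)=\max_{R^*(w)\leq1}\tfrac14\sum_i w_i^2/z_i$. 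Without this step the clause ``$h(\eta)=\max_{R^*(w)\leq1}\sum_i w_i^2\eta_i$'' is simply not proved.

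This omission also undermines your lower bound for the identity $R(\beta)^2=\inf\{\sum_i\beta_i^2\eta_i^{-1}:h(\eta)\leq1\}$. Your homogeneity argument reduces the constrained infimum to $\bigl(\inf_\eta \tfrac12\sum_i\beta_i^2/\eta_i+\tfrac12 h(\eta)\bigr)^2$ and then invokes that this inner infimum equals $R(\beta)$ --- but that is exactly the variational form~\eqref{eq:etatrick0} \emph{with the dual-norm $h$}, i.e.\ the unproved identification, so the argument is circular as written. Your alternative ``explicit witness'' $\eta_i^\star\propto\beta_i/w_i^\star$ has a subtler issue than the degenerate coordinates you flag: it makes Cauchy--Schwarz tight for the particular dual vector $w^\star$, but the normalisation requires $h(\eta^\star)=\sup_{R^*(w)\leq1}\sum_i w_i^2\eta_i^\star$ to be attained at $w=w^\star$ itself, which is not automatic and needs an argument. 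The paper avoids both difficulties by closing the sandwich with Lagrangian duality: $\inf\{\sum_i\beta_i^2\eta_i:\psi(\eta)\leq\tfrac14\}=\sup_{\lambda>0}\,\{-\tfrac{\lambda}{4}+\lambda R(\beta/\sqrt{\lambda})\}=\sup_{\lambda>0}\,\{-\tfrac{\lambda}{4}+\sqrt{\lambda}\,R(\beta)\}=R(\beta)^2$, using only one-homogeneity of $R$ and the already-established variational form. Supplying the conjugate computation of $\psi$ for norms would repair both defects at once.
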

%
%
See Appendix \ref{app:quad-var} for the proof to this Theorem.
Some additional properties of $\psi$ are derived in Lemma \ref{lem:prop_h} of Appendix \ref{app:quad-var}, one property is that if $R$ is coercive, then   $\lim_{\norm{z}\to 0}\psi(z) = +\infty$, so the function $f$ is coercive (see also remark \ref{rem:prop_coercive}).


\subsection{Examples}


Let us first give some simple examples of both convex and non-convex norms:

\begin{enumerate}[leftmargin=*, topsep=-3pt, partopsep=0pt, itemsep=0pt]
	\item[--] \textit{Euclidean norms:}  for $R = \norm{\cdot}_2$, making use of $R^*$ as stated in Theorem~\ref{thm:folklore}, one has $h=\norm{\cdot}_\infty$.

	\item[--] \textit{Group norms:} for $\Gg$ is a partition of $\ens{1,\ldots, n}$, the group norm is $R(\beta) = \sum_{g\in\Gg} \norm{\beta_g}$. Using the previous result for the Euclidean norm, one has $h(\eta) = \sum_{g\in\Gg} \norm{(\eta_i)_{i\in g}}_\infty$.
This expression can be further simplified to obtain~\eqref{eq:etatrick0} with a reduced vector $\eta$ in $\RR^{\abs{\Gg}}_+$ in place of $\RR^n$ by noticing that the optimal $\eta$ is constant in each group $g$.

	\item[--] \noindent\textit{$\ell^q$ (quasi) norms:} For $R(\beta) = \abs{\beta}^q$ where $q\in (0,2)$, one has $\phi(u) = {u}^{q/2}$ and one verifies that $h(\eta) = C_q   \eta^{\frac{q}{2-q}}$
where $C_q = (2-q)q^{q/(2-q)}$.
Note that for $q> \frac{2}{3}$, $v\mapsto h(v^2) = v^\gamma$ for $\gamma > 1$  is differentiable. Analysis and numerics for this nonconvex setting can be found in Appendix \ref{app:lq}.
\end{enumerate}

\paragraph{Matrix regularizer}
The extension of Theorem \ref{thm:folklore} to the case where $\be=B$ is a matrix can be found in the appendix.
%
When $R = \phi(BB^\top)$ is a  function on matrices, the analogous quadratic variational formulation is
\begin{equation}\label{eq:etatrick0-mtx}
R(B) = \min_{Z \in\bS^n_+} \min_{B\in\RR^{n\times r}} \frac12 \sum_{g} \tr( B^\top Z^{-1} B )+\frac12 h(Z), 
\end{equation} 
where   $\bS^n_+$ denotes the set of symmetric positive semidefinite matrices and $h(Z) =2 (-\phi)^*(-Z^{-1}/2)$.
Letting $U = Z^{-1/2} B$ and $V = Z^{1/2}$, we have $B = VU$ and the equivalence
\begin{align}
&\min_B R(B) + \frac{1}{2\lambda} \norm{\Aa(B) - y}_2^2 \label{eq:nuclearnorm} \\
&=\min_{V\in\RR^{n\times n}}  f(V) \eqdef  \min_{U\in\RR^{n\times r}}\frac{1}{2} \norm{U}_F^2 +\frac{1}{2} h(V^\top V) + \frac{1}{2\lambda} \norm{\Aa (VU )- y}_2^2.
 \label{eq:noncvx-mtx}
\end{align}
where    $\Aa:\RR^{n\times r} \to \RR^m$ is a linear operator.
Again, provided that $V\mapsto h(V^\top V)$ is differentiable, $f$ is a differentiable function with $\nabla f(V) =  V \partial h(V^\top V) + \frac{1}{\lambda}\Aa^*(\Aa(VU) - y)) U^\top$ and $U$ such that $\lambda U + V^\top \Aa^*(\Aa(VU)-y) = 0$. For the trace norm, $R(B) = \tr(\sqrt{B^\top B})$, we have $h(Z)= \tr(Z)$ and  $\partial h(Z) = \Id$.
Note that, just in the vectorial case, one could write the inner minimisation problem over  the dual variable $\alpha\in\RR^m$ and handle the case of $\lambda =0$.

\section{Theoretical analysis}\label{sec:theory}

Our first result shows the equivalence between  \eqref{eq:lasso_gen} and  a smooth bilevel problem. 

\begin{thm}\label{thm:grad}
Denote $L_y\eqdef L(\cdot,y)$ and let $L_y^*$ denote the convex conjugate of $L_y$. Assume that $L_y$ is a convex, proper, lower semicontinuous function and $R$ takes the form \eqref{eq:etatrick0}.
The problem \eqref{eq:lasso_gen} is equivalent to
\begin{align}
\min_{v\in\RR^k}  f(v) &\eqdef  \min_{u\in\RR^n} \frac{1}{2} h(v\odot v) + \frac{1}{2} \norm{u}^2 +\frac{1}{\lambda}  L_y\pa{X(v\odot_\Gg u) }
. \label{eq:noncvx}\\
&=\max_{\alpha\in\RR^m}   \frac{1}{2} h(v\odot v) -\frac{1}{\lambda} L_y^*\pa{\lambda\alpha } - \frac{1}{2} \norm{v\odot_\Gg(X^\top \alpha)}_2^2 . \label{eq:gennoncvx2}
\end{align}
where $v\odot_\Gg u \eqdef (u_g v_g)_{g\in\Gg}$. The minimiser $\beta$ to \eqref{eq:lasso_gen} and the minimiser $v$ to \eqref{eq:noncvx} are related by
$
\beta = v\odot_\Gg u =  -v^2\odot_\Gg X^\top \alpha.
$
Provided that  $v\mapsto h(v\odot v)$ is differentiable,
the function $f$  is differentiable with gradient
\begin{equation}\label{eq:gengradform2}
\nabla f(v)  = v\odot \partial h(v^2)  - v \odot_\Gg  \pa{\norm{X_g^\top \alpha}^2}_{g}  \qwhereq \alpha \in \argmax_{\tilde\alpha} -L_y^*(\tilde\alpha) - \frac12 \norm{v\odot_\Gg (X^\top \tilde\alpha)}_2^2.
\end{equation}
Note that  $\nabla f$ is uniquely defined even if $\alpha$ is not unique.
If $\lambda>0$ and $L_y$ is differentiable, then we have the additional formula, with $u\in \argmin_{\tilde u} \frac12 \norm{\tilde u}_2^2 +\frac{1}{\lambda} L_y(X(v\odot_\Gg \tilde u))$,
\begin{equation}\label{eq:gengradform1}
\nabla f(v) =  v\odot \partial h(v^2) +\frac{1}{\lambda}\pa{\dotp{u_g}{ X_g^\top \partial L_y(X(v\odot_\Gg u)}}_{g\in\Gg}.
\end{equation}
\end{thm}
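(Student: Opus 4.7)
The plan is to proceed in three stages: (a) derive the primal reduction \eqref{eq:noncvx}; (b) dualize the inner problem to obtain \eqref{eq:gennoncvx2}; (c) differentiate $f$ via an envelope / Danskin argument, matching the two dual expressions to produce \eqref{eq:gengradform2} and \eqref{eq:gengradform1}.

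For (a), I would start from the quadratic variational form \eqref{eq:etatrick0} and perform the change of variables $\eta_g = v_g^2$, $u_g = \beta_g / v_g$ (with $u_g$ arbitrary when $v_g = 0$, which is consistent since then $\beta_g = 0$ is forced through the $\|\beta_g\|^2/\eta_g$ term being finite at the infimum). This substitution replaces $\sum_g \|\beta_g\|^2/\eta_g$ by $\|u\|^2$ and gives $\beta = v\odot_\Gg u$ as the primal relation. Plugging back into \eqref{eq:lasso_gen} yields \eqref{eq:noncvx}. A small care point is that the substitution is a bijection only away from $\{v_g = 0\}$, but this is harmless because the original and transformed infima coincide (this is standard for such ``lifting'' of $\ell_1$-type penalties).

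For (b), fix $v$ and look at the inner minimization $\min_u \frac12 \|u\|^2 + \frac{1}{\lambda} L_y(A_v u)$, where $A_v u \eqdef X(v\odot_\Gg u)$ and $A_v^\top \alpha = v\odot_\Gg(X^\top\alpha)$. The objective is proper convex lsc and coercive in $u$, and $L_y$ is convex proper lsc, so Fenchel--Rockafellar duality applies: writing $\frac{1}{\lambda}L_y(z) = \sup_\alpha \langle \alpha,z\rangle - \frac{1}{\lambda}L_y^*(\lambda \alpha)$, exchanging $\min_u\sup_\alpha$ (justified by strong duality / Sion since the $u$-problem is finite-valued), and performing the unconstrained minimisation $\min_u \tfrac12\|u\|^2 + \langle A_v^\top \alpha, u\rangle = -\tfrac12\|A_v^\top \alpha\|^2$ at $u = -A_v^\top \alpha$, gives exactly \eqref{eq:gennoncvx2} together with the relations $u = -v\odot_\Gg X^\top\alpha$ and hence $\beta = v\odot_\Gg u = -v^2\odot_\Gg X^\top \alpha$.

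For (c), view $f$ as the value function of the concave maximization in \eqref{eq:gennoncvx2}. Setting $G(v,\alpha) \eqdef \tfrac12 h(v\odot v) - \tfrac{1}{\lambda}L_y^*(\lambda\alpha) - \tfrac12 \|v\odot_\Gg X^\top \alpha\|^2$, the partial gradient in $v$ is $\partial_v G(v,\alpha) = v\odot \partial h(v^2) - v\odot_\Gg(\|X_g^\top \alpha\|^2)_g$, and Danskin's theorem (envelope theorem for concave maxima attained on a nonempty compact set of maximizers) gives $\nabla f(v) = \partial_v G(v,\alpha)$ at any maximizer $\alpha$; this yields \eqref{eq:gengradform2}. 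The uniqueness claim follows because even if the optimal dual set $A^\star(v)$ is not a singleton, the value $X^\top \alpha$ restricted to $\mathrm{supp}(v)$ is constant on $A^\star(v)$ (the map $\alpha \mapsto v \odot_\Gg X^\top\alpha$ is injective on $A^\star(v)$ by strong concavity in this direction of the dual objective), so the expression $v\odot_\Gg(\|X_g^\top\alpha\|^2)_g$ is well-defined. For \eqref{eq:gengradform1}, when $\lambda>0$ and $L_y$ is differentiable, the primal inner problem has a unique minimizer $u = u(v)$ and the envelope theorem applied directly to \eqref{eq:noncvx} gives $\nabla f(v) = v \odot \partial h(v^2) + \tfrac{1}{\lambda}\bigl(\langle u_g, X_g^\top \nabla L_y(X(v\odot_\Gg u))\rangle\bigr)_g$ via a straightforward chain rule on $v\mapsto X(v\odot_\Gg u)$ at fixed $u$.

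The main obstacle is (c), specifically the justification that the gradient formula \eqref{eq:gengradform2} is independent of the choice of $\alpha\in A^\star(v)$ and that Danskin's theorem applies without extra smoothness assumptions on $L_y^*$; this reduces to showing that $A^\star(v)$ is nonempty compact and that the $v$-dependent block $v\odot_\Gg X^\top\alpha$ is invariant on $A^\star(v)$, which I would obtain from the strong concavity of the quadratic term in $X^\top\alpha$ (active on the support of $v$) combined with the convexity of $L_y^*$.
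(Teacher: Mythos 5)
Your proposal is correct and follows essentially the same route as the paper: the change of variables $\eta_g = v_g^2$, $u_g=\beta_g/v_g$ for \eqref{eq:noncvx}, Lagrangian/Fenchel duality on the inner problem giving $u=-v\odot_\Gg X^\top\alpha$ and \eqref{eq:gennoncvx2}, and an envelope-type argument (the paper cites Rockafellar--Wets Theorem 10.58 where you cite Danskin) with the same key observation that the quadratic term makes $v\odot_\Gg X^\top\alpha$ unique across dual optimizers, so the gradient formula is well defined.
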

\begin{proof}
The equivalence between \eqref{eq:lasso_gen} and \eqref{eq:noncvx} is simply due to the quadratic variational form of $R$, and the change of variable $v_g = \sqrt{\eta_g}$, and $u_g = \beta_g/\sqrt{\eta_g}$. The equivalence to \eqref{eq:gennoncvx2} follows by convex duality on the inner minimisation problem, that is
\begin{align*}
f(v) &= \min_u G_1(v,u) \eqdef  \frac12 h(v^2) + \frac12 \norm{u}^2 +\frac{1}{\lambda} L_y(X(v\odot_\Gg u))\\
&=\min_{u,z} \frac12 h(v^2) + \frac12 \norm{u}^2 +\frac{1}{\lambda} L_y(z) \qwhereq z = X(v\odot_\Gg u)\\
&= \min_{u,z} \max_\alpha \frac12 h(v^2) + \frac12 \norm{u}^2 + \frac{1}{\lambda } L_y(z) -\dotp{\alpha}{ z }+ \dotp{\alpha}{ X(v\odot_\Gg u)}\\
&= \max_\alpha \min_{u}  \frac12 h(v^2) + \frac12 \norm{u}^2 + \dotp{\alpha}{ X(v\odot_\Gg u)} -\frac{1}{\lambda} L_y^*(\lambda\alpha) .
\end{align*}
Using the optimality condition over $u$, we obtain $u =- v\odot_\Gg X^\top \alpha $ and hence,
$$
f(v) = \max_\alpha G_2(v,\alpha)  \eqdef  \frac12 h(v^2) - \frac12 \norm{  v\odot_\Gg X^\top \alpha }^2_2 -\frac{1}{\lambda} L_y^*(\lambda\alpha).
$$
 By \cite[Theorem 10.58]{rockafellar2009variational}, if the following set 
$$\Ss(v) \eqdef \enscond{ \partial_v G_2(v,\alpha) = v - v\odot_\Gg (\norm{X_g^\top \alpha}^2_2)_{g\in\Gg}}{  \alpha \in \argmin_\alpha G_2(v,\alpha)}
$$
is
singled valued, then $f$ is differentiable with $\nabla f(v) =  \partial_v G_2(v,\alpha)$ for $ \alpha \in \argmin_\alpha G_2(v,\alpha)$.
Observe that even if $\argmin_\alpha G_2(v,\alpha)$ is not  single valued, since $G_2$ is strongly convex for $v\odot_\Gg X^\top \alpha$, $\Ss(v) $ is single-valued and hence, $f$ is a differentiable function. 

In the case where $L_y$ is differentiable, we can again apply \cite[Theorem 10.58]{rockafellar2009variational}, to obtain $\nabla f(v) = v\odot \partial h(v^2) + \partial_v G_1(v,y)$ with $u= \argmin_u G_1(v,u)$ (noting that $G_1(v,\cdot)$ is strongly convex and has a unique minimiser) which is precisely the gradient formula \eqref{eq:gengradform1}.

\end{proof}

%
For the Lasso and basis pursuit setting, the gradient of $f$ can be computed in closed form: 

\begin{cor}\label{cor:lasso}
If $R$ has a quadratic variational form and $L_y(z) = \frac{1}{2}\norm{z-y}_2^2$, then $\partial L_y(z) = z-y$, $L_y^*(\alpha) =\dotp{y}{\alpha}+ \frac{1}{2}\norm{\alpha}_2^2$
and the gradient of $f$ can be written as in \eqref{eq:gengradform2} and  additionally \eqref{eq:gengradform1} when $\lambda>0$. 
Furthermore, $\alpha\in\RR^m$  in~\eqref{eq:gengradform2} and $u\in\RR^n$ in \eqref{eq:gengradform1}  solves
\begin{equation}\label{eq:gradf}
	(   X \diag(\bar v^2) X^\top + \la \Id ) \alpha = - y, \qandq
	( \diag(\bar v) X^\top X \diag(\bar v)  +\lambda\Id) u = v\odot_\Gg ( X^\top y), 
\end{equation}
where $\bar v\in\RR^n$ is defined as $\bar v \odot u = (v_g u_g)_{g\in\Gg}$ for all $u\in\RR^n$ and $\Id$ denotes the identity matrix.

%
 
\end{cor}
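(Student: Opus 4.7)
The proof is essentially a direct specialization of Theorem~\ref{thm:grad} to the quadratic loss, so the plan is to verify the three bullet points in the statement in order and then derive the two linear systems as the first-order optimality conditions of the strictly convex quadratic programs that define $\alpha$ and $u$.

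First, I would dispose of the convex-analytic identities. Since $L_y(z)=\tfrac12\|z-y\|_2^2$ is a smooth strongly convex quadratic, $\partial L_y(z)=\nabla L_y(z)=z-y$ is immediate. For $L_y^*$, I would evaluate the supremum defining the conjugate by setting the gradient of $z\mapsto \langle\alpha,z\rangle-\tfrac12\|z-y\|_2^2$ to zero, obtaining the maximiser $z=\alpha+y$ and hence $L_y^*(\alpha)=\langle y,\alpha\rangle+\tfrac12\|\alpha\|_2^2$. Since $L_y$ is convex, proper, l.s.c.\ and differentiable, Theorem~\ref{thm:grad} applies directly, giving~\eqref{eq:gengradform2} in general and \eqref{eq:gengradform1} when $\lambda>0$.

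Next, I would derive the linear system for $\alpha$. Plugging the conjugate into the dual formulation~\eqref{eq:gennoncvx2} shows that, up to the $v$-dependent constant $\tfrac12 h(v\odot v)$, the $\alpha$-maximisation reduces to
\begin{equation*}
\max_{\alpha\in\RR^m} \; -\langle y,\alpha\rangle - \tfrac{\lambda}{2}\|\alpha\|_2^2 - \tfrac12\|v\odot_\Gg(X^\top\alpha)\|_2^2.
\end{equation*}
Observing that $\|v\odot_\Gg(X^\top\alpha)\|_2^2 = \|\bar v\odot(X^\top\alpha)\|_2^2 = \alpha^\top X\diag(\bar v^2)X^\top\alpha$, the first-order condition in $\alpha$ is exactly $(X\diag(\bar v^2)X^\top+\lambda\Id)\alpha = -y$. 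Strict convexity (when $\lambda>0$, or more generally by positive semi-definiteness plus the linear term) gives the unique solution.

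For the $u$-system, I would start from the inner problem in~\eqref{eq:noncvx}: writing $X(v\odot_\Gg u) = X\diag(\bar v)u$, the map $u\mapsto \tfrac12\|u\|_2^2+\tfrac{1}{2\lambda}\|X\diag(\bar v)u-y\|_2^2$ is a strongly convex quadratic whose gradient vanishes at $\lambda u+\diag(\bar v)X^\top X\diag(\bar v)u = \diag(\bar v)X^\top y$; this is the second claimed linear system once one recognises $\diag(\bar v)X^\top y = \bar v\odot(X^\top y) = v\odot_\Gg(X^\top y)$. No obstacle is expected beyond careful bookkeeping of the block Hadamard product and the $\bar v$ notation; everything else is mechanical.
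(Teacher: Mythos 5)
Your proposal is correct and matches the paper's (implicit) argument: the paper states this corollary without a separate proof, treating it as an immediate specialization of Theorem~\ref{thm:grad}, and your computation of $L_y^*$, the substitution into \eqref{eq:gennoncvx2}, and the first-order conditions for the $\alpha$- and $u$-problems are exactly the intended verification. The only point worth a word of care is the $\lambda=0$ case of the $\alpha$-system, where uniqueness can fail; but as the theorem already notes, $\nabla f$ remains well defined, so this does not affect the claim.
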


This shows that our method caters for the case $\lambda = 0$ with the  same algorithm in a seamless manner. This is unlike most existing approach which work well for $\lambda>0$ (and typically do not require matrix inversion) but fails when $\lambda$ is small, whereas solvers dedicated for $\lambda=0$ might require inverting a linear system, see Section~\ref{sec:ot} for an illustrative example.

\subsection{Properties of the projected function $f$}\label{sec:gradientprops}
In this section, we analyse the case of the group Lasso.
The following theorem ensures that the projected function $f$ has only strict saddle points or global minima.  
We say that $v$ is a second order stationary point if $\nabla f(v) = 0$ and $\nabla^2 f(v) \succeq 0$. We say that $v$ is a strict saddle point (often called ``ridable'') if it is a stationary point but not a second order stationary point.
One can thus always find a direction of descent outside the set of global minimum. This can be exploited to derive convergence guarantees to second order stationary points for trust region methods \citep{pascanu2014saddle} and gradient descent methods~\citep{lee2017first,jin2017escape}.




%

%

\begin{thm}\label{prop:eigenvalues}
In the case $h(z) = \sum_i z_i$ and  $L(z,y) = \frac{1}{2} \norm{z-y}_2^2$, the projected function
 $f$ is infinitely continuously differentiable and for $v\in\RR^k$,  $\nabla f(v) = v\odot  \pa{1 -  \abs{\xi}^2}$ where 
$\xi_g = \frac{1}{\lambda} X_g^\top (X (u\odot  v) - y)$ and $u$ solves the inner least squares problem for $v$.
Let $J$ denote the support of $v$, by rearranging the columns and rows, the Hessian of $f$ can be written as the following block diagonal matrix
\begin{equation}\label{eq:hessian}
\nabla^2 f(v) =
 \begin{pmatrix}  \diag(1-\norm{\xi_g}_2^2)_{g\in J} +4 U^\top  W U
 & 0\\
 0 &  \diag(1-\norm{\xi_g}_2^2)_{g\in J^c} 
\end{pmatrix}
\end{equation}
where $W\eqdef  \Id - \lambda \pa{(v_g X_g^\top X_h v_h)_{g,h\in J} + \lambda\Id_J}^{-1}  $ and $U$ is the block diagonal matrix with blocks $(\xi_g)_{g\in J}$, with  $\max_{g\in\Gg}\norm{\xi_g}_2 \leq C$ and $\rev{\norm{\nabla^2 f(v)}} \leq 1+ 3C^2$ where $C\eqdef {\norm{y}_2} \max_{g\in\Gg} \norm{X_g}/\lambda$.
Moreover, 
all stationary points of $f$ are either global minima or strict saddles. At stationary points, the  eigenvalues of the Hessian  of $f$  are at most $4$ and is at least
$$
\min\pa{4(1-\lambda/(\lambda+\hat\sigma)),  \min_{g\not\in J}(1-\norm{\xi_g}^2)}
$$
where $\hat \sigma$ is the smallest eigenvalue of $(v_g X_g^\top X_h v_h)_{g,h\in J}$.
\end{thm}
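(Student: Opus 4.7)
First, I would specialise the bilevel formulation from Theorem~\ref{thm:grad} and Corollary~\ref{cor:lasso} to $h(z) = \sum_i z_i$ and $L_y = \tfrac12\|\cdot - y\|_2^2$. Then $\partial h \equiv \mathbf{1}$ and the dual variable $\alpha(v) = -(X\diag(\bar v^2)X^\top + \lambda \Id)^{-1} y$ is $C^\infty$ in $v$, since $M(v) \eqdef X\diag(\bar v^2)X^\top + \lambda\Id \succeq \lambda\Id$ is uniformly invertible and the map $M\mapsto M^{-1}$ is smooth on invertibles. Plugging into~\eqref{eq:gengradform2} yields
\[
\nabla f(v) = v\odot (\mathbf{1} - (\|\xi_g\|_2^2)_{g\in\Gg}), \qquad \xi_g \eqdef X_g^\top \alpha(v),
\]
and using $X(v\odot_\Gg u) = y + \lambda\alpha$ (a consequence of Theorem~\ref{thm:grad}) recovers the stated formula $\xi_g = \lambda^{-1}X_g^\top(X(u\odot v) - y)$.

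For the Hessian I would differentiate $M(v)\alpha(v) = -y$ implicitly to get $\partial_{v_g}\alpha = -2 v_g M^{-1}X_g X_g^\top \alpha$, and apply the product rule to $v_h(1 - \|\xi_h\|_2^2)$, obtaining
\[
(\nabla^2 f(v))_{gh} = \delta_{gh}(1 - \|\xi_h\|_2^2) + 4 v_g v_h\, \xi_g^\top X_g^\top M^{-1} X_h \xi_h.
\]
The off-diagonal contribution contains the factor $v_g v_h$, which vanishes whenever $g$ or $h$ lies outside $J$; this gives the block structure in the statement. On the $J$-block, I would recognise $M = \tilde X_J \tilde X_J^\top + \lambda\Id$ where $\tilde X_J$ is the matrix whose $g$-th column block is $v_g X_g$, and invoke the Woodbury identity
\[
\tilde X_J^\top M^{-1} \tilde X_J = \Id_J - \lambda(\tilde X_J^\top \tilde X_J + \lambda \Id_J)^{-1} = W,
\]
so that the $J$-block reads exactly $\diag(1-\|\xi_g\|_2^2)_{g\in J} + 4 U^\top W U$.

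For the uniform bounds, $M \succeq \lambda\Id$ gives $\|\alpha\|_2 \leq \|y\|_2/\lambda$, hence $\|\xi_g\|_2 \leq \|X_g\|\|y\|_2/\lambda \leq C$. Noting that $U^\top U = \diag(\|\xi_g\|_2^2)_{g\in J}$, I would rewrite
\[
\diag(1 - \|\xi_g\|_2^2) + 4 U^\top W U = \diag(1 + 3\|\xi_g\|_2^2) - 4\lambda U^\top(\tilde X_J^\top \tilde X_J + \lambda \Id_J)^{-1} U,
\]
the second term being positive semidefinite. This immediately gives the $J$-block upper bound $(1+3C^2)\Id_J$, and combined with the trivial eigenvalue bounds $1 - \|\xi_g\|_2^2 \in [1-C^2, 1]$ on the $J^c$-block, yields $\|\nabla^2 f(v)\|_{\mathrm{op}} \leq 1 + 3C^2$.

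Finally, I would classify stationary points. Setting $\nabla f(v) = 0$ forces $\|\xi_g\|_2 = 1$ for every $g \in J$, so the $J$-block collapses to $4(\Id_J - \lambda U^\top(\tilde X_J^\top \tilde X_J + \lambda\Id_J)^{-1} U)$; since $U^\top U = \Id_J$ its eigenvalues lie in $[4(1 - \lambda/(\lambda + \hat\sigma)),\,4]$. The $J^c$-block is $\diag(1 - \|\xi_g\|_2^2)_{g\notin J}$. If all $\|\xi_g\|_2 \leq 1$ for $g\notin J$, then combined with $\|\xi_g\|_2 = 1$ for $g\in J$ one obtains precisely the group-Lasso KKT/dual-feasibility condition via the equivalence in Theorem~\ref{thm:grad}, so $\beta = v\odot_\Gg u$ is a global minimiser of $(\Pp_\lambda)$ and hence $v$ is a global minimiser of $f$; otherwise some $1 - \|\xi_g\|_2^2 < 0$ appears as an eigenvalue of the $J^c$-block, so $\nabla^2 f(v) \not\succeq 0$ and $v$ is a strict saddle. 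The main obstacle I anticipate is carrying out the Woodbury identification cleanly (including handling the rescaling from $X_J$ to $\tilde X_J$ so that the same matrix $W$ emerges in the two formulas), after which the eigenvalue bookkeeping and the KKT matching are routine.
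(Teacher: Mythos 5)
Your proposal is correct and reaches every claim in the statement, but it takes a genuinely different route from the paper. The paper works entirely on the primal side: it sets $G(u,v)=\frac12\norm{u}^2+\frac12\norm{v}^2+\frac{1}{2\lambda}\norm{X(v\odot_\Gg u)-y}^2$, invokes the implicit function theorem on $\partial_u G=0$ to get smoothness of $u(v)$, and obtains $\nabla^2 f$ as the Schur complement $A-BD^{-1}B^\top$ of $\nabla^2 G$; the key algebraic step there is the identity $\xi_g=-u_g/v_g$ on the support, which lets the cross-block $B$ be written as $U_{u/v}^\top H-2U_{u/v}^\top$ with $H=\lambda^{-1}(v_gX_g^\top X_hv_h)_{g,h\in J}+\Id_J$, after which $W=\Id-H^{-1}$ emerges from expanding $BD^{-1}B^\top$. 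You instead differentiate the closed-form dual solution $\alpha(v)=-M(v)^{-1}y$ directly and identify $W$ via the push-through/Woodbury identity $\tilde X_J^\top M^{-1}\tilde X_J=\Id_J-\lambda(\tilde X_J^\top\tilde X_J+\lambda\Id_J)^{-1}$; both computations land on the identical block matrix \eqref{eq:hessian}. Your dual route buys a cleaner uniform bound ($M\succeq\lambda\Id$ gives $\norm{\alpha}\le\norm{y}/\lambda$ immediately, where the paper argues via the optimal value of the inner problem), and avoids the primal bookkeeping with $U_{u/v}$; the paper's Schur-complement view buys the connection to variable projection and makes the sign relation between $\nabla^2 G\succeq 0$ and $\nabla^2 f\succeq 0$ transparent. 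The endgame — $\norm{\xi_g}=1$ on $J$ at stationary points, $U^\top U=\Id_J$ collapsing the $J$-block to $4(\Id_J-\lambda U^\top(\tilde X_J^\top\tilde X_J+\lambda\Id_J)^{-1}U)$, and the dichotomy between the group-Lasso KKT conditions and a negative diagonal entry in the $J^c$-block — is the same in both. One small point worth making explicit in your writeup: for the operator-norm bound $1+3C^2$ you state only the upper eigenvalue bound on the $J$-block; you should also record that $4\lambda U^\top(\tilde X_J^\top\tilde X_J+\lambda\Id_J)^{-1}U\preceq 4U^\top U$ gives the matching lower bound $\diag(1-\norm{\xi_g}^2)\succeq(1-C^2)\Id_J$, so that both extremes of the spectrum are dominated by $1+3C^2$ (the paper is no more careful on this point).
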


The proof can be found in Appendix \ref{app:theory}. We simply mention here that by examining the first order condition of  \eqref{eq:lasso_gen}, we see that $\beta$ is a minimizer if and only if $\xi$ satisfies $-\xi_g = \frac{\beta_g}{\norm{\beta_g}_2}$ for all $g\in\Supp(\beta)$ and $ \norm{\xi_g}_2 \leq 1$ for all $g\in\Gg$. The first condition on the support of $\beta$ is always satisfied at stationary points of the nonconvex function \eqref{eq:intro_noncvx}, and by examining \eqref{eq:hessian}, the second condition is also satisfied unless the stationary point is strict.

\begin{rem}[Example of strict saddle point for our $f$]
One can observe that $v = 0$ is a strict saddle point, as the solution to the associated linear system yields $u=0$ and hence $\nabla f(v) = 0$. If $\lambda\geq  \norm{X^\top y}_\infty$, then $u = v=0$ corresponds to a global minimum, otherwise, it is clear to see that there exists $g$ such that $1-\norm{\xi_g}^2<0$ and $v=0$ is a strict saddle point.
\end{rem}

\begin{rem}\label{rem:prop_coercive}
Since $f\in C^\infty$, it is Lipschitz smooth on any bounded domain. As mentioned,  $f$ is  coercive when $R$ is coercive, and hence, its sublevel sets are bounded. So, for any descent algorithm, we can apply results based on $\nabla^k f$ being Lipschitz smooth for all $k$.
\end{rem}

\begin{rem}
The nondegeneracy condition that $\norm{\xi_g}_2<1$ outside the support of $v$ and invertibility of  $(X_g^\top X_h)_{g,h\in J}$  is often used to derive consistency results \citep{bach2008consistency,zhao2006model}. By Proposition 1, we see that this condition guarantees that the Hessian of $f$ is positive definite at the minimum, and hence, combining with the smoothness properties of $f$ explained in the previous remark, BFGS is guaranteed to converge superlinearly for starting points sufficiently close to the optimum \cite[Theorem 6.6]{nocedal2006numerical}.
\end{rem}


\section{Numerical experiments}\label{sec:numerics}

\newcommand{\myparagraph}[1]{\paragraph{#1.}}

In this section, we use L-BFGS \citep{byrd1995limited} to optimise our bilevel function $f$ and we denote the resulting algorithm ``Noncvx-Pro''.  \rev{Throughout, the inner problem is solved exactly using either a full or a sparse Cholesky solver.}
One observation from our numerics below is that although Noncvx-Pro is not always the best performing,  unlike other solvers, it is robust to a wide range of settings: for example, our solver is mostly unaffected by the choice of $\lambda$ while one can observe in Figures \ref{fig:libsvm-main} and \ref{fig:eeg} that this has a large impact on the proximal based methods and coordinate descent. 
Moreover, Noncvx-Pro is  simple to code and rely on existing robust numerical routines (Cholesky/ conjugate gradient + BFGS) which naturally handle sparse/implicit operators,  and we thus inherit their  nice convergence properties.
All numerics are conducted on 2.4 GHz Quad-Core Intel Core i5 processor with 16GB RAM. 
The code to reproduce the results of this article is available online\footnote{\url{https://github.com/gpeyre/2021-NonCvxPro}}.

\subsection{Lasso}\label{sec:numerics_lasso}

We first consider the Lasso problem where $R(\beta) = \sum_{i=1}^n \abs{\beta_i}$.

\myparagraph{Datasets} We tested on 8 datasets from the Libsvm repository\footnote{\url{https://www.csie.ntu.edu.tw/~cjlin/libsvmtools/datasets/}}.
These datasets are mean subtracted and normalised by $m$. 
%
%
%

\myparagraph{Solvers}
We compare against the following 10 methods: 
\begin{enumerate}[leftmargin=*, topsep=-3pt, partopsep=0pt, itemsep=0pt]
\item \textit{0-mem SR1:} a proximal quasi newton method~\citep{becker2019quasi}. 
\item \textit{FISTA w/ BB:} FISTA with  Barzilai–Borwein  stepsize \citep{barzilai1988two} and restarts \citep{o2015adaptive}.
\item \textit{SPG/SpaRSA:} spectral projected gradient~\citep{wright2009sparse}.
\item \textit{CGIST:} an active set method with conjugate gradient iterative shrinkage/thresholding~\citep{goldstein2010high}.
\item \textit{Interior point method:} from \cite{koh2007interior}.
\item \textit{CELER:}  a coordinate descent method  with support pruning~\citep{massias2018celer}.  
\item \textit{Non-cvx-Alternating-min:} \rev{alternating minimisation  of $u$ and $v$ in $G$ from \eqref{eq:intro_noncvx_G} ~\citep{hoff2017lasso}}.
\item \textit{Non-cvx-LBFGS:}  \rev{Apply L-BFGS to minimise the function $(u,v)\mapsto G(u,v)$ in  \eqref{eq:intro_noncvx_G}.}
\item \textit{L-BFGS-B} \citep{byrd1995limited}: apply L-BFGS-B under positivity constraints to
    $ \min_{u,v\in \RR^n_+}  \sum_i u_i + \sum_i v_i +\frac{1}{2\lambda} \norm{X (u-v) - y}_2^2$. \rev{This is the standard approach for applying L-BFGS to $\ell_1$ minimisation and corresponds to splitting $\beta$ into its positive and negative parts.}
\item \textit{Quad-variational:} Based on our idea of Noncvx-Pro, another natural (and to our knowledge novel) approach is to  apply  L-BFGS-B  to the bilevel formulation of  \eqref{eq:etatrick0} without  nonconvex reparametrization. Indeed,  by  applying \eqref{eq:etatrick0}  and using convex duality, the Lasso can solved by minimizing 
$g(\eta) \eqdef \max_{\alpha\in\RR^m} \frac{1}{2} \sum_i \eta_i  -\frac{\lambda}{2} \norm{\alpha}^2 - \frac{1}{2} \sum_i \eta_i \abs{\dotp{x_i}{\alpha}}^2 + \dotp{\alpha}{y}$.
The gradient of $g$ is $g(\eta) = \frac{\lambda}{2} - \frac{1}{\lambda} \abs{X^\top \alpha}^2$ where $\abs{\cdot}^2$  is in a pointwise sense and $\alpha$ maximises the inner problem, and we apply L-BFGS-B with positivity constraints to minimise $g$. 
\end{enumerate}

%


\myparagraph{Experiments} 
The results are shown in  Figure \ref{fig:libsvm-main} (with further experiments in the appendix).  
We show comparisons at different regularisation strengths, with $\lambda_*$ being the regularisation parameter found by 10 fold cross validation on the mean squared error, and $\lambda_{\max}= \norm{X^\top y}_\infty$ is the smallest parameter at which the Lasso solution is guaranteed to be trivial.


\begin{figure}
\begin{center}
\begin{tabular}{cccccc}
\rotatebox{90}{\hspace{1.5em}\footnotesize leukemia}
\rotatebox{90}{\hspace{1em} \footnotesize(38,7129)}&
\includegraphics[width=0.2\linewidth]{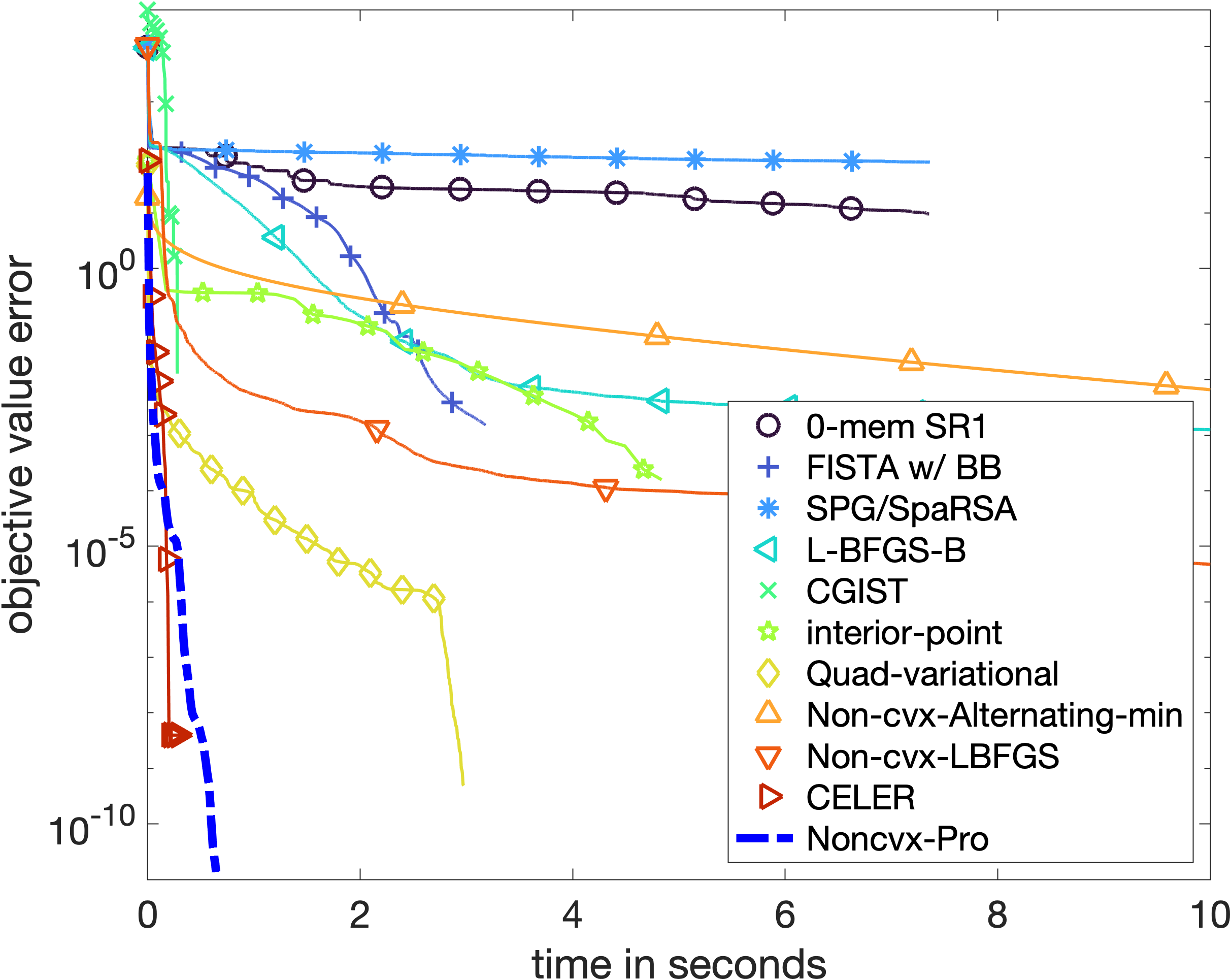}
&
\includegraphics[width=0.2\linewidth]{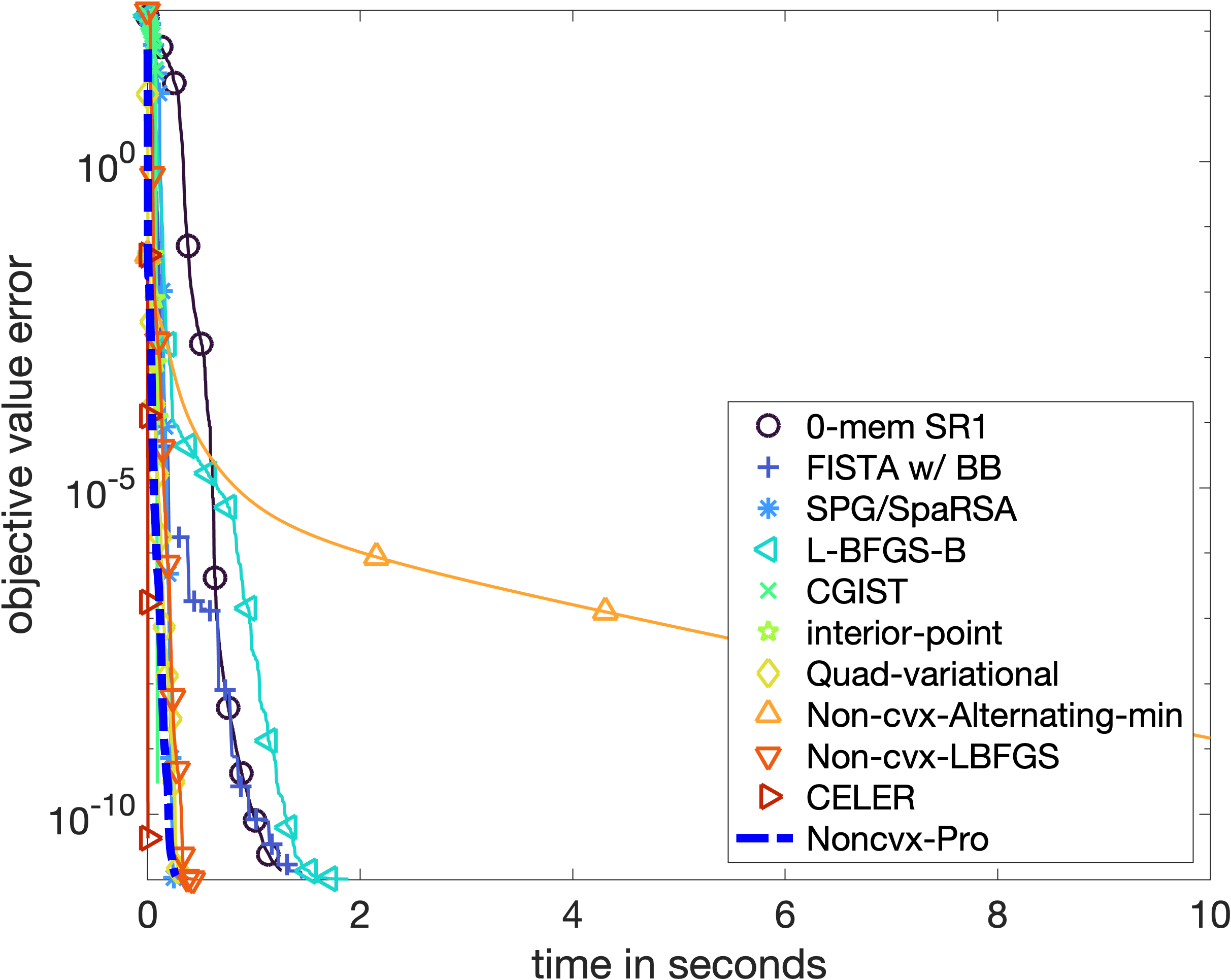}&
\includegraphics[width=0.2\linewidth]{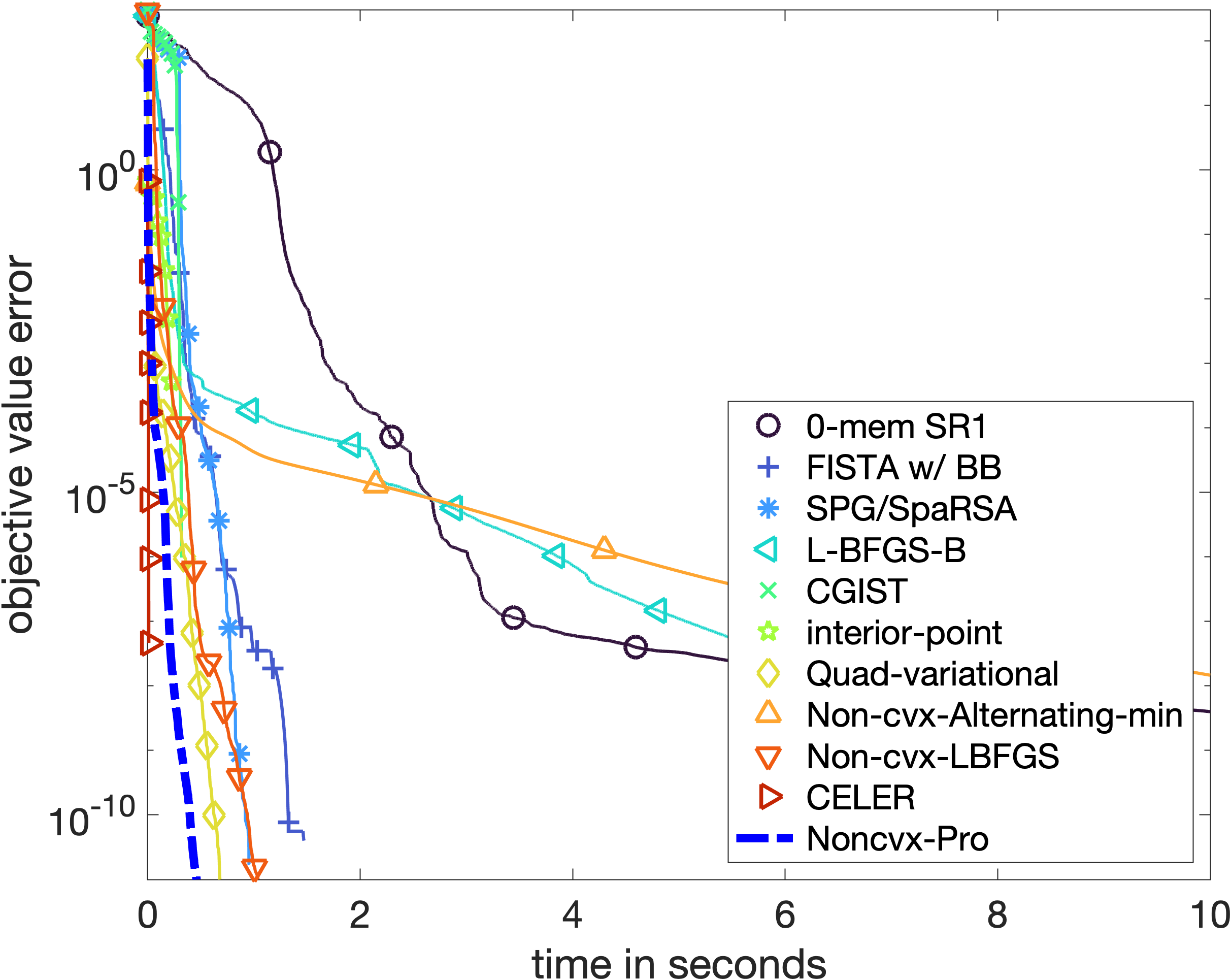}&
\includegraphics[width=0.2\linewidth]{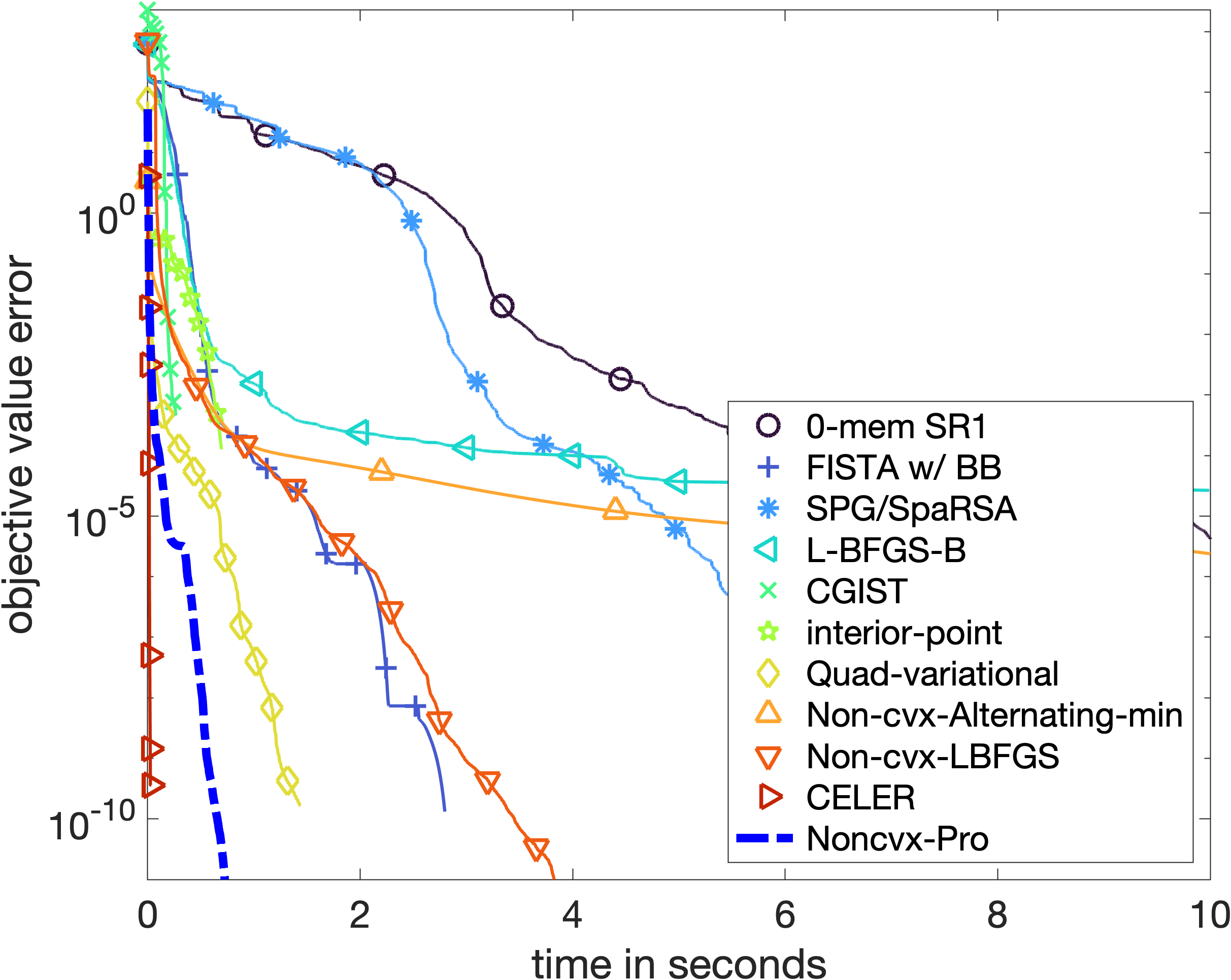}
\\
\rotatebox{90}{\hspace{2em}\footnotesize mnist}
\rotatebox{90}{\hspace{1em} \footnotesize(60000,683)}&\includegraphics[width=0.2\linewidth]{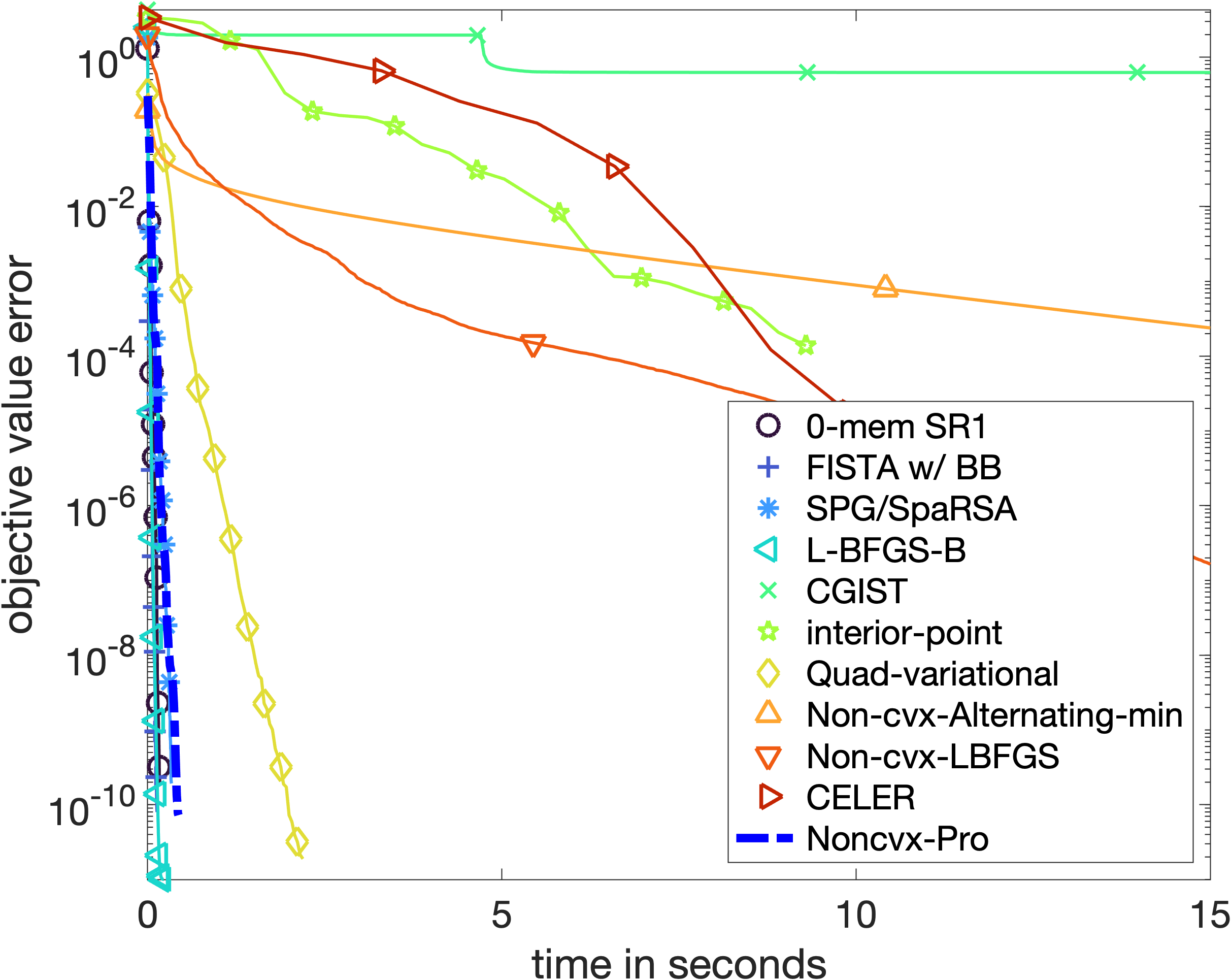} &
\includegraphics[width=0.2\linewidth]{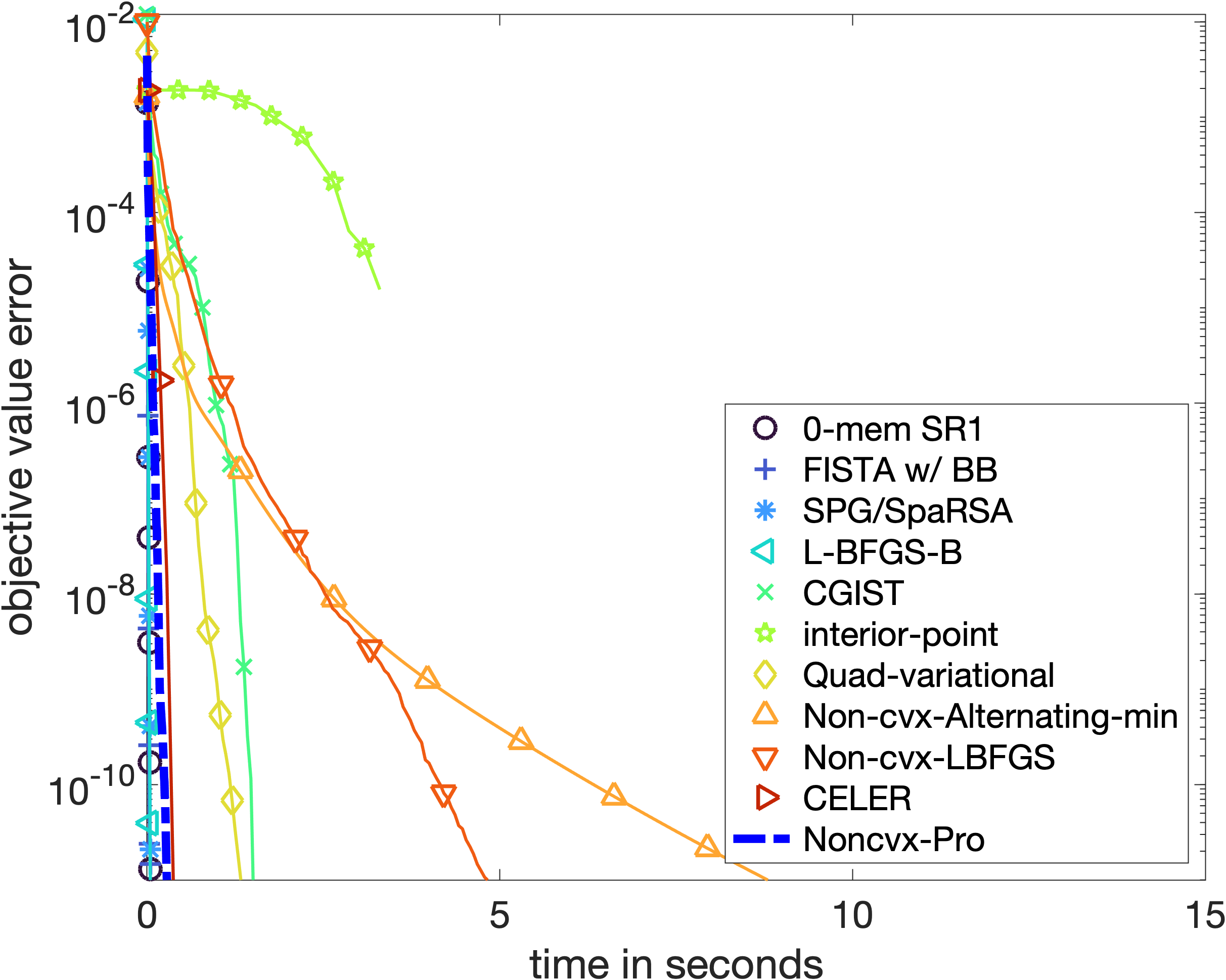}&
\includegraphics[width=0.2\linewidth]{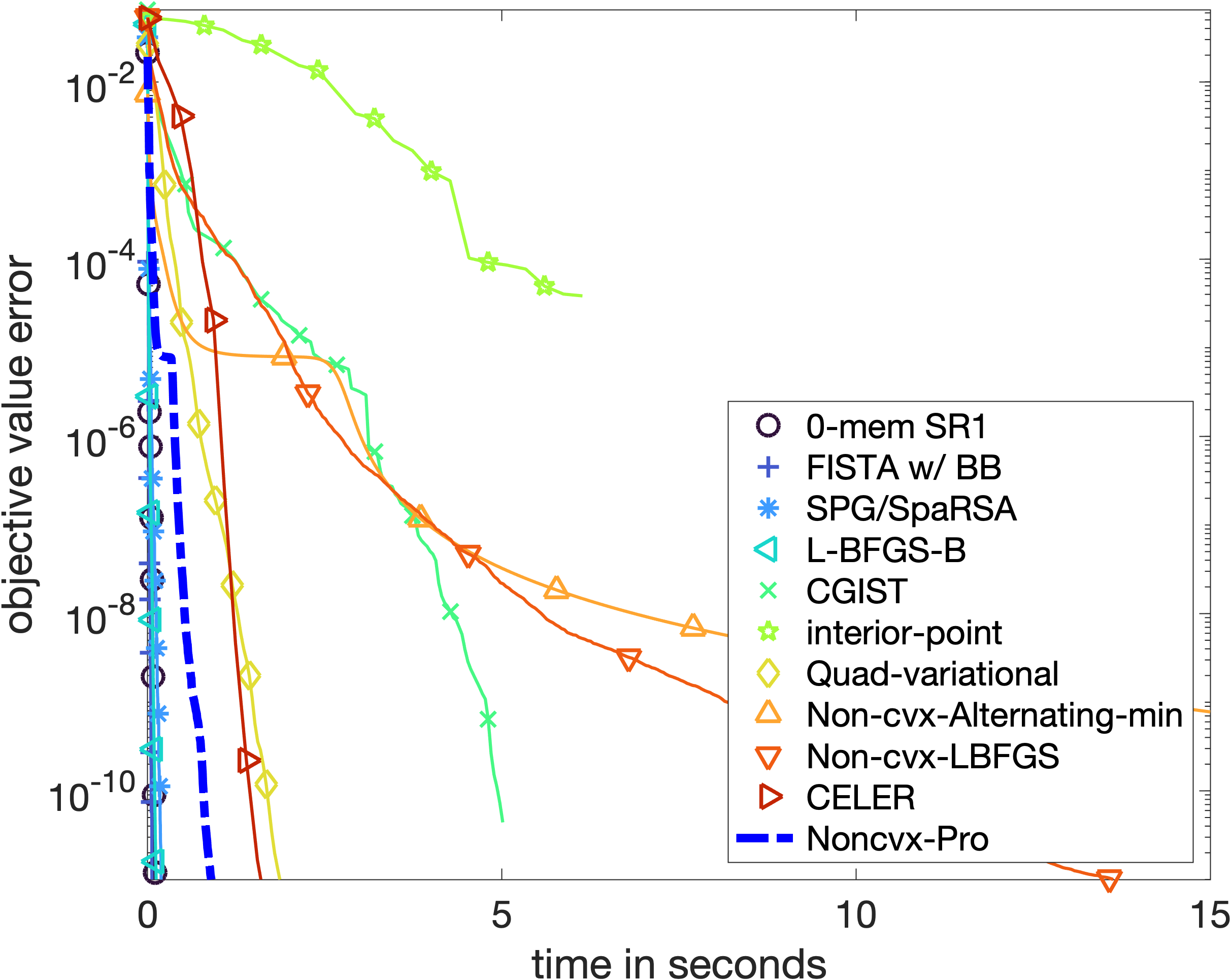}&
\includegraphics[width=0.2\linewidth]{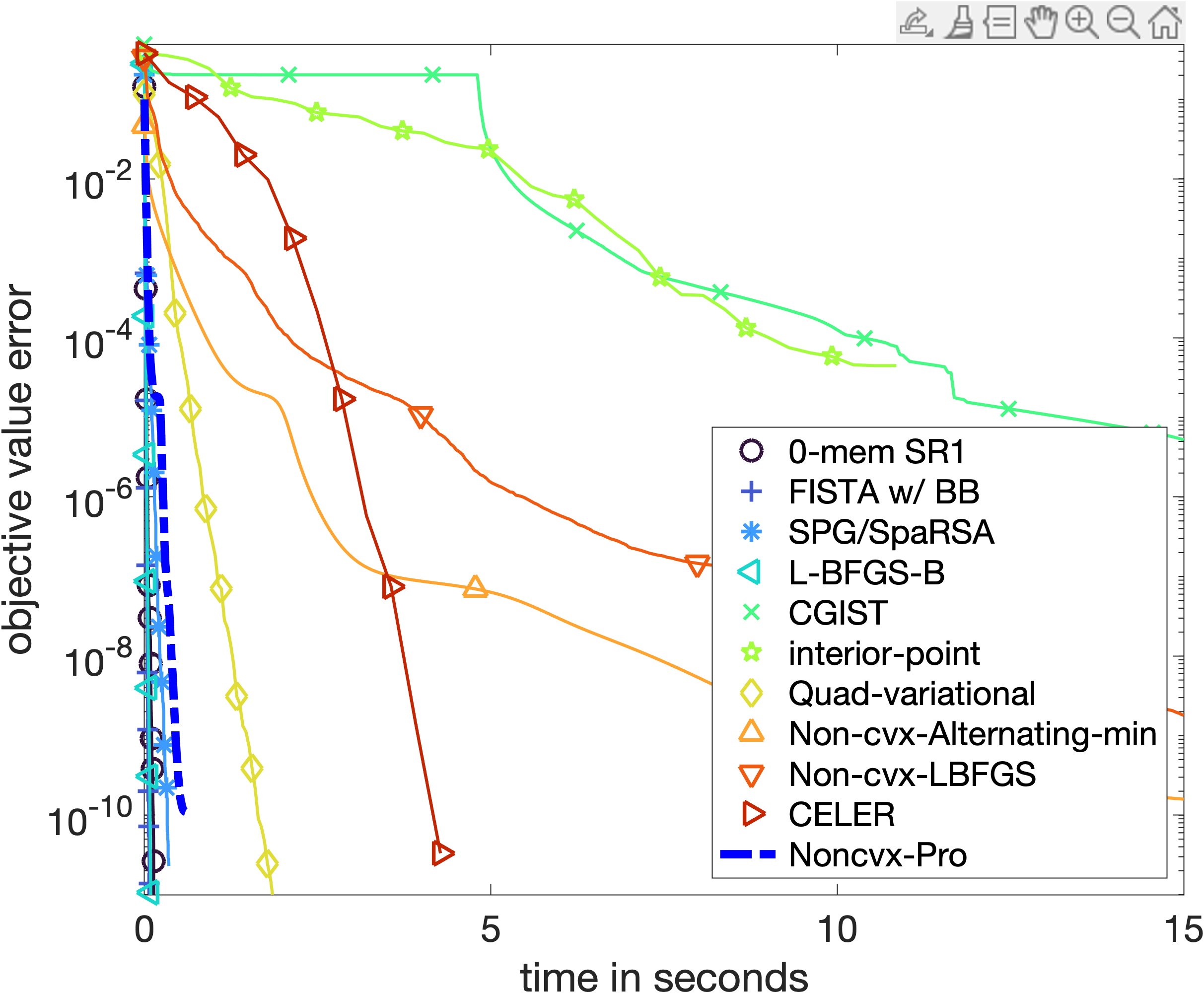}
\\
&$\lambda_*$&$\frac12\lambda_{\max}$&$\frac{1}{10}\lambda_{\max}$&$\frac{1}{50}\lambda_{\max}$

\end{tabular}
\end{center}
\caption{Lasso comparisons with different regularisation parameters. 
\label{fig:libsvm-main} }
\end{figure}

%

\subsection{Group Lasso} \label{sec:grouplasso_experiments}

The multi-task Lasso \citep{gramfort2012mixed} is the problem \eqref{eq:noncvx} where one minimises over $\beta \in \RR^{n\times q}$, the observed data is $y\in\RR^{m\times q}$ and $R(\beta) =\sum_{j=1}^n \norm{\beta^j}_2$ with $\beta^j \in \RR^q$ denotes the $j^{\mathrm{th}}$ row of the matrix $\beta$ and  the loss function is $\frac12 \norm{y-X \beta}_F^2$ in terms of the the Frobenius norm.


\myparagraph{Datasets}
We benchmark our proposed scheme on a joint MEG/EEG data from~\cite{ndiaye2015gap}. 
 $X$ denotes the forward operator with $n=22494$ source locations, and is constructed from 301 MEG sensors and 59 EEG sensors, so that $m=360$.
The observations $y\in \RR^{m\times q}$  represent time series measurements at $m$ sensor with $q=181$ timepoints each.
The  $\beta$ corresponds to the source locations which are assumed to remain constant across time.

\myparagraph{Solvers}
We perform a comparison against FISTA with restarts and BB step, the spectral projected gradient method SPG/SpaRSA, and CELER.  
Since $n$ is much larger than $q$ and $m$,  we use for NonCvx-Pro the saddle point formulation  \eqref{eq:gennoncvx2} where the maximisation is over $\alpha \in \RR^{m\times q}$.
Computation of $\alpha$ in $\nabla f(v)$ involves solving $
(\Id_m +\frac{1}{\lambda} X \diag(v^2) X^\top )\alpha = y.
$, that is, $q$ linear systems each of size $m$. 

\myparagraph{Experiments} 
Figure~\ref{fig:eeg} displays the objective convergence plots against running time for different $\lambda$: $\lambda = \lambda_{\max}/r$ with $\lambda_{\max} = \max_i\norm{ X_i^\top y}_2$  for $r=10,20,50,100$. We observe substantial performance gains over the benchmarked methods: In MEG/EEG problems, the design matrix tends to exhibit high correlation of columns and proximal-based algorithms tend to perform poorly here. Coordinate descent with pruning is known to perform well here when  the regularisation parameter large \citep{massias2018celer}, but its performance  deteriorates as $\lambda$ increases.  


\begin{figure}
\begin{center}
\begin{tabular}{ccccc}
\includegraphics[width=0.22\linewidth]{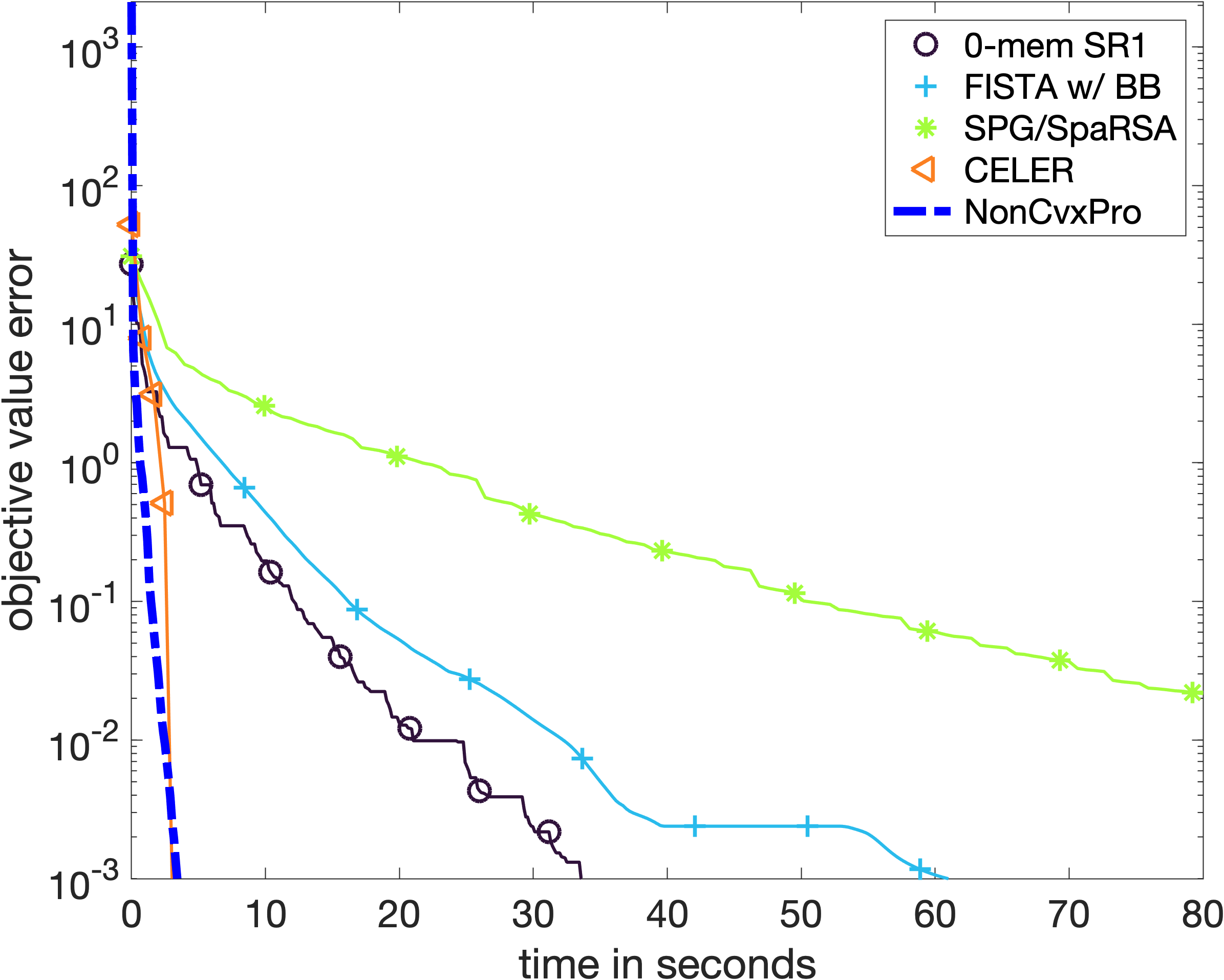}&
\includegraphics[width=0.22\linewidth]{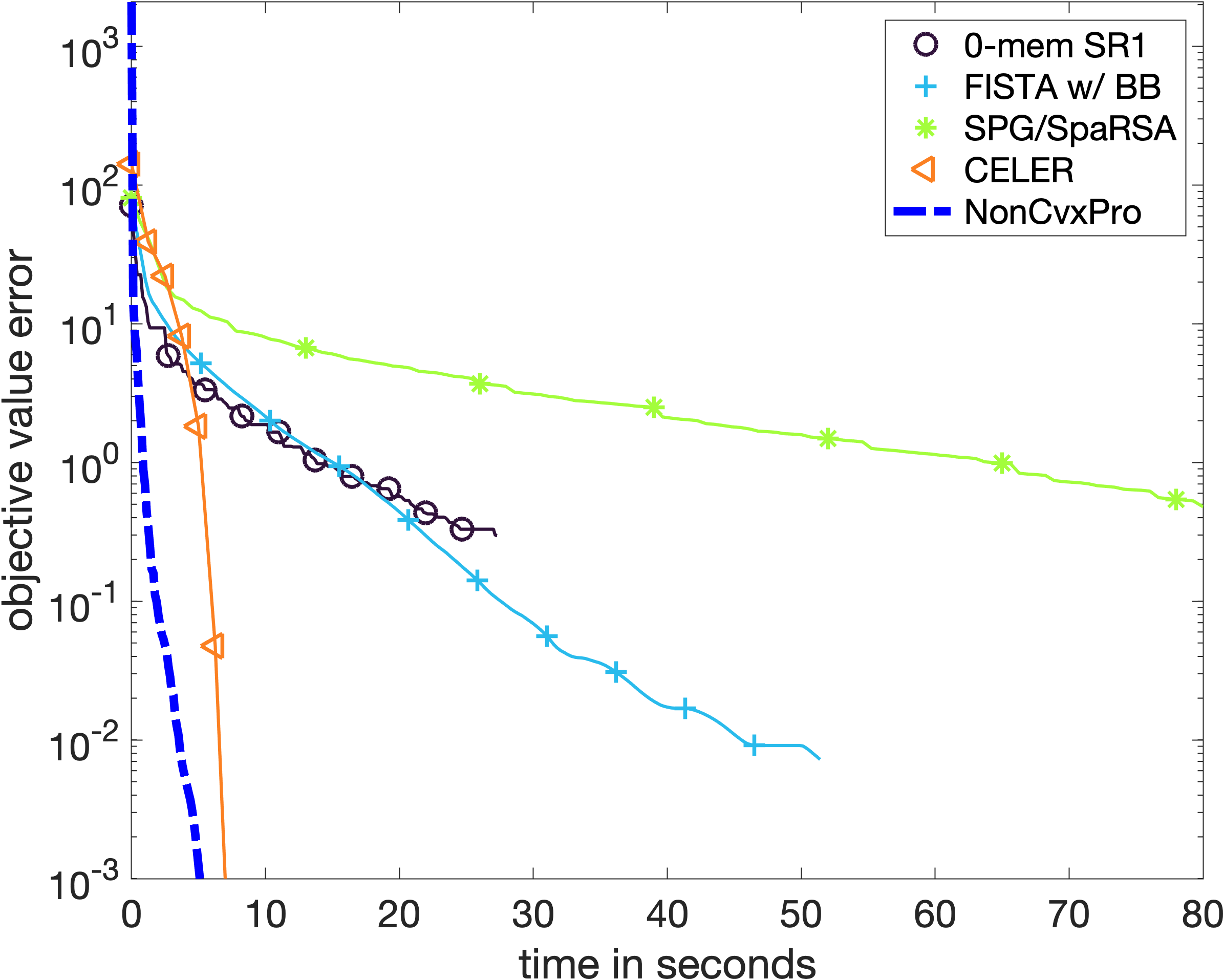}&
\includegraphics[width=0.22\linewidth]{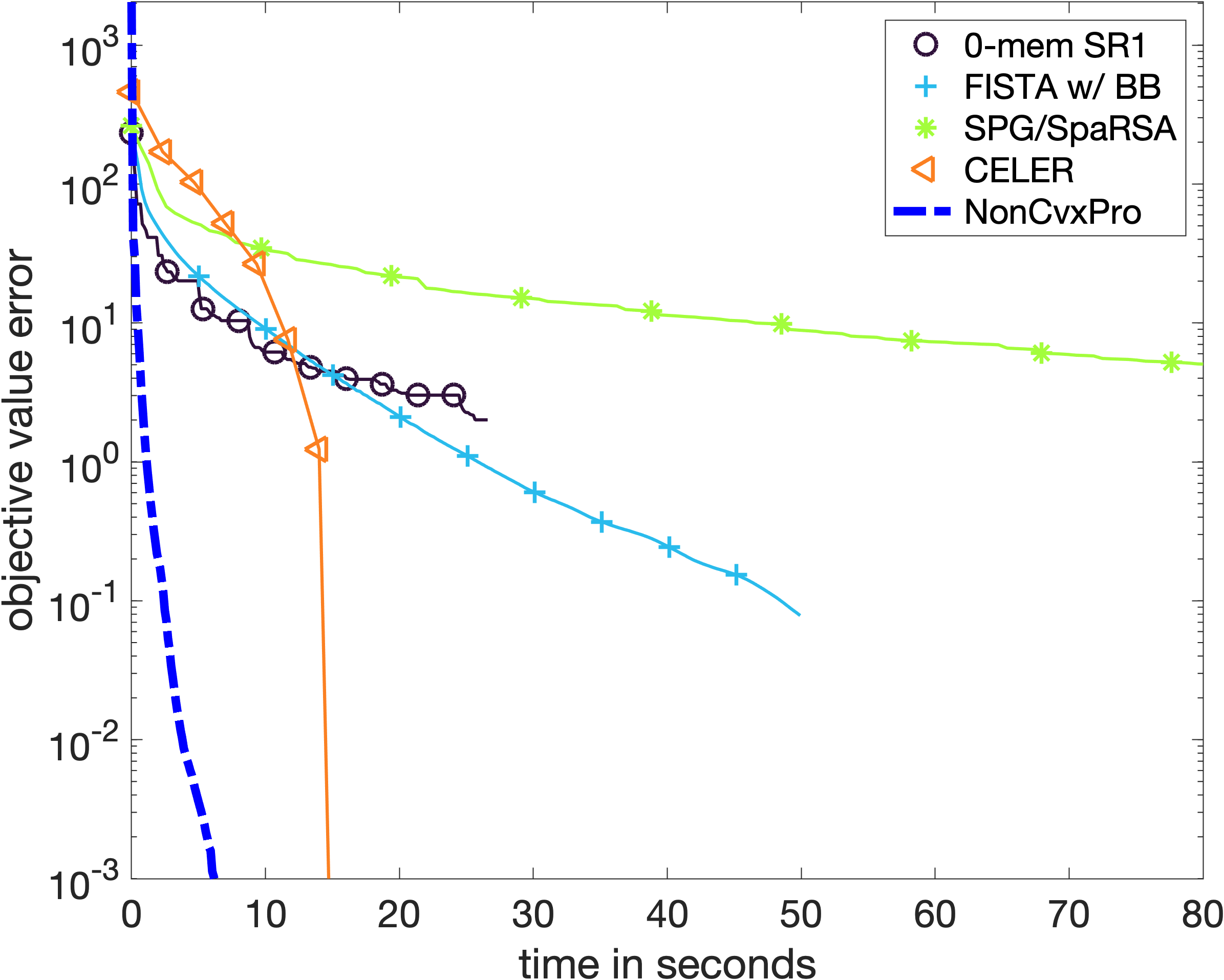}&
\includegraphics[width=0.22\linewidth]{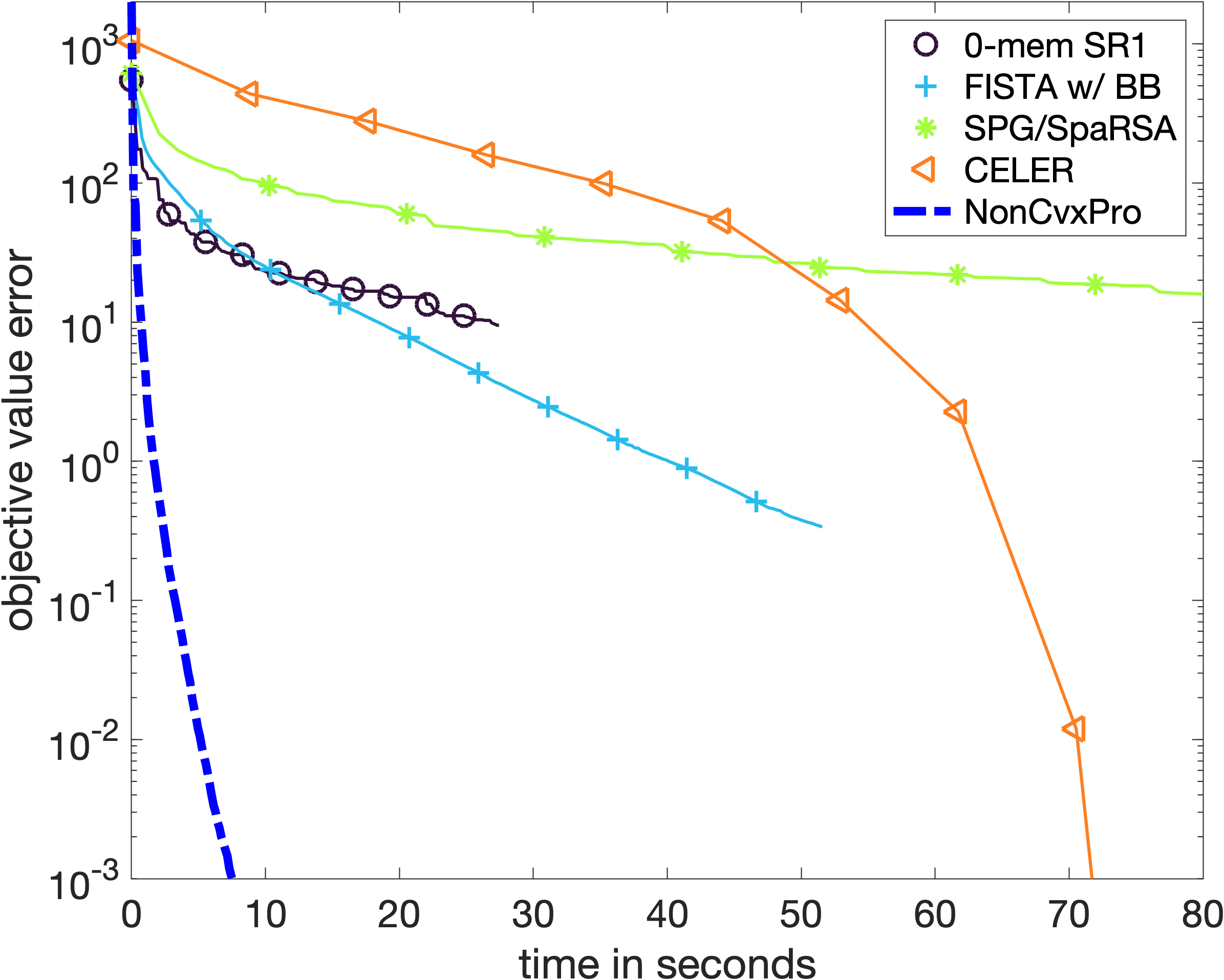}\\
$\lambda= \frac{1}{10}\lambda_{\max}$&$\lambda= \frac{1}{20}\lambda_{\max}$&$\lambda= \frac{1}{50}\lambda_{\max}$ &$\lambda= \frac{1}{100}\lambda_{\max}$
\end{tabular}
\end{center}
\caption{Comparisons for multitask Lasso on MEG/EEG data \label{fig:eeg}}
\end{figure}

\subsection{Trace norm}\label{sec:nuclearnorm_numerics}

In multi-task feature learning \citep{argyriou2008convex},  for each task $t=1,\ldots, T$, we aim to find $f_t :\RR^n\to \RR$ given training examples $(x_{t,i},y_{t,i}) \in \RR^n\times \RR$ for $i=1,\ldots, m_t$. One approach is to jointly solve these $T$ regression problems by minimising  \eqref{eq:nuclearnorm} where $R$ is the trace norm (also called ``nuclear norm''), $r=T$, $y= \pa{y_t}_{t=1}^T$, $\Aa(B) = \pa{X_t B_t}_{t=1}^T$ with $X_t$ being the matrix with $i^{\mathrm{th}}$ row as $x_{t,i}$ and $B_t$ being the  $t^{\mathrm{th}}$ column of the matrix $B$.
Note that letting $u_t\in\RR^n$ denote the $t^{\mathrm{th}}$ column of $U$, the computation of $U$ in $\nabla f(V)$ involves solving $T$ linear systems
$
(\lambda \Id_n + V^\top X_t^\top X_t V ) u_t = V^\top X_t^\top y_t
$. Here, the trace norm  encourages the tasks to share a small number of linear features.


%

\myparagraph{Datasets}
We consider the  three  datasets commonly considered in previous works. The \textit{Schools dataset}   \footnote{\url{https://home.ttic.edu/~argyriou/code/}} from \cite{argyriou2008convex}  consists of the scores of 15362 students from 139 schools.  There are therefore $T=139$ tasks with $15362 = \sum_t m_t$ data points in total, and the goal is to map $n=27$ student attributes to exam performance.  The \textit{SARCOS dataset}  \footnote{ \url{http://www.gaussianprocess.org/gpml/data/}}  \citep{zhang2021survey,zhang2012convex} has 7 tasks, each corresponding to learning the dynamics of a SARCOS anthropomorphic robot arms. There are $n=21$ features and  $m=48,933$ data points, which are shared across all tasks. 
The \textit{Parkinsons dataset}  \citep{tsanas2009accurate}  \footnote{\url{http://archive.ics.uci.edu/ml/datasets/Parkinsons+Telemonitoring}} which  is  made up of $m=5875$ datapoints from $T=42$ patients. The goal is to map $n=19$ biomarkers to Parkinson’s disease symptom scores for each patient.

%
%

\myparagraph{Solvers}
Figure~\ref{fig:schools} reports a comparison  against FISTA with restarts and
 IRLS. The IRLS algorithm for \eqref{eq:nuclearnorm}  is introduced  in \cite{argyriou2008convex} (see also \cite{bach2011optimization}), and applies  alternate minimisation after adding the regularisation term $\epsilon \lambda \tr(Z^{-1})/2$ to \eqref{eq:etatrick0-mtx}.
The update of $B$ is  a simple least squares problem while the update for $Z$ is $Z\leftarrow (BB^\top + \epsilon\Id)^{\frac12}$.
Our nonconvex approach has the same  per-iteration complexity  as  IRLS, but one  advantage is that we directly deal with \eqref{eq:nuclearnorm} without any $\epsilon$ regularisation.

\begin{rem}
Quad-variational mentioned in Section \ref{sec:numerics_lasso} does not extend to the trace norm case, since the function $g$ would be minimised over $\bS_+^n$, for which the application of L-BFGS-B is unclear. For this reason, bilevel formulations for the trace norm \citep{pong2010trace} have been restricted to the use of first order methods.
\end{rem}

%
%
%

\begin{figure}
\begin{center}
\begin{tabular}{ccc}
\includegraphics[width=0.25\linewidth]{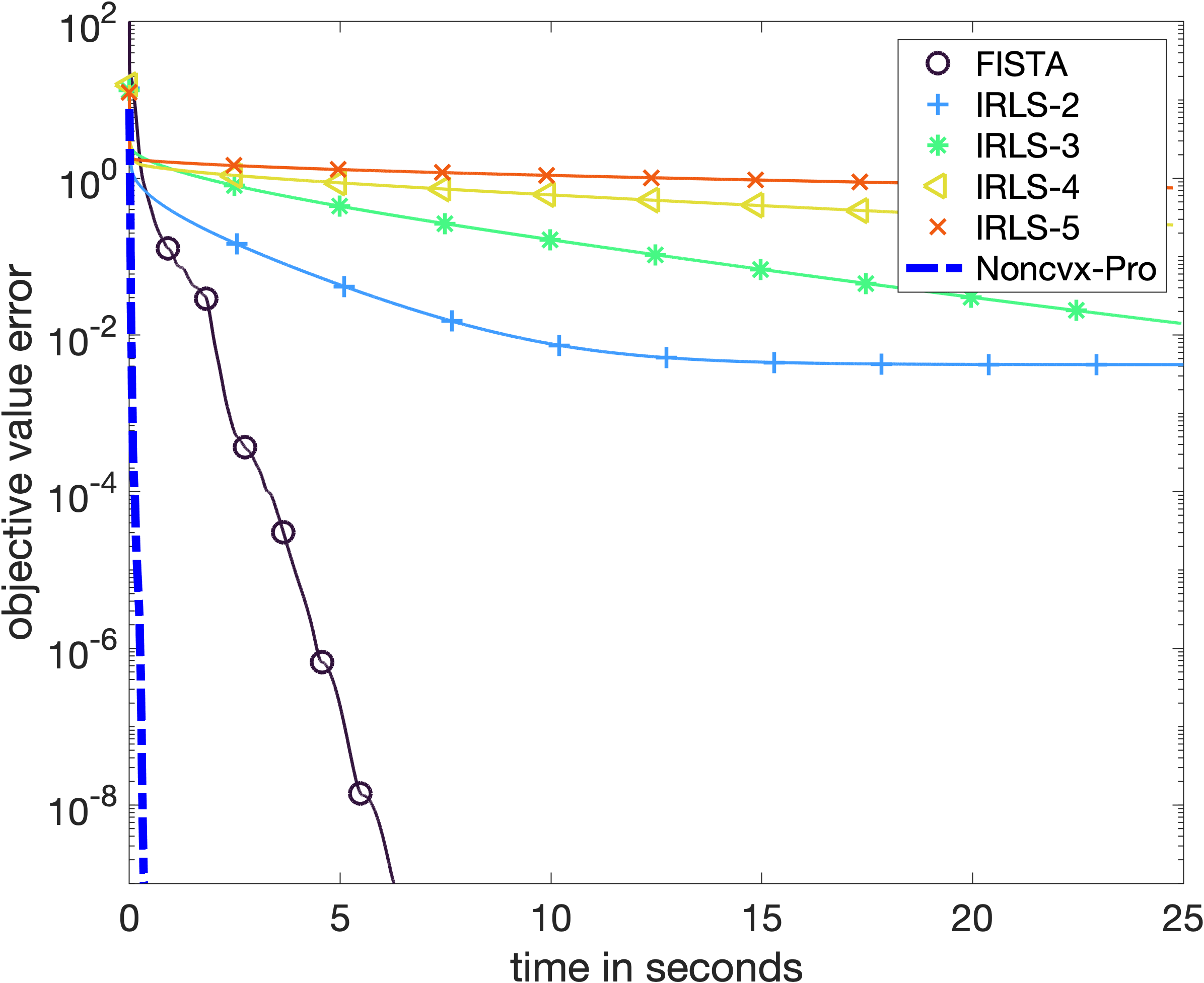}  &
\includegraphics[width=0.25\linewidth]{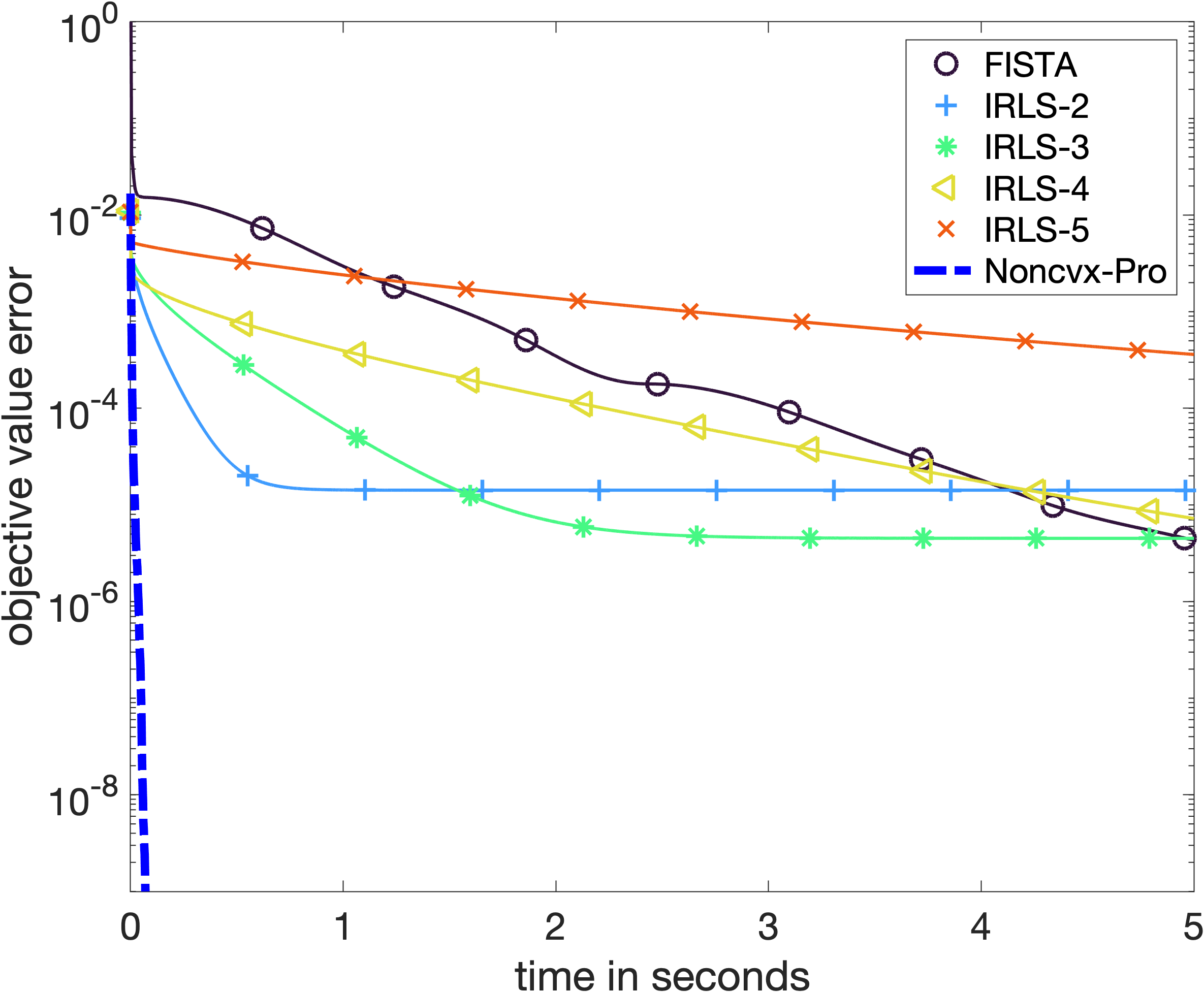} &\includegraphics[width=0.25\linewidth]{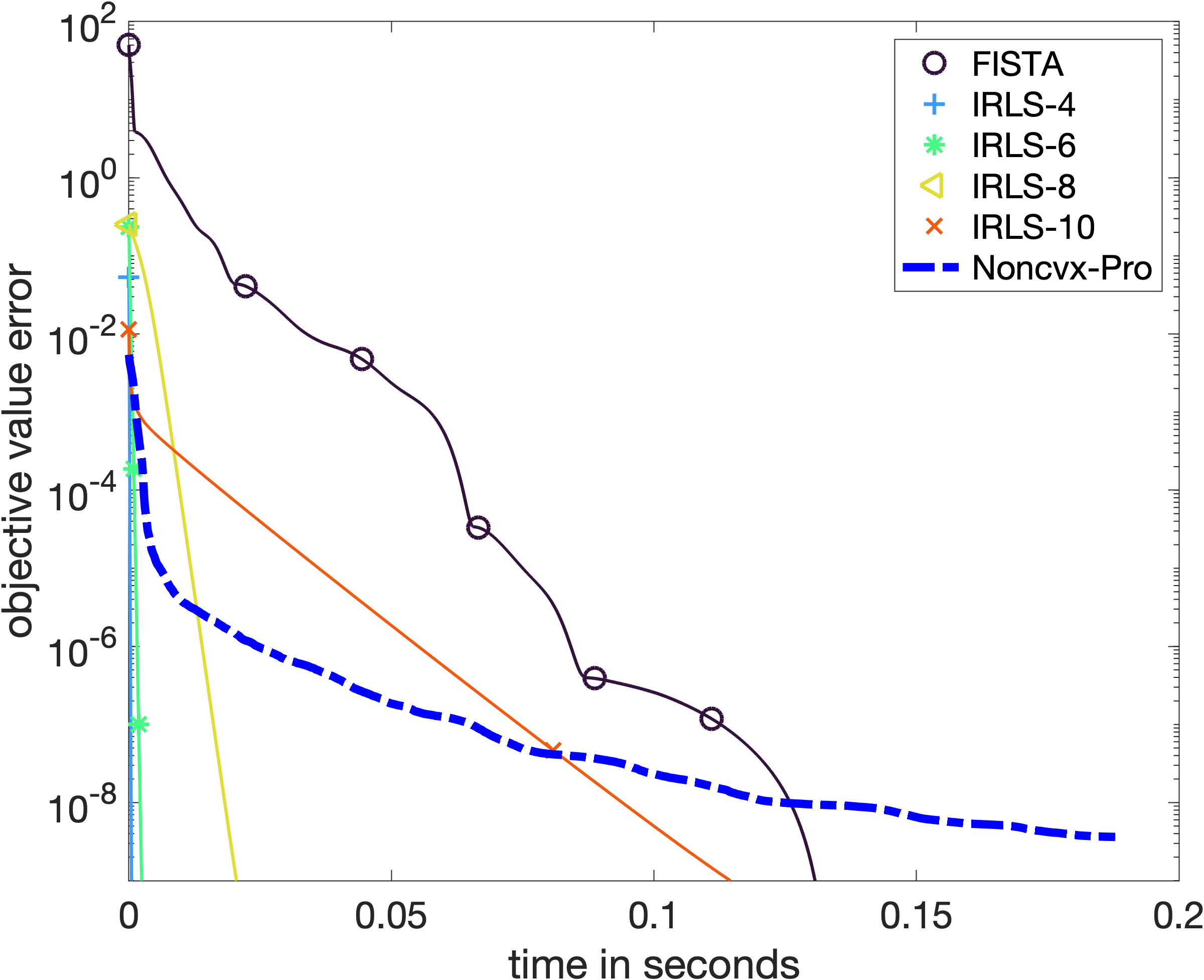}\\
Schools&Parkinsons &SARCOS
\end{tabular}
\end{center}
\caption{Comparisons for multi-task feature learning,  IRLS-d corresponds  to IRLS with $\epsilon= 10^{-d}$. \label{fig:schools}}
\end{figure}

\myparagraph{Experiments} 
For each dataset, we compute the  regularisation parameter $\lambda_*$  by 10-fold cross-validation on the RMSE averaged across 10 random splits. Then, with this regularisation parameter, we compare Non-convex-pro, FISTA and IRLS with different choices of $\epsilon$. The convergence plots are show in Figure \ref{fig:schools}.  We observe substantial computational gains for the Schools and Parkinson's dataset. For the SARCOS dataset, IRLS performed the best, and even though  Nonconvex-pro is comparatively less effective here, although we remark that the number of tasks is much smaller ($T=7$) and the recorded times  are much shorter (less than 0.2s). Further numerical illustrations with synthetic data are shown in  the appendix -- our method is typically less sensitive to problem variations.

\subsection{Constraint Group Lasso and Optimal Transport}
\label{sec:ot}

A salient feature of our method is that it can handle arbitrary small regularization parameter $\la$ and can even cope with the constrained formulation, when $\la=0$, which cannot be tackled by most state of the art Lasso solvers. 
To illustrate this, we consider the computation of an Optimal Transport (OT) map, which has recently gained a lot of attention in ML~\citep{peyre2019computational}.
We focus on the Monge problem, where the ground cost is the geodesic distance on either a graph or a surface (which extends original Monge's problem where the cost is the Euclidean distance). 
This type of OT problems has been used for instance to analyze and process brain signals in M/EEG~\citep{gramfort2015fast}, for application in computer graphics~\citep{solomon2014earth} and is now being applied to genomic datasets~\citep{schiebinger2019optimal}. 
As explained for instance in~\cite[Sec.4.2]{santambrogio2015optimal}, the optimal transport between two probability measures $a$ and $b$ on a surface can be computed by advecting the mass along a vector field $v(x) \in \RR^3$ (tangent to the surface) with minimum vectorial $L^1$ norm $\int \norm{v(x)} \mathrm{d} x$ (where $\mathrm{d} x$ is the surface area measure) subject to the conservation of mass $\text{div}(v)=a-b$.
Once discretized on a 3-D mesh, this boils down to solving a constrained group Lasso problem $(\mathcal{P}_0)$ where $\be_g \in \RR^3$ is a discretization of $v(z_g)$ at some vertex $z_g$ of the mesh, $X$ is a finite element discretization of the divergence using finite elements and $y=a-b$. 
The same applies on a graph, in which case the vector field is aligned with the edge of the graph and the divergence is the discrete divergence associated to the graph adjacency matrix, see~\cite{peyre2019computational}.
This formulation is often called ``Beckmann problem''.

\myparagraph{Datasets}
We consider two experiments: (i) following~\cite{gramfort2015fast} on a 3-D mesh of the brain with $n=20000$ vertices with localized Gaussian-like distributions $a$ and $b$ (in blue and red), (ii) a 5-nearest neighbors graph in a 7-dimensional space of gene expression (corresponding to 7 different time steps) of baker's yeast, which is the dataset from~\cite{derisi1997exploring}. The $n=614$ nodes correspond to the most active genes (maximum variance across time) and this results in a graph with 1927 edges. The distributions are synthetic data, where $a$ is a localized source whereas $b$ is more delocalized.  

\myparagraph{Solvers}
We test our method against two popular first order schemes: the Douglas-Rachford (DR) algorithm~\citep{lions1979splitting,combettes2007douglas} (DR) and the primal-dual scheme of~\cite{chambolle2011first}. 
DR is used in its dual formulation (the Alternating Direction Method of Multipliers -- ADMM) but on a re-parameterized problem in~\cite{solomon2014earth}. 
The details of these algorithms are given in Appendix~\ref{sec-appendix-dr-pd}. 

\myparagraph{Experiments} 
Figure~\ref{fig:beckmann} shows the solution of this Beckmann problem in these two settings.
While DR has the same complexity per iteration as the computation of the gradient of $f$ (resolution of a linear system), a chief advantage of PD is that it only involves the application of $X$ and $X^\top$ et each iterations. 
Both DR and PD have stepping size parameters (denoted $\mu$ and $\si$) which have been tuned manually (the rightmost figure shows two other sub-optimal choices of parameters). In contrast, our algorithm has no parameter to tune and is faster than both DR and PD on these two problems.


\begin{figure}
\centering
\begin{tabular}{@{}c@{}c@{\hspace{3mm}}c@{}c@{}}
\includegraphics[width=0.12\linewidth]{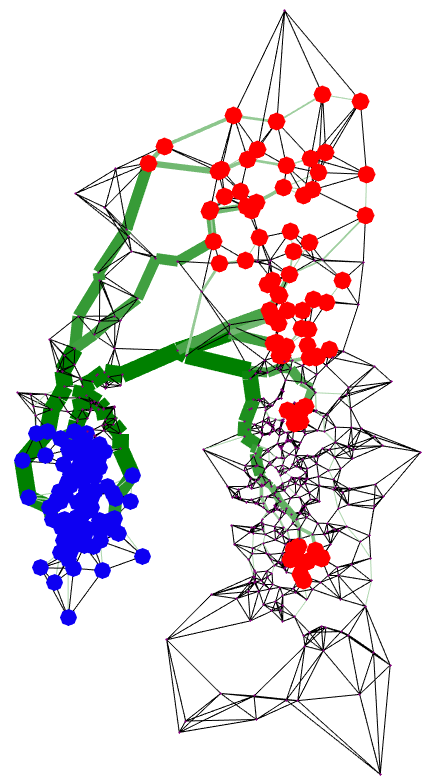}&
\includegraphics[width=0.3\linewidth]{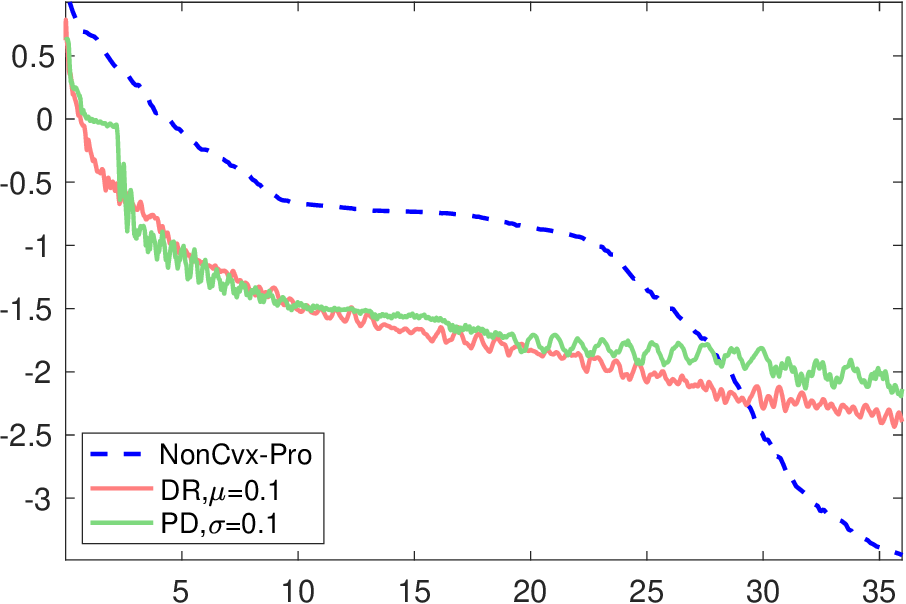}&
\includegraphics[width=0.23\linewidth]{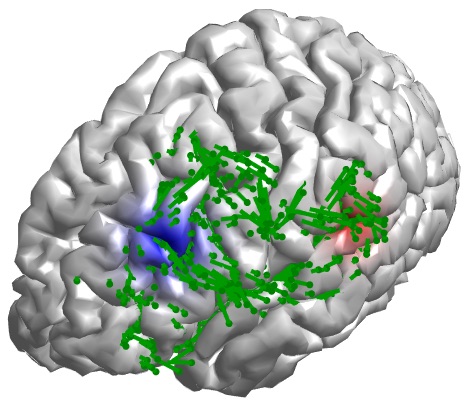} &
\includegraphics[width=0.3\linewidth]{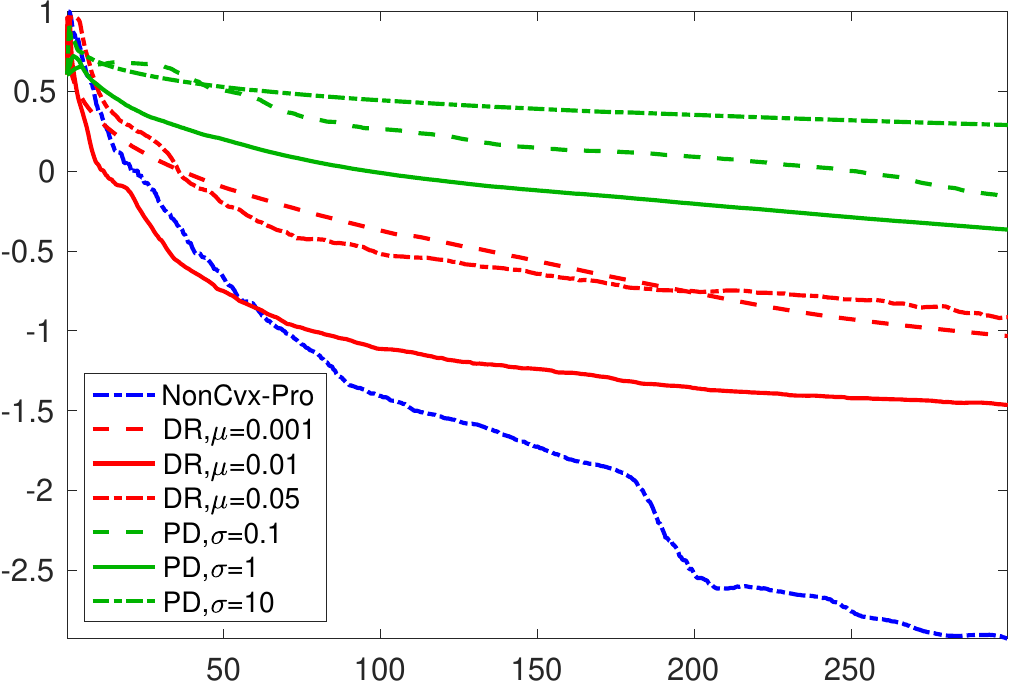}\end{tabular}
\caption{
Resolution of Beckmann problem on a graph (left) and a 3-D mesh (right). The probability distributions $a$ and $b$ are displayed in blue and red and the optimal flow $\beta$ is represented as green segments (on the edge of the graph and on the faces of the mesh).
The convergence curves display the decay of $\log_{10}(\norm{\beta_t-\beta^\star}_1)$ during the iterations of the algorithms (DR=Douglas-Rachford, PD=Primal-Dual) as a function of time $t$ in second.
\label{fig:beckmann}}
\end{figure}


\section{Conclusion}

Most existing approaches to sparse regularisation involve careful smoothing of the nonsmooth term, either by proximal operators or explicit regularisation as in IRLS. We propose a different direction:  a simple reparameterization leads to a smooth optimisation problem, and allows for the use of standard numerical tools, such as BFGS. Our numerical results demonstrate that this approach is versatile, effective and can handle a wide range of problems.

 \rev{We end by making some remarks on possible future research directions. The application of our method to other loss functions requires the use of an inexact solver for the inner problems, and controlling the impact of its approximation is an interesting avenue for future work. Furthermore, it is possible that one can obtain further acceleration by combining with screening rules or active set techniques.}

\section*{Acknowledgments}

The work of G. Peyr\'e was supported by the French government under management of Agence Nationale de la Recherche as part of the ``Investissements d'avenir'' program, reference ANR19-P3IA-0001 (PRAIRIE 3IA Insti- tute) and by the European Research Council (ERC project NORIA).


\appendix


\appendix
\section{Pseudocode for gradient descent implementation}
\rev{For concreteness, we write down in Algorithm \ref{alg:gd} the gradient descent algorithm for solving
$$
\min_{\beta\in\RR^n}\frac{1}{2\lambda}\norm{X \beta - y}_2^2 + \norm{\beta}_1,
$$
where we recall that $X\in\RR^{m\times n}$.
The choice of $\lambda = 0$ corresponds to the Basis-pursuit setting. Note that  $\nabla f(\beta_t) = g_t$ is computed either as in line 5 or line 9 of the algorithm and one can use these computations for any gradient based algorithm (e.g. BFGS). Note also that  this is simply gradient descent on a smooth function, and one can apply typical methods to choosing the stepsize $\gamma_k$, such  as the Barzilai-Borwein stepsize \citep{barzilai1988two}.}

\rev{
\begin{algorithm}[H]
 initialization $v_0 \in\RR^n$ (with no zero entries), stepsize $\gamma_t>0$\;
 \KwResult{$\beta_t$}
 \While{not converged}{
  \eIf{$n\leq m$ and $\lambda>0$}{
   $u_t =-\pa{ \diag(v_t) X^\top X\diag(v_t) +\lambda \Id}^{-1} \pa{v_t\odot X^\top y}$\;
   $g_t = v_t\odot v_t + \frac{1}{\lambda} u_t\odot {X^\top (X u_t\odot v_t - y)}$\;
   $\beta_t = u_t\odot v_t$\;
   }{
   $\alpha_t =-\pa{ X \diag(v_t\odot v_t) X^\top +\lambda \Id}^{-1} y$\;
   $g_t = v_t\odot v_t - v_t\odot \abs{X^\top \alpha_t}^2$\;
   $\beta_t = -v_t\odot v_t \odot X^\top \alpha_t$\;
  }
  $v_{t+1} = v_t- \gamma_t g_t$
 }
 \caption{Gradient descent implementation of Ncvx-Pro for solving Lasso.  \label{alg:gd}}
\end{algorithm}
}

\section{Proofs and additional results for  Section \ref{sec:quad-var}}\label{app:quad-var}

\begin{proof}[Proof to Theorem \ref{thm:folklore}]

To show that i) implies ii), recall that a convex, proper and lower semicontinuous function
$-\phi$ can be written in terms of its convex conjugate which has domain $\RR_-^d$. By writing $\beta^2\eqdef \beta\odot\beta$, using the definition of $R$,  we have
\begin{align*}
-R(\beta^2) &= -\phi(\beta^2) = \sup_{v\leq 0} \dotp{\beta^2}{v} - (-\phi)^*(v)= - \inf_{u\geq 0} \dotp{\beta^2}{u} +  (-\phi)^*(-u).
\end{align*}
which is ii) with $\psi(u) \eqdef (-\phi)^*(-u)$  as required.

Conversely, if $R$ is of the form in ii), then 
$$
R(\beta) = \inf_{u\in\RR_+^n} \dotp{u}{\beta^2}+ \psi(u)
= - \sup_{u\in\RR_+^n} -\dotp{u}{\beta^2} -\psi(u),
$$
so
$R(\beta)  = -\psi^*(-\beta\odot \beta)$ and $-\psi^*(-\cdot)$ is clearly a proper, upper semicontinuous, concave function.

For the expression of $\psi$ when $R$ is a norm,
from the above, we know that $\psi = (-\phi)^*(-z)$,  and recall that for any norm, $R(\beta) = \max_{R^*(w)\leq 1} \dotp{w}{\beta}$. So,
\begin{align*}
\psi(z) &= \max_{u\geq 0} \dotp{-u}{z} + \phi(u)\\
&= \max_{\beta} \dotp{-\beta^2}{z} + \phi(\beta^2) = \max_{\beta} \dotp{-\beta^2}{z} + R(\beta) \\
&=  \max_{\beta} \dotp{-z}{\beta^2} + \max_{R^*(w)\leq 1}\dotp{\beta}{w}= \max_{R^*(w) \leq 1} \frac14 \sum_i \frac{w_i^2}{z_i},
\end{align*}
where in the line, we swapped the two maximums and used the optimality condition over $\beta$ which is $2\beta \odot z = w$. That is, $h(\eta) = 2 \psi(-\frac{1}{2\eta}) = \max_{R^*(w)\leq 1} \sum_i w_i^2 \eta_i $.

To derive the  identity for $R(\beta) ^2$, by the Cauchy-Schwarz inequality
$$
R(\beta) ^2= \sup_{R^*(w)\leq 1} \abs{\dotp{\beta}{w} }^2 \leq  \sup_{R^*(w)\leq 1} \pa{\sum_i {\beta_i^2}\eta_i}\pa{ \sum_i \frac{w_i^2}{\eta_i}} = 4 \psi(\eta) \sum_i {\beta_i^2}\eta_i
$$
for all $\eta>0$.
Therefore,
\begin{align*}
R(\beta) ^2 &\leq \inf_{\eta \geq 0,\psi(\eta)\leq \frac14}\sum_i {\beta_i^2}\eta_i\\ 
&= \sup_{\lambda>0} \inf_{\eta \geq 0} \lambda( \psi(\eta) - \frac14) + \sum_i {\beta_i^2}\eta_i\\
&= \sup_{\lambda>0} - \frac{\lambda}{4} + \lambda R(\beta/\sqrt{\lambda} ) = \sup_{\lambda>0} - \frac{\lambda}{4} +\sqrt{ \lambda} R(\beta )= R(\beta)^2.
\end{align*}
where we used the identity $\lambda R(\beta/\sqrt{\lambda}) = \sum_i \beta_i^2 \eta_i + \lambda \psi(\eta)$ and the fact that $R$ is one positive homogeneous.

\end{proof}

We derive some properties of the function $h$:
\begin{lem}\label{lem:prop_h}
 Consider the function $\phi$ and $\psi$ from  Theorem \ref{thm:folklore}.
If  $\phi: [0,\infty) \to [L, U]$, where $L>-\infty$ and $U \in \RR \cup\ens{+\infty}$, then
$\psi$ is an decreasing function with domain contained in $[0,\infty)$, taking values in $[L,U]$. If $R$ is coercive, then $\lim_{\norm{z}\to 0}\psi(z) = +\infty$. 
\end{lem}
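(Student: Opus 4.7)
The plan is to work directly with the conjugate representation $\psi(z) = (-\phi)^*(-z)$. Since $-\phi$ is proper, convex and lower semicontinuous with effective domain $\RR_+^d$, the conjugate unpacks as
\[
\psi(z) = \sup_{u \in \RR_+^d} \pa{\phi(u) - \dotp{z}{u}},
\]
and each of the four assertions will follow by inspecting this single formula.

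For monotonicity, if $z \leq z'$ componentwise then $-\dotp{z}{u} \geq -\dotp{z'}{u}$ for every $u \geq 0$, so the supremum defining $\psi(z)$ termwise dominates the one defining $\psi(z')$, giving $\psi(z) \geq \psi(z')$. For the domain inclusion, if some coordinate $z_i$ is strictly negative, I test the supremum along $u = t e_i$ and let $t \to \infty$: using the uniform bound $\phi \geq L > -\infty$ on $\RR_+^d$, $\phi(t e_i) - t z_i \to +\infty$, so $z \notin \dom(\psi)$, and therefore $\dom(\psi) \subset \RR_+^d$.

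For the range bounds on $z \in \RR_+^d$, combining $\phi(u) \leq U$ with $\dotp{z}{u} \geq 0$ yields $\psi(z) \leq U$ immediately. The lower bound comes from the trivial choice $u = 0$ in the supremum, which gives $\psi(z) \geq \phi(0) \geq L$ since $0 \in \dom(\phi) = \RR_+^d$ by assumption.

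Finally, assume $R$ is coercive. Because $R(\beta) = \phi(\beta \odot \beta)$ and $\beta \odot \beta$ ranges over all of $\RR_+^d$ with $\norm{\beta \odot \beta} = \norm{\beta}^2 \to \infty$ as $\norm{\beta} \to \infty$, coercivity of $R$ forces $\phi$ to be unbounded above on $\RR_+^d$. Given $M > 0$, pick $u_M \in \RR_+^d$ with $\phi(u_M) \geq M + 1$; then
\[
\psi(z) \geq \phi(u_M) - \dotp{z}{u_M} \geq M + 1 - \norm{z}\,\norm{u_M},
\]
which exceeds $M$ as soon as $\norm{z} \leq \norm{u_M}^{-1}$, so $\lim_{\norm{z}\to 0} \psi(z) = +\infty$. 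The only step demanding any care is the coercive one, where I need to translate coercivity of $R$ in the $\beta$-variable into unboundedness of $\phi$ in the nonnegative $u$-variable; this is immediate from the surjectivity of $\beta \mapsto \beta \odot \beta$ onto $\RR_+^d$, so no separate continuity or semicontinuity argument for $\psi$ at the origin is required.
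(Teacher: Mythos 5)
Your proof is correct, and it takes a more elementary route than the paper's at several points. Where the paper establishes monotonicity by invoking Danskin's theorem to show that every subgradient of $\psi$ is of the form $-u$ with $u \geq 0$, you compare the two suprema termwise, which avoids any differentiability or subdifferential machinery and directly yields that $\psi$ is nonincreasing for the componentwise order. For the lower bound $\psi \geq L$, the paper argues via the auxiliary quantity $M = \sup\{v : v \in \partial\phi(u),\, u \geq 0\}$ and only concludes $\psi(z) = \phi(0) \geq L$ for $z \geq M$ under the extra hypothesis that $M$ is finite; your choice of the test point $u = 0$ gives the bound for every $z$ with no side condition, which is both simpler and slightly more general. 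Finally, the paper's proof of the coercive case is essentially limited to the observation $\psi(0) = \sup_u \phi(u) = U$, which gives the value at the origin but not the stated limit as $\norm{z} \to 0$; your quantitative estimate $\psi(z) \geq \phi(u_M) - \norm{z}\,\norm{u_M}$ actually proves the limit claim, filling in a step the paper leaves implicit (one could alternatively note that $\psi$, being a conjugate, is lower semicontinuous, so $\liminf_{\norm{z}\to 0}\psi(z) \geq \psi(0) = +\infty$, but your direct argument is self-contained). The only cosmetic caveat is that for small $M$ your chosen $u_M$ could in principle be $0$, in which case $\psi \equiv +\infty$ anyway, so nothing breaks.
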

%

\begin{proof}[Proof to 
Lemma \ref{lem:prop_h}]

Let $\phi: [0,\infty) \to [L, U]$, where $L>-\infty$ and $U \in \RR \cup\ens{+\infty}$.  We describe the properties of the function $\psi(z) = (-\phi)^*(-z) = \sup_{u\geq 0} \dotp{z}{-u} + \phi(u)$. 
\begin{itemize}

\item[(i)] $\dom(\psi)\subset [0,\infty)$: since $\phi$ is bounded below, it is clear that for $z<0$, $\sup_{u\geq 0} \dotp{z}{-u} + \phi(u) = +\infty$.

\item[(ii)]  $\psi(0) = \sup_{u\geq 0} \phi(u) = U$. 
\item[(iii)]  Suppose $M\eqdef \sup\enscond{v}{v\in\partial \phi(u), u\geq 0}<\infty$. Then, for all $z\geq M$, $-z+\partial \phi(u) <0$ for all $u\geq 0$ and hence, $\psi(z) = \phi(0) \geq L$. Therefore, $\psi$ takes values in $[L, U]$. So, if $M$ is finite, then one can restrict the optimisation over $z$ to values in $[0,M]$.

\item[(iv)]  $\psi$ is decreasing: By Danskin's theorem \cite[Prop. B.25]{bertsekas1997nonlinear},  for $z\in  \enscond{v}{v\in\partial \phi(u), u\geq 0}$, $$\partial \psi(z) = \enscond{-u}{ u\in \argmin_{u\geq 0} \dotp{u}{-z} + \phi(u)}\subset(-\infty,0).$$

\end{itemize}

\end{proof}

\subsection{Functions on matrices}
We  have the following result for matrix valued functions. Let  $\bS^n_+$ denote the set of symmetric positive semidefinite matrices.
\begin{thm}\label{thm:matrix_quadvar}
Let $R: \RR^{n\times m} \to \RR$. The following are equivalent
\begin{itemize}
\item[i)]  $R(B) = \phi(BB^\top)$ where $\phi$ is a proper concave upper semi-continuous function with domain  $\bS^n_+$.
 
\item[ii)] There exists a convex function $\psi$ such that
$
R(B) = \min_{Z\in\bS^n_+} \tr(B^\top Z B) + \psi(Z).
$
\end{itemize}
Moreover, we have  $\psi(Z) = (-\phi)^*(-Z)$. 
If $R$ is a norm, then $\psi$ can be written as
\begin{equation}\label{eq:psi}
\psi(Z) = \max_{R^*(W) \leq 1} \frac14 \tr(W^\top Z^{-1} W).
\end{equation}
Moreover,  
\begin{equation}\label{eq:psi2}
R(B)^2 = \inf_{Z\in \bS^n_+} \enscond{ \tr(B^\top Z B)}{  \psi(Z) \leq \frac{1}{4}}.
\end{equation}
\end{thm}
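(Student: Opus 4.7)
}

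The proof will mirror the vector-case argument of Theorem~\ref{thm:folklore}, replacing the Hadamard product $\beta \odot \beta$ with the outer product $BB^\top$ and the coordinatewise dual pairing $\langle \beta^2, u\rangle$ with the trace pairing $\langle BB^\top, Z\rangle_F = \tr(B^\top Z B)$. The main objects are now valued in the self-dual cone $\bS^n_+$, so the argument passes through almost unchanged once we identify the correct dual cone.

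\emph{Step 1: (i)$\Rightarrow$(ii).} Since $-\phi$ is proper, convex, lower semicontinuous with domain $\bS^n_+$, Fenchel--Moreau gives $-\phi(M) = \sup_{Z}\, \langle M, Z\rangle_F - (-\phi)^*(Z)$ for all $M \in \bS^n_+$. I will argue, exactly as in the vector case, that $\dom(-\phi)^* \subset -\bS^n_+$: if $Z$ has a positive eigendirection $w$, taking $M = twtw^\top \in \bS^n_+$ with $t\to \infty$ makes $\langle M,Z\rangle_F \to +\infty$ while $(-\phi)(M)$ grows at most linearly (since $-\phi$ is convex with values in $[-U,-L]$ restricted to compact sub-slices), forcing $(-\phi)^*(Z) = +\infty$. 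Evaluating at $M = BB^\top$ and substituting $Z \leftarrow -Z$ yields $R(B) = \inf_{Z\in \bS^n_+}\, \tr(B^\top Z B) + \psi(Z)$ with $\psi(Z) \eqdef (-\phi)^*(-Z)$.

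\emph{Step 2: (ii)$\Rightarrow$(i).} Extending $\psi$ by $+\infty$ outside $\bS^n_+$, the representation in (ii) reads $R(B) = -\psi^*(-BB^\top)$ by direct expansion of the conjugate. Setting $\phi \eqdef -\psi^*(-\cdot)$ produces a proper concave upper-semicontinuous function whose effective domain is $\bS^n_+$, and the claimed identity $\psi(Z) = (-\phi)^*(-Z)$ follows from $\psi^{**} = \psi$.

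\emph{Step 3: The norm formula~\eqref{eq:psi}.} When $R$ is a norm, I will plug $\phi(U) = R(B)$ for any $B$ with $BB^\top = U$ into $\psi(Z) = \sup_{U \in \bS^n_+}\, \phi(U) - \langle Z, U\rangle_F$, rewrite as $\sup_{B}\, R(B) - \tr(B^\top Z B)$, and insert the dual characterisation $R(B) = \sup_{R^*(W)\le 1} \langle W, B\rangle_F$. Swapping sup and max (justified since the inner problem in $B$ is a concave quadratic) and solving the unconstrained quadratic in $B$ via the first-order condition $2ZB = W$, i.e.\ $B = \tfrac12 Z^{-1} W$, yields $\psi(Z) = \max_{R^*(W)\le 1} \tfrac14 \tr(W^\top Z^{-1} W)$.

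\emph{Step 4: The squared-norm identity~\eqref{eq:psi2}.} Using the Frobenius Cauchy--Schwarz inequality $\langle W, B\rangle_F^2 = \langle Z^{-1/2}W, Z^{1/2}B\rangle_F^2 \leq \tr(W^\top Z^{-1}W)\cdot \tr(B^\top Z B)$ and then taking the supremum over $R^*(W)\le 1$ subject to $\psi(Z)\le 1/4$ gives one inequality, $R(B)^2 \le \inf_{\psi(Z) \le 1/4} \tr(B^\top Z B)$. The reverse inequality follows exactly as in Theorem~\ref{thm:folklore}: dualising the constraint $\psi(Z)\le 1/4$ with multiplier $\lambda>0$ reduces the right-hand side to $\sup_{\lambda>0}\, -\lambda/4 + \lambda R(B/\sqrt\lambda) = \sup_{\lambda>0}\, -\lambda/4 + \sqrt\lambda\, R(B) = R(B)^2$, using one-positive-homogeneity of $R$.

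\emph{Main obstacle.} The only nontrivial departure from the scalar proof is bookkeeping on the cone: verifying that $\dom(-\phi)^* \subseteq -\bS^n_+$ so that the sup over all $Z$ collapses to an inf over $\bS^n_+$, and justifying the inf--sup swap in Step~3 on the (possibly unbounded) PSD cone. Both can be handled via standard Sion/minimax arguments after noting that the inner problem is a strictly concave quadratic when $Z \succ 0$, with the boundary $Z \succeq 0$ singular case treated by continuous extension.
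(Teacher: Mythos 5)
Your proposal is correct and follows essentially the same route as the paper: the conjugate identity $\psi(Z)=\sup_{U\in\bS^n_+}\phi(U)-\dotp{U}{Z}$ restricted to factorizations $U=BB^\top$, the swap of suprema with the quadratic optimality condition $B=\tfrac12 Z^{-1}W$ to obtain \eqref{eq:psi}, and Cauchy--Schwarz in the Frobenius inner product plus Lagrangian duality with one-homogeneity for \eqref{eq:psi2}. You are in fact slightly more complete than the paper, whose written proof only treats \eqref{eq:psi} and \eqref{eq:psi2} and leaves the equivalence (i)$\Leftrightarrow$(ii) and the reverse inequality in \eqref{eq:psi2} to the vector-case argument of Theorem~\ref{thm:folklore}.
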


\paragraph{Nuclear norm} $R(W) = \tr(\sqrt{WW^\top})$ where $\sqrt{\cdot}$ is the matrix square root. On the space of symmetric positive semidefinite matrices, $\phi(B) = \tr(\sqrt{B})$ is concave and $\psi(D) = \frac14 \tr(D^{-1})$, where we use $\partial_A \tr(\sqrt{A} )= (2\sqrt{A})^{-1}$ for all symmetric  positive semidefinite matrices and $\partial_A \tr(AB) = B$.

\paragraph{(Nonconvex) spectral regularisation}
Given a symmetric psd matrix $Z =U \diag(\sigma_i) U^\top$ and $\alpha>0$, let $Z^\alpha \eqdef U \diag(\sigma_i^\alpha) U^\top$.
For $\alpha \in (0,1)$,  consider $R(W) = \tr((WW^\top)^{\alpha/2}) = \sum_i \sigma_i^\alpha$ where $\sigma_i$ are the singular values of $W$. Then, given a symmetric psd matrix, $\phi(Z) = \tr(Z^{\alpha/2})$ which is concave  \cite[Thm 4.2.3]{bhatia2009positive} and
\begin{align*}
\psi(Z) &= \min_{V\in \bS_+^d} -\tr(VZ) + \phi(V) = \min_{U\in \OO^d, \sigma\in \RR_+^d} - \tr(\diag(\sigma) UZU^\top) + \sum_i \sigma_i^{\alpha/2}\\
&=\min_{U\in \OO^d,\sigma\geq 0}- \sum_i \hat Z_{ii} \sigma_i +\sum_i \sigma_i^{\alpha/2} \qwhereq \hat Z = U Z U^\top \\
&= C_\alpha \min_{U\in \OO^d} \sum_i \hat Z_{ii}^{\frac{\alpha}{\alpha-2}} \qwhereq \hat Z = U Z U^\top \qandq U\in \OO^d\\
&= C_\alpha \tr(Z^{\frac{\alpha}{\alpha-2}}) 
\end{align*}
Therefore,
$$
R(B) = \inf_{Z\in\bS_+^d} \tr(B^\top Z B) + C_\alpha\tr(Z^{\frac{\alpha}{\alpha-2}}) .
$$

\begin{proof}[Proof of Theorem \ref{thm:matrix_quadvar}]

To derive \eqref{eq:psi},
\begin{align*}
\psi(Z) &= \max_{U\in \bS_+^n} -\dotp{U}{Z} + \phi(U) = \max_{V\in \RR^n} -\dotp{VV^\top}{Z} + \phi(VV^\top)\\
&=\max_{V\in \RR^n} -\dotp{VV^\top}{Z} + R(V)
\end{align*}
Then, \eqref{eq:psi} follows, since by convex duality and definition of $R^*$,
\begin{align*}
 \max_{R^*(W)\leq 1} \frac{1}{4}\tr(W^\top Z^{-1} W) =\max_{R^*(W)\leq 1}  \max_V \dotp{-Z}{VV^\top} + \dotp{V}{W}
= \max_V \dotp{-Z}{VV^\top} + R(V).
\end{align*}

Finally, by the submultiplicative property of the Frobenius norms, for all $Z\in \bS^n_+$ with $Z\succ 0$,
\begin{align*}
R(B)^2 &= \sup_{R^*(W)\leq 1} \abs{\dotp{Z^{-1/2}W}{Z^{1/2}B}}^2 \leq  \sup_{R^*(W)\leq 1} \tr(W^\top Z^{-1} W) \tr(B^\top Z B) \\
&= 4\psi(W)   \tr(B^\top Z B) 
\end{align*}

It follows that just as in the proof of  Theorem \ref{thm:folklore} that
\begin{align*}
R(B)^2 \leq \inf_{Z\in \bS^n_+} \tr(B^\top Z B) \qwhereq \psi(Z) \leq \frac{1}{4}.
\end{align*}
\end{proof}

\section{Proof of Section \ref{sec:theory}}\label{app:theory}

\begin{proof}[Proof of Proposition \ref{prop:eigenvalues}]

Let $$G(u,v) \eqdef \frac{1}{2} \norm{u}^2 +\frac{1}{2} \norm{ v}_2^2 + \frac{1}{2\lambda} \norm{X(v\odot_\Gg u) - y}_2^2.$$
We know from Theorem \ref{thm:grad}  that $f$ is differentiable with $$\nabla f(v) = \partial_v G(u,v) =  v +\lambda^{-1}  u\odot X^\top(X v\odot_\Gg u - y)$$ where $u = \argmin_u G(u,v)$.  
In particular, 
$$
0 = \partial_u G(u,v) =  u +\lambda^{-1} X^\top(X(v\odot_\Gg u) - y).
$$
Since $ \partial_{uu} G =\lambda^{-1}  (v_g X_g^\top X_h v_h)_{g,h}+ \Id $ is invertible, by the implicit function theorem $u$ is a smooth function of $v$ with $\partial_v u =[ \partial_{uu}G]^{-1} \partial_{vu}G$. In particular, 
$$\nabla^2 f(v) = \partial_{vv} G(u,v) +\partial_{uv} G(u,v) \partial_v u.
$$
So,  the Hessian of $f$ is the Schur complement of the Hessian of $G$ (as also observed in \cite{ruhe1980algorithms,van2016non}). We write 
$
\nabla^2 G = \begin{pmatrix}
 A & B\\
 B^\top & D
\end{pmatrix}
$
where
\begin{align*}
&A\eqdef  \partial_{vv} G=\lambda^{-1} \pa{ u_g^\top X_g^\top X_h u_h }_{g,h}+ \Id \\
& B\eqdef \partial_{uv} G =\lambda^{-1} \pa{(u_g^\top X_g^\top X_h v_h )_{g,h}  } + \diag(\xi_g^\top) \\
&D  \eqdef \partial_{uu} G =\lambda^{-1}  (v_g X_g^\top X_h v_h)_{g,h}+ \Id 
\end{align*}
where $\xi = \frac{1}{\lambda}X^\top(X(u\odot v) - y)$. Then,
$
\nabla^2 f(t) = A - B D^{-1} B^\top.
$
Note that in fact, $u$ is infinitely differentiable by the implicit function theorem, and so, $f$ is  also infinitely differentiable.

We now derive a formula for the Hessian of $f$.
By permuting the rows and columns of $\nabla^2 G$, we can assume that, letting $J$ denote the support of $v$,
\begin{align*}
&A\eqdef \begin{pmatrix}
\lambda^{-1} \pa{ u_g^\top X_g^\top X_h u_h }_{g,h\in J}+ \Id_J &0\\
0 & \Id_{J^c}
\end{pmatrix}
 \\
& B\eqdef \lambda^{-1}
\begin{pmatrix}
\pa{(u_g^\top X_g^\top X_h v_h )_{g,h\in J}  + \lambda \diag(\xi_g^\top)_{g\in J}} & 0 \\
0 & \lambda \diag(\xi_g^\top)_{g\in J^c}
\end{pmatrix}
\\
&D \eqdef 
\begin{pmatrix}\lambda^{-1}
(v_g X_g^\top X_h v_h)_{g,h\in J} + \Id_J & 0 \\ 0 & \Id_{J^c}
\end{pmatrix}
\end{align*}
Note that $A$ and $D$ is positive definite. So, $\nabla^2 G$ is positive semidefinite if and only if $ A - B D^{-1} B^\top$ is positive semidefinite.
Note that $ A - B D^{-1} B^\top$ is a block diagonal matrix, with the bottom right block as $\Id_{J^c} -  \diag(\norm{\xi_g}^2_2)_{g\in J^c}$. 

To work out the  expression for the top left block of $ A - B D^{-1} B^\top$,
let us first examine the top left block of the matrix $B$: Note that by definition of $u$, 
$$
\lambda^{-1}
 v_g X_g^\top (X  (v \odot_\Gg u ) - y) +   u_g = 0 \implies \forall g\in \Supp(v),\; \xi_g = -\frac{u_g}{v_g} .
$$
Define  the block diagonal matrix $U_{u/v} = \diag(u_g/v_g)_{g\in J}$,  then $U_{u/v}^\top U_{u,v} = \diag(\norm{u_g}^2/v_g^2)_{g\in J}$. 
 The top left block of $B$ is
\begin{align*}
&\lambda^{-1} (u_g^\top X_g^\top X_h v_h )_{g,h\in J}  + \diag(\xi_g^\top)_{g\in J}
=\lambda^{-1}  U_{u/v}^\top (v_g X_g^\top X_h v_h )_{g,h\in J}  + \diag(\xi_g^\top)\\
&= U_{u/v}^\top \pa{ \lambda^{-1}  (v_g X_g^\top X_h v_h )_{g,h\in J} + \Id_J}  - U_{u/v}^\top  + \diag(\xi_g^\top)\\
&=U_{u/v}^\top \underbrace{  \pa{ \lambda^{-1}  (v_g X_g^\top X_h v_h )_{g,h\in J} + \Id_J}}_{\eqdef H}  -2 U_{u/v}^\top = U_{u/v}^\top H -2 U_{u/v}^\top.
\end{align*}
By our notation of $H$, the top left block of  $D^{-1}$ is $H^{-1}$. So, the top left block of $BD^{-1} B^\top$ is
$$
(U_{u/v}^\top H -2 U_{u/v}^\top ) (  U_{u/v} -2 H^{-1} U_{u/v})
= U_{u/v}^\top H U_{u/v}- 4 U_{u/v}^\top U_{u/v} + 4 U_{u/v}^\top H^{-1} U_{u/v}.
$$
and the top left block of $A - B D^{-1} B^\top$  is
\begin{align*}
& \Id_J
- U_{u/v}^\top U_{u/v} 
+ 4 U_{u/v}^\top U_{u/v} - 4 U_{u/v}^\top H^{-1} U_{u/v}\\
&= \diag(1- \norm{\xi_g}_2^2)_{g\in J} + 4 U_{u/v}^\top U_{u/v}  - 4 U_{u/v}^\top H^{-1} U_{u/v}.
\end{align*}
Note that $\norm{\Id - H^{-1}} \leq 1$, and given $w\in \RR^{\abs{\Gg}}$,
\begin{align*}
\dotp{\nabla^2 f(v) w}{w} &= \sum_{g\in\Gg} (1-\norm{\xi_g}_2^2) w_g^2 + 4 \dotp{(\Id - H^{-1}) (w\odot_\Gg \xi)}{w\odot_\Gg \xi}
\\
&\leq  \sum_{g\in\Gg} (1-\norm{\xi_g}_2^2) w_g^2 + 4  \norm{\xi_g}_2^2 w_g^2,
\end{align*}
and it follows that $\norm{\nabla^2 f(v)} \leq 1+ 3\max_{g\in\Gg}\norm{\xi_g}_2^2$.
We have a global Lipschitz bound on the gradient of $f$ if $\norm{\xi_g}\leq L$ for some $L$, which is true because for each $v$, $u$ minimises
$$
\min_u \frac{1}{2}\norm{u}_2^2 +\frac{1}{2\lambda}\norm{X (v\odot_\Gg u) - y}_2^2 \leq \frac{\norm{y}_2^2}{2\lambda}
$$
So, $\max_{g\in\Gg}\norm{\xi_g}_2 \leq {\norm{y}_2} \max_{g\in\Gg} \norm{X_g}/\lambda$, and $\norm{\nabla^2 f(v)}\leq 1+ 3 \norm{y}^2 \max_{g\in\Gg} \norm{X_g}^2/\lambda^2$.

At stationary points, we also have
$$
u_g^\top X_g^\top(X (v \odot_\Gg u )- y) +\lambda v_g = 0 \implies \forall g\in\Supp(v), \; u_g^\top \xi_g =-  v_g.
$$
Together, this means that at stationary points, $\norm{u_g}^2 = v_g^2$ and
 $U_{u/v}^\top U_{u/v} = \Id_J$. 
Therefore, the top left block of  $A - B D^{-1} B^\top$ becomes
\begin{align*}
4 \Id_J &- 4 U_{u/v}^\top 
\pa{ \lambda^{-1} (v_g X_g^\top X_h v_h)_{g,h\in J} + \Id_J}^{-1}  U_{u/v} \succeq 0
\end{align*}
since ${\lambda^{-1} (v_g X_g^\top X_h v_h)_{g,h\in J} + \Id_J} \succeq (1+\mu)\Id $, where $ \mu = \min\mathrm{Eig}\pa{\lambda^{-1} (v_g X_g^\top X_h v_h)_{g,h\in J} }$. Therefore, the smallest eigenvalue of 
$A- BD^{-1} B$ is at least $$\min\pa{4\mu/(1+\mu),  \min_{g\not\in J}(1-\norm{\xi_g}^2)}
\leq  \min_{g\not\in J}(1-\norm{\xi_g}^2)
$$

Moreover, if $A-B D^{-1} B\succeq 0$, then  $\min_{g\not\in J}(1-\norm{\xi_g}^2) \geq 0$, which implies that $(u,v)$ defines a minimiser to the original group Lasso problem, hence, $(u,v)$ defines a global minimum. Therefore, every stationary point is either a global minimum or a strict saddle point.

\end{proof}

\paragraph{Remarks on the comparison with ISTA in the introduction}
To explain the observed behavior, note that gradient descent for $f$ with stepsize $\gamma$ reads
$v_{k+1} = v_k - \gamma \nabla f(v_k)  =  v_k (1 - \gamma \pa{1 -  \abs{\xi_k}^2})$ where  $\xi_k \eqdef \frac{1}{\lambda} X^\top (X v_k\odot u_k - y)$ (see Proposition \ref{prop:eigenvalues}).
Note that if   $\beta_*$ is a minimiser, then $\xi_* \eqdef \frac{1}{\lambda} X^\top (X \beta_*- y)$ satisfies $\norm{\xi_*}_\infty \leq 1$ and the set $\enscond{i}{\abs{(\xi_*)_i} = 1}$ is often called the extended support  and contains the support of $\beta_*$. It is clear that we can expect coefficients outside the extended support to (eventually) decay to 0   geometrically.  Since $\xi_k$ is uniformly bounded (see Proposition \ref{prop:eigenvalues}),
for $\gamma$ sufficiently small, $v_k$ never changes sign and any sign change in the iterate $\beta_k \eqdef v_k \odot u_k$ is due to $u_k$.
In contrast, the ISTA dynamics is
$
\beta_{k+1} =\sign(\beta_k - \gamma \xi_k ) \max\pa{\abs{\beta_k - \gamma \xi_k }-\gamma,0}
$.
Due to the thresholding operation,  a coefficient of $\beta_k$ is initialised with the wrong sign will spend some iterations as 0 before correcting its sign.

\begin{figure}
\begin{tabular}{ccccc}
\includegraphics[width=0.2\linewidth]{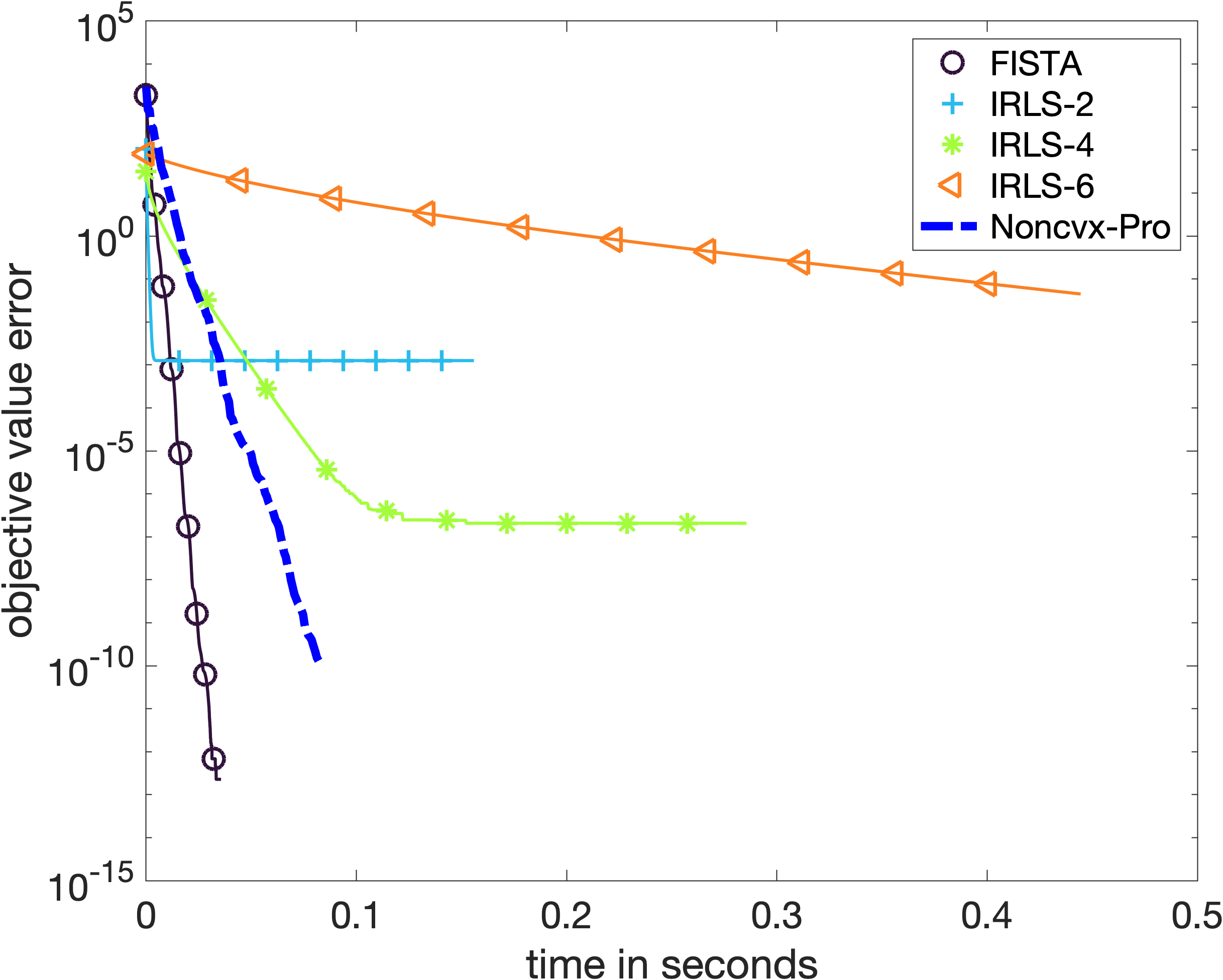} &
\includegraphics[width=0.2\linewidth]{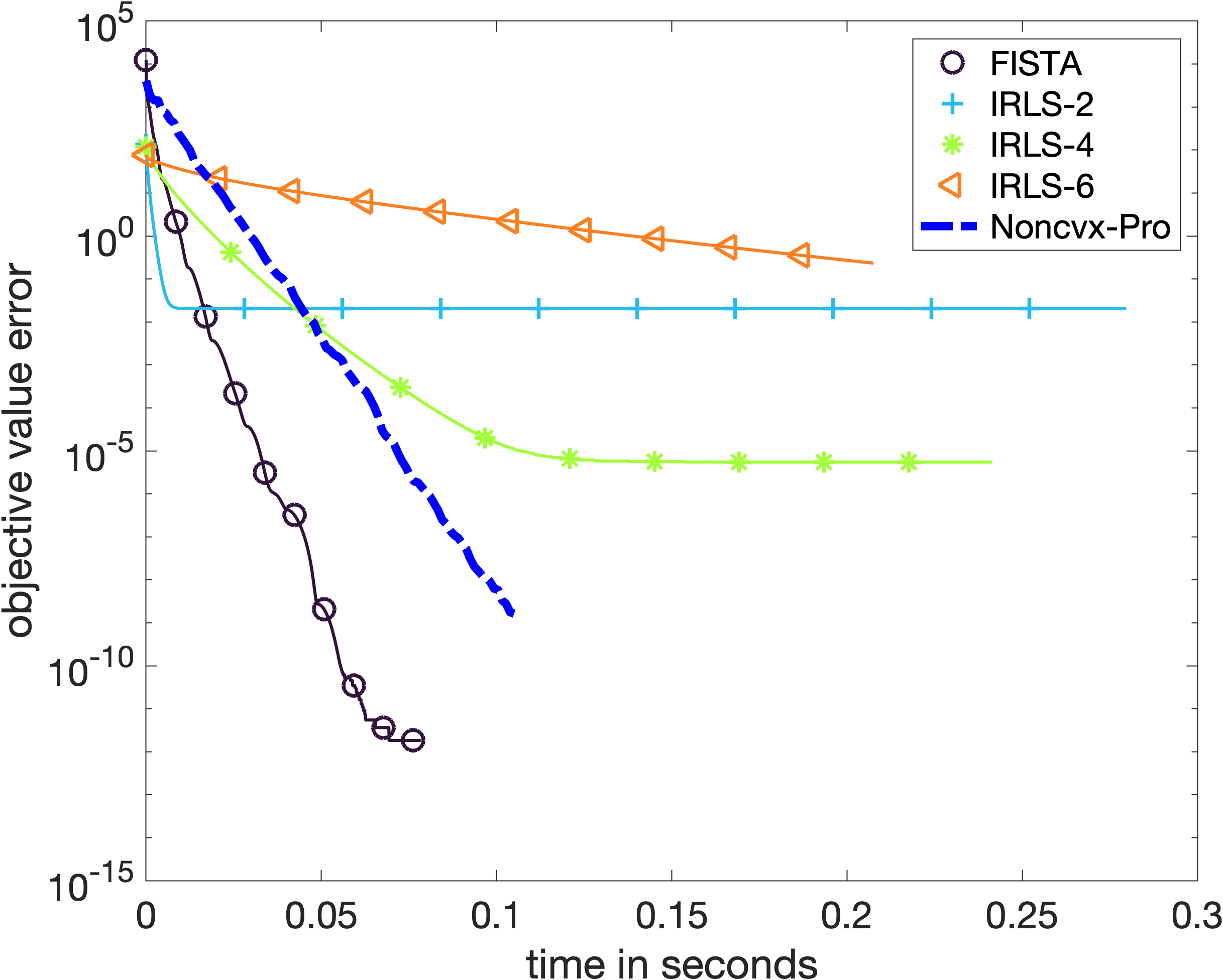} &
\includegraphics[width=0.2\linewidth]{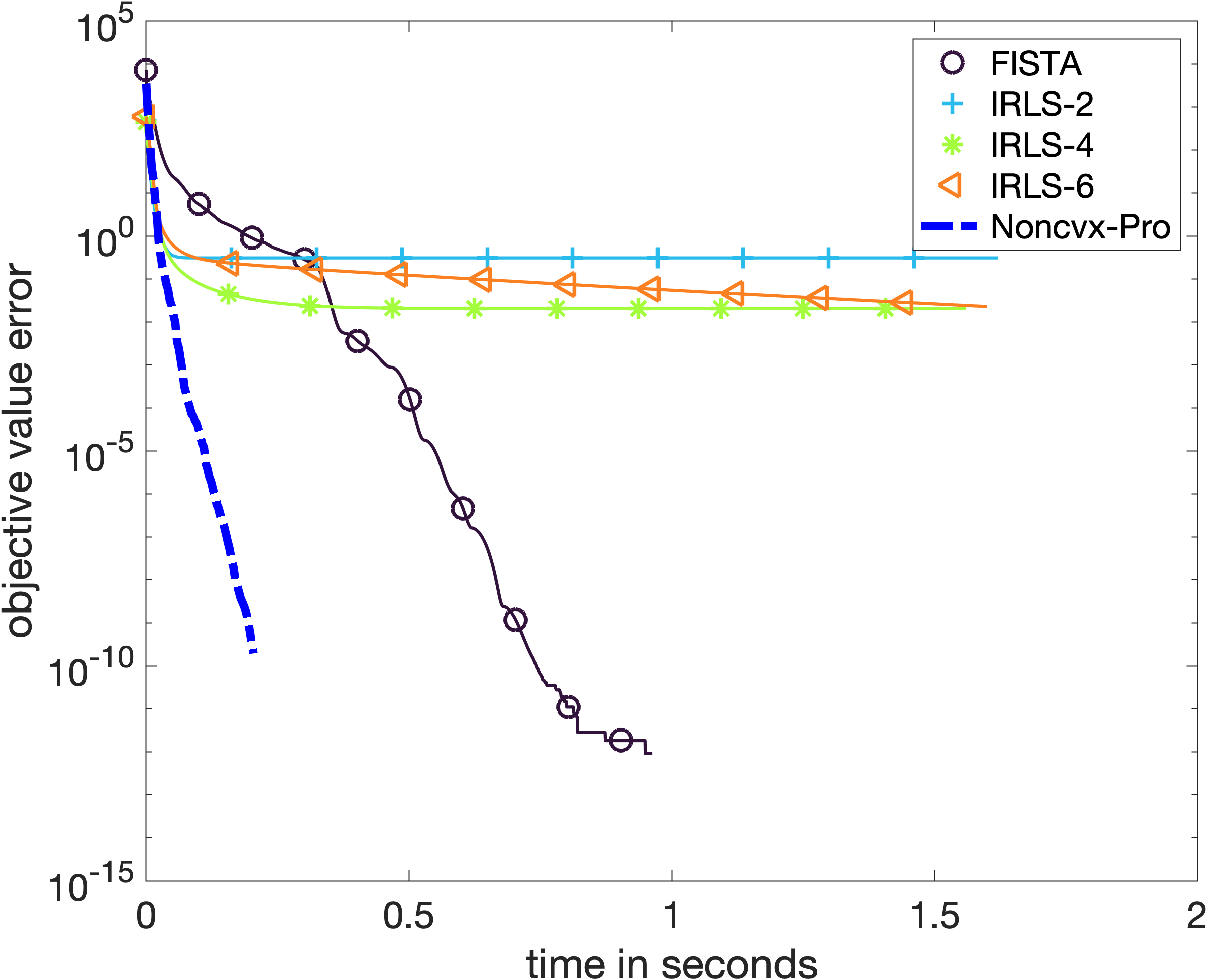} &
\includegraphics[width=0.2\linewidth]{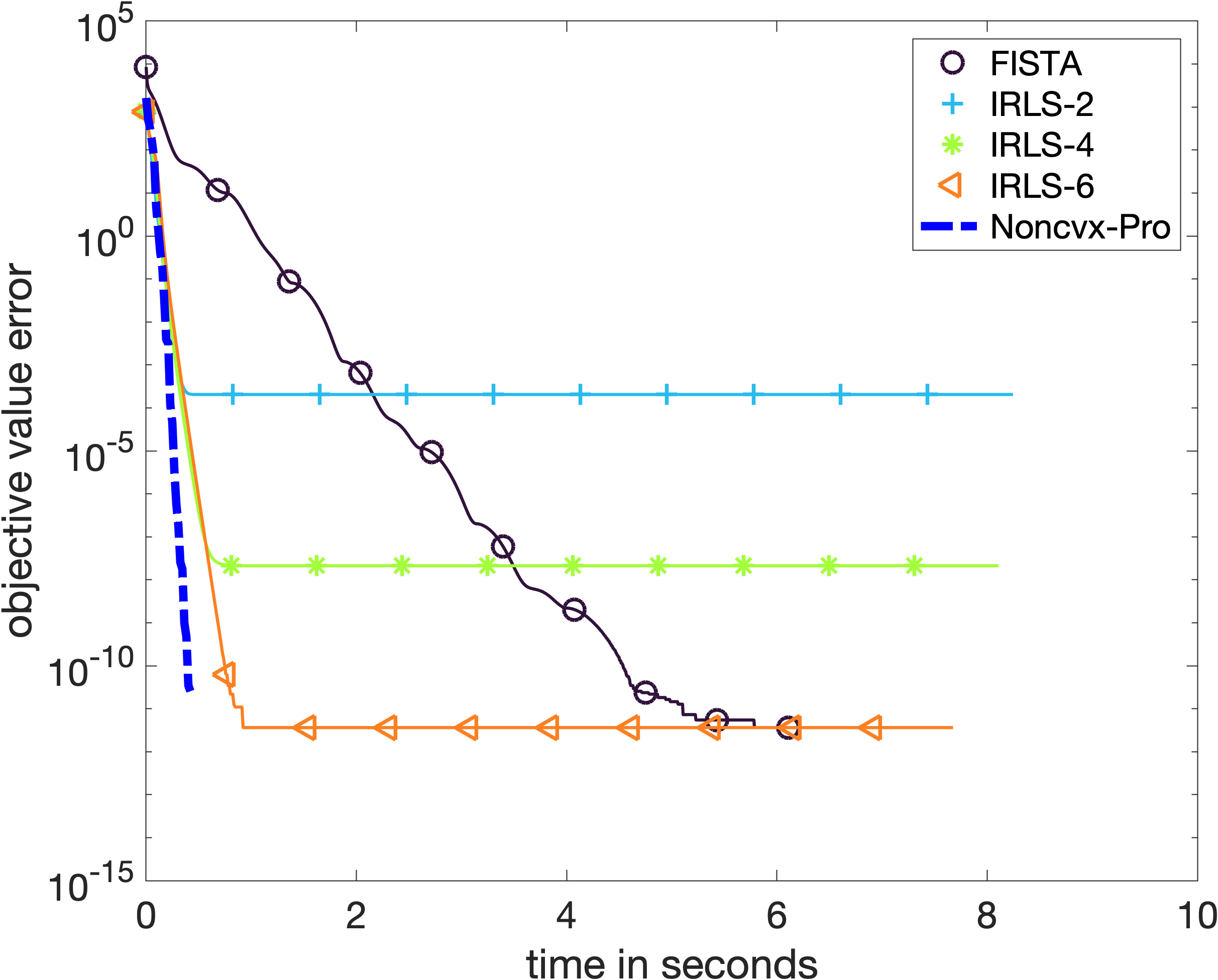} \\
$T=5$ &$T=10$ &$T=100$ &$T=500$
\end{tabular}
\caption{Multitask feature learning (nuclear norm regularisation) with synthetic data. We have $T$ tasks, $n=30$ features and $m=10,000$ samples in total. The matrix $X_t$ associated to each task has iid entries drawn uniformly at random from $[0,1]$.  See the description in Section \ref{sec:nuclearnorm_numerics}. \label{fig:synthetic_mtl}}
\end{figure}

\section{Supplementary to Section \ref{sec:numerics}}

\subsection{Remarks on numerical experiments}

\paragraph{Initialisation points} We generated random initialisation point from the normal distribution. In our experiments, methods which are not reparameterized (e.g. the proximal methods), are given  the same random initial point, while reparameterized methods have their own random initialisation, since some of these require  positive starting points and some need double the number of variables. We find that the comparisons are not much affected by the choice of initial points.

\paragraph{Inversion of linear systems}
As mentioned in Corollary \eqref{cor:lasso}, for the Lasso, when computing the gradient of $f$, one can either invert a $n\times n$ linear system or an $m\times m$ linear system. The same applies to Quad-variational, since the solution to the inner maximisation problem is,
by the Woodbury  identity, $$\alpha =  (\lambda \Id_m +  X_{{\eta}} X_{{\eta}}^\top)^{-1}y = \frac{1}{\lambda}y -\frac{1}{\lambda} X_{\eta}(\lambda\Id_n+X_{{\eta}}^\top X_{{\eta}})^{-1}( X_{{\eta}}^\top y)$$ where $X_{{\eta}} =X \diag(\sqrt{\eta})$, with the correspondence that $\beta = \eta\odot X^\top \alpha$. Throughout, we simply use backslash in MATLAB for the matrix inversion.

\paragraph{Implementation details}
All numerics are done in Matlab with the exception of CELER which is in Python:
\begin{itemize}
\item CELER are conducted in Python and we used the code \url{https://mathurinm.github.io/celer/} provided by the original paper \cite{massias2018celer}
\item 0-mem SR1, FISTA w/ BB and SPG/SpaRSA use the Matlab code from \url{https://github.com/stephenbeckr/zeroSR1}  of the paper \cite{becker2019quasi}.
\item Interior point method uses the Matlab code \url{https://web.stanford.edu/~boyd/l1_ls/} of \cite{koh2007interior}.
\item CGIST uses the Matlab code \url{http://tag7.web.rice.edu/CGIST.html} of \cite{goldstein2010high}.
\item We had our own implementation of Non-cvx-Alternating-min and IRLS.
\item Quad-variational, Non-cvx-LBFGS and Noncvx-Pro are written in Matlab using the L-BFGS-B solver from \url{https://github.com/stephenbeckr/L-BFGS-B-C} which is a Matlab wrapper for C code converted from the well known Fortran implementation of \cite{byrd1995limited}.
\end{itemize}

\subsection{Additional examples}

\paragraph{Lasso}
In Figure \ref{fig:libsvm_more}, we show additional numerics for the Lasso, testing against datasets from the Libsvm repository. The regularisation parameter $\lambda$ associated to each  plots  is found by cross validation on the mean squared error.

\begin{figure}
\begin{center}
\begin{tabular}{cccccc}
\rotatebox{90}{\hspace{2em}\footnotesize abalone}
\rotatebox{90}{\hspace{1em} \footnotesize(4177,8)}&
\includegraphics[width=0.2\linewidth]{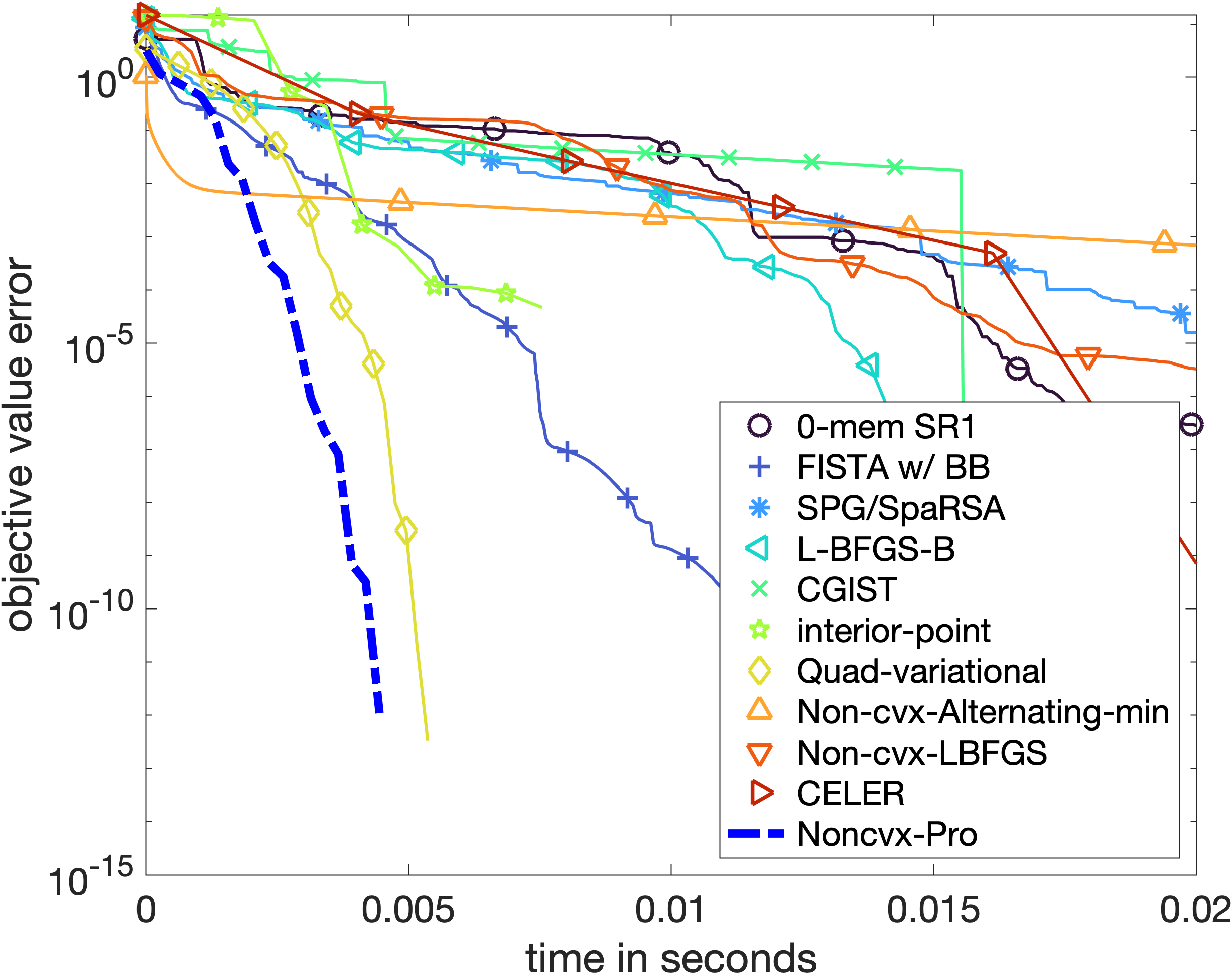}&
\includegraphics[width=0.2\linewidth]{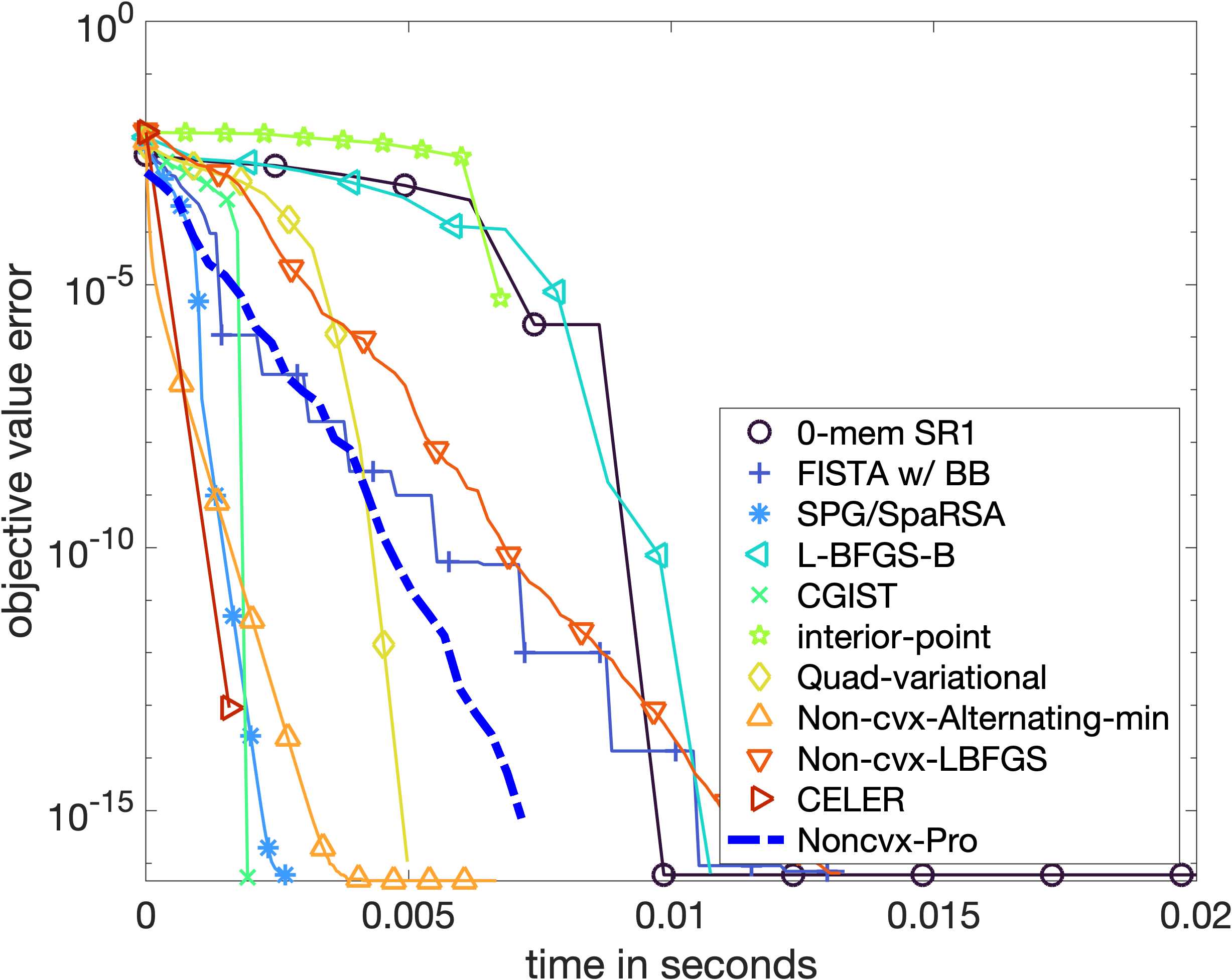}
&
\includegraphics[width=0.2\linewidth]{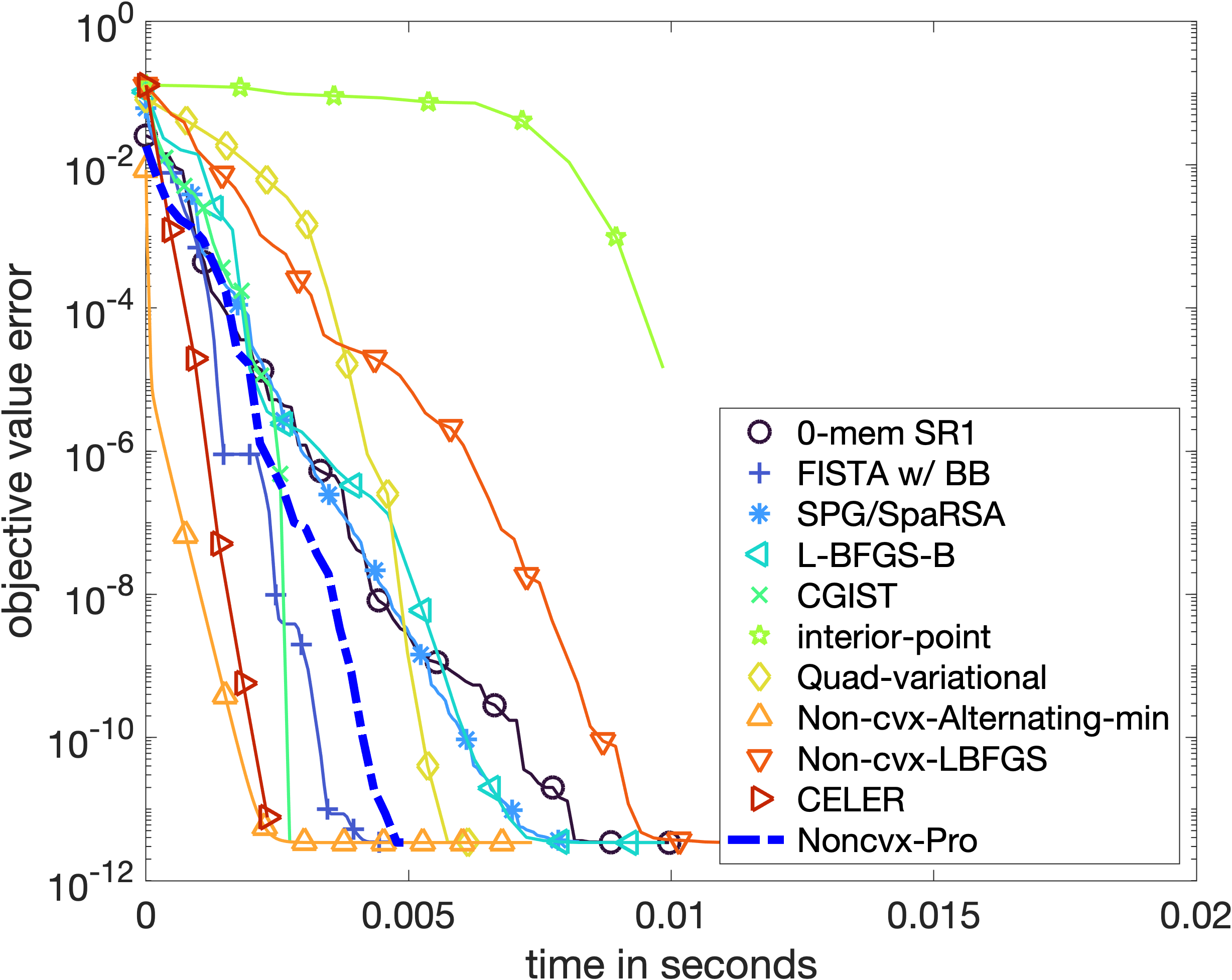}
&
\includegraphics[width=0.2\linewidth]{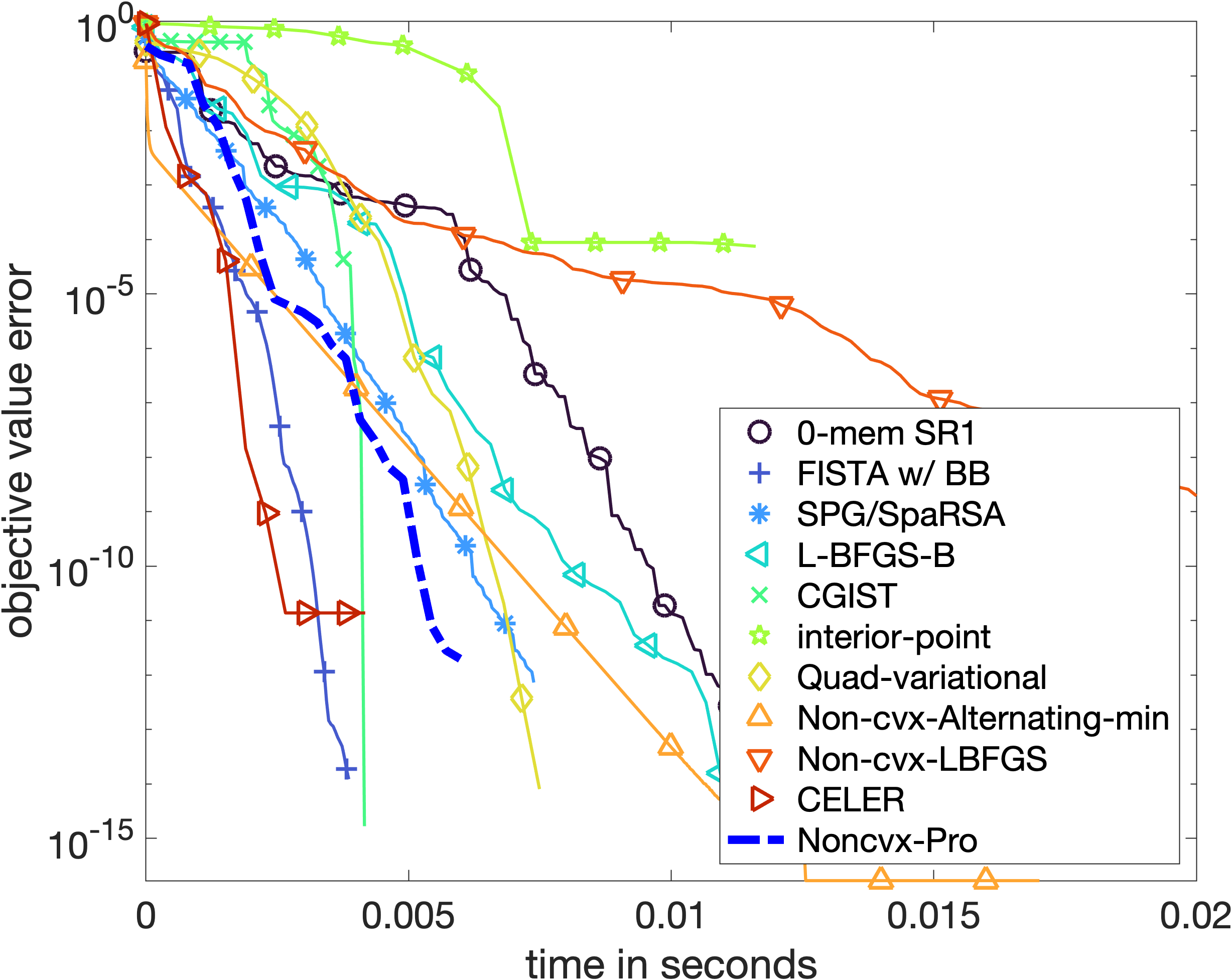}
\\
\rotatebox{90}{\hspace{2em}\footnotesize housing}
\rotatebox{90}{\hspace{1em} \footnotesize(506,13)}&
\includegraphics[width=0.2\linewidth]{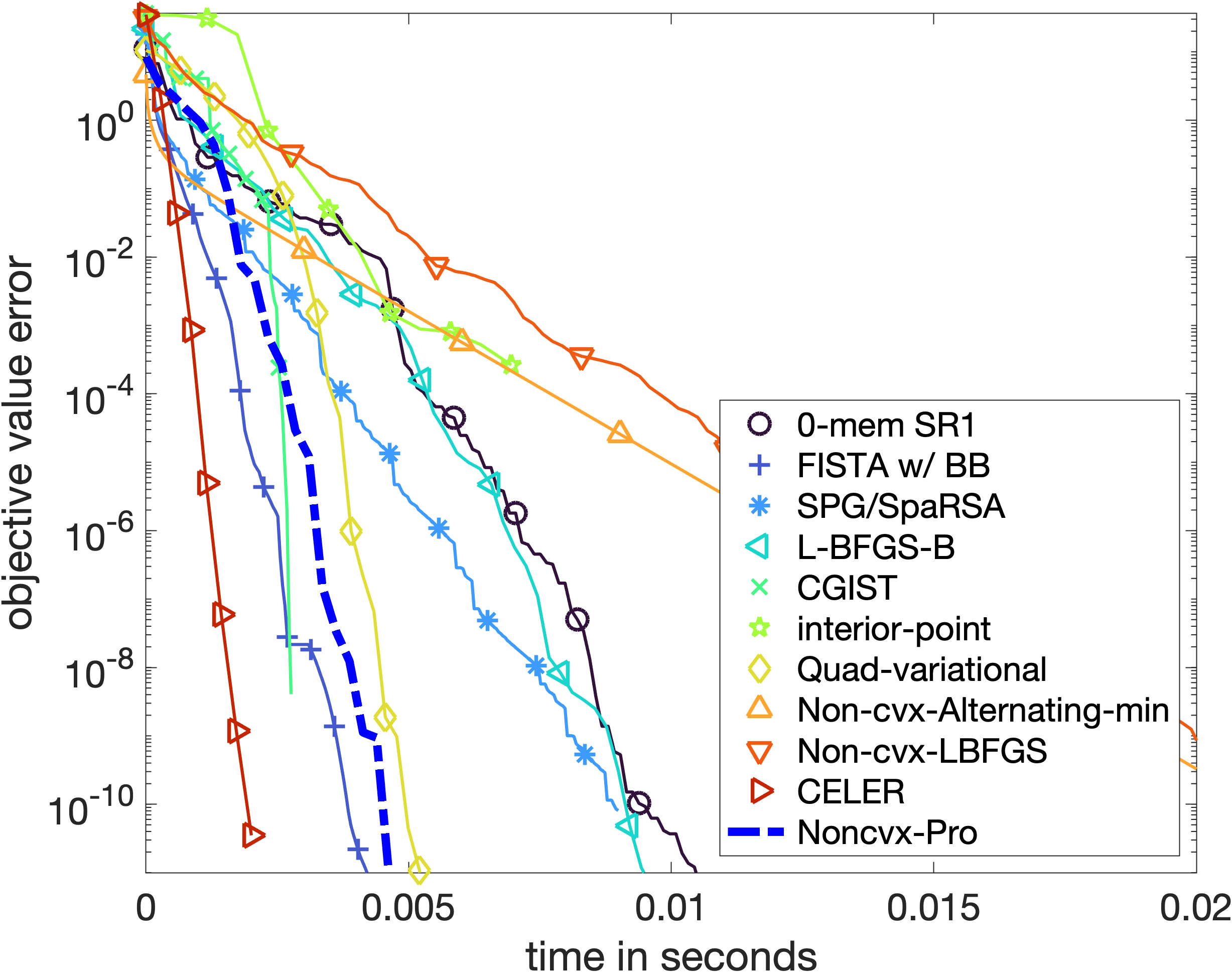}&
\includegraphics[width=0.2\linewidth]{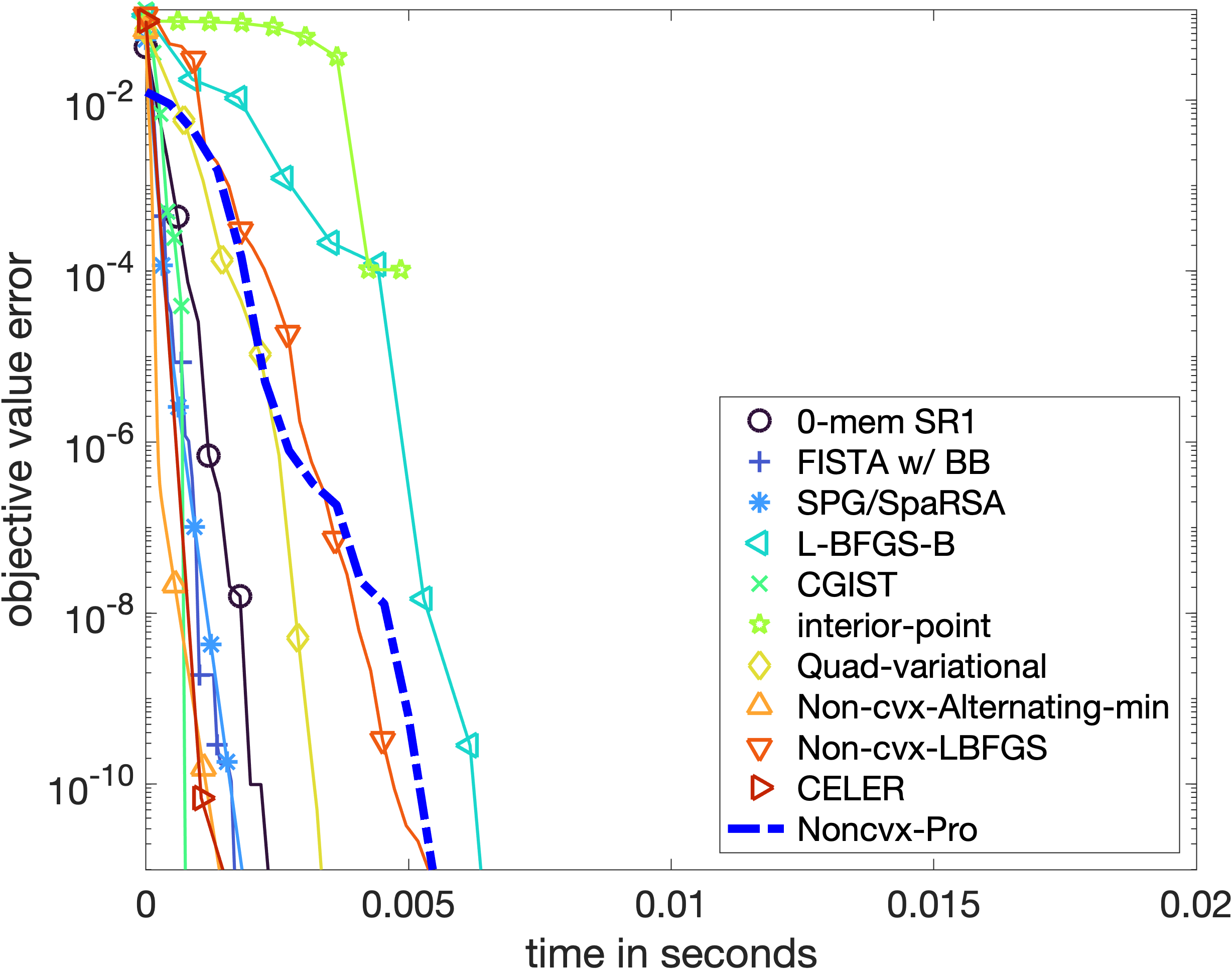}
&
\includegraphics[width=0.2\linewidth]{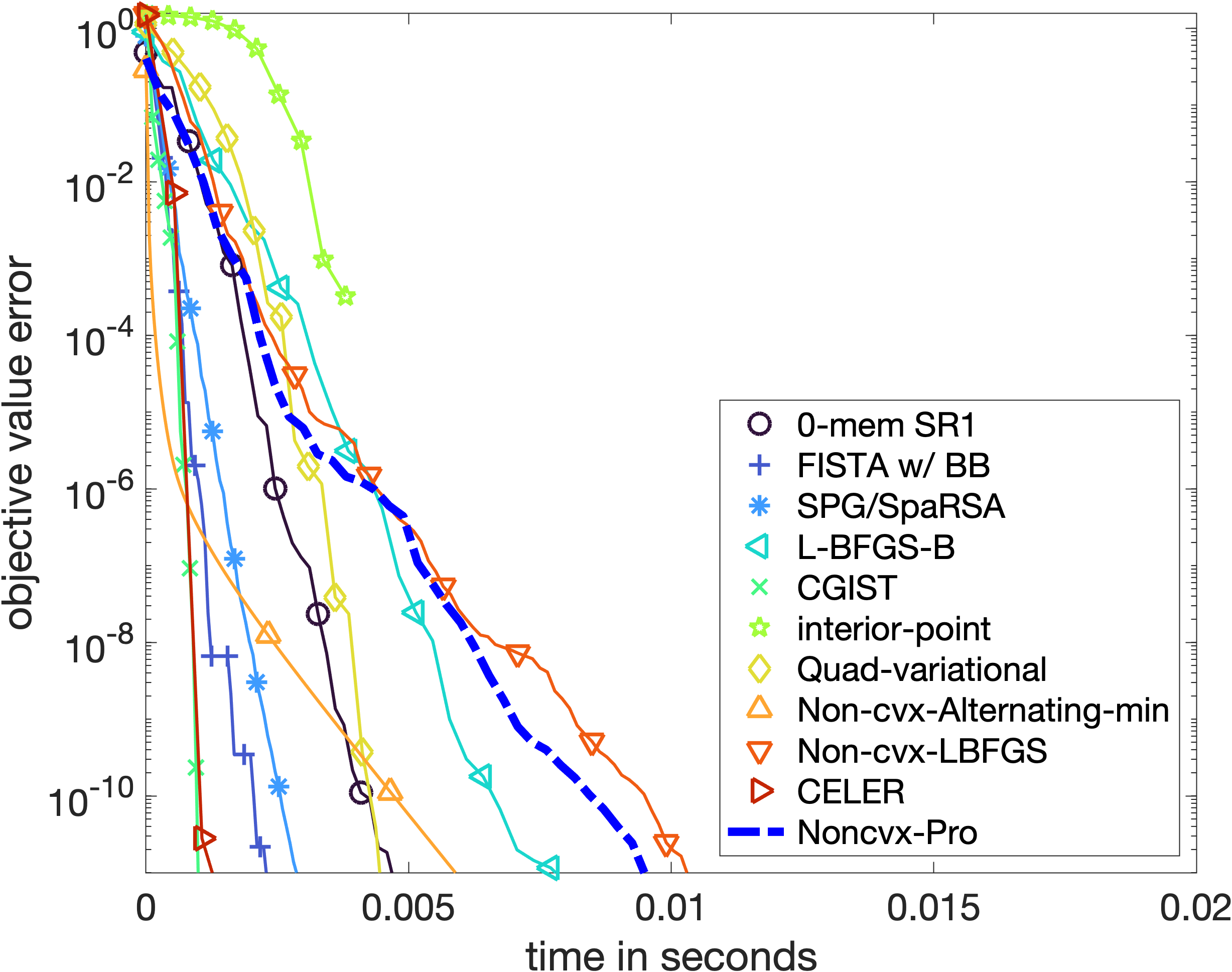}
&
\includegraphics[width=0.2\linewidth]{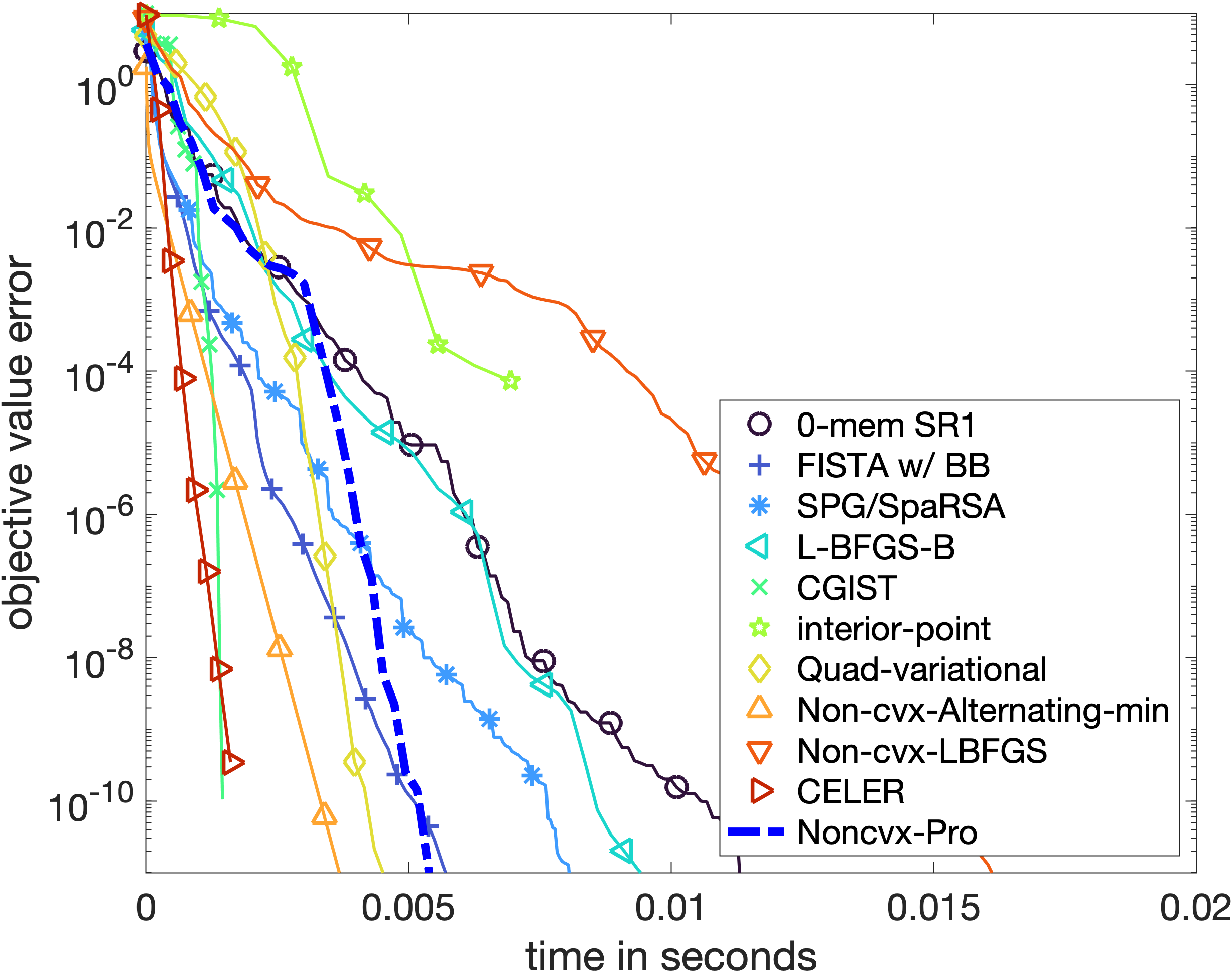}
\\
\rotatebox{90}{\hspace{2em}\footnotesize cadata}
\rotatebox{90}{\hspace{1em} \footnotesize(20640,8)}&
\includegraphics[width=0.2\linewidth]{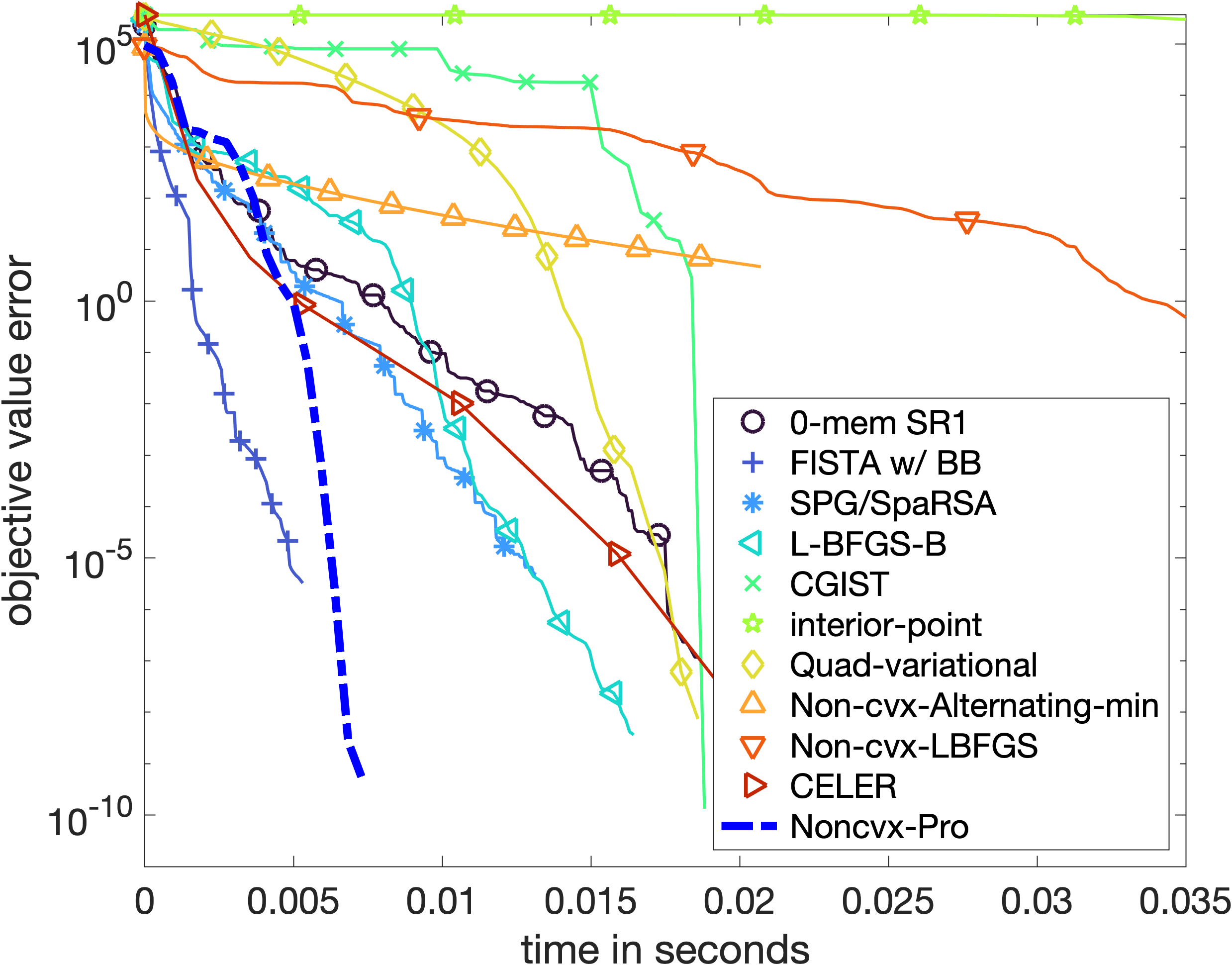}&
\includegraphics[width=0.2\linewidth]{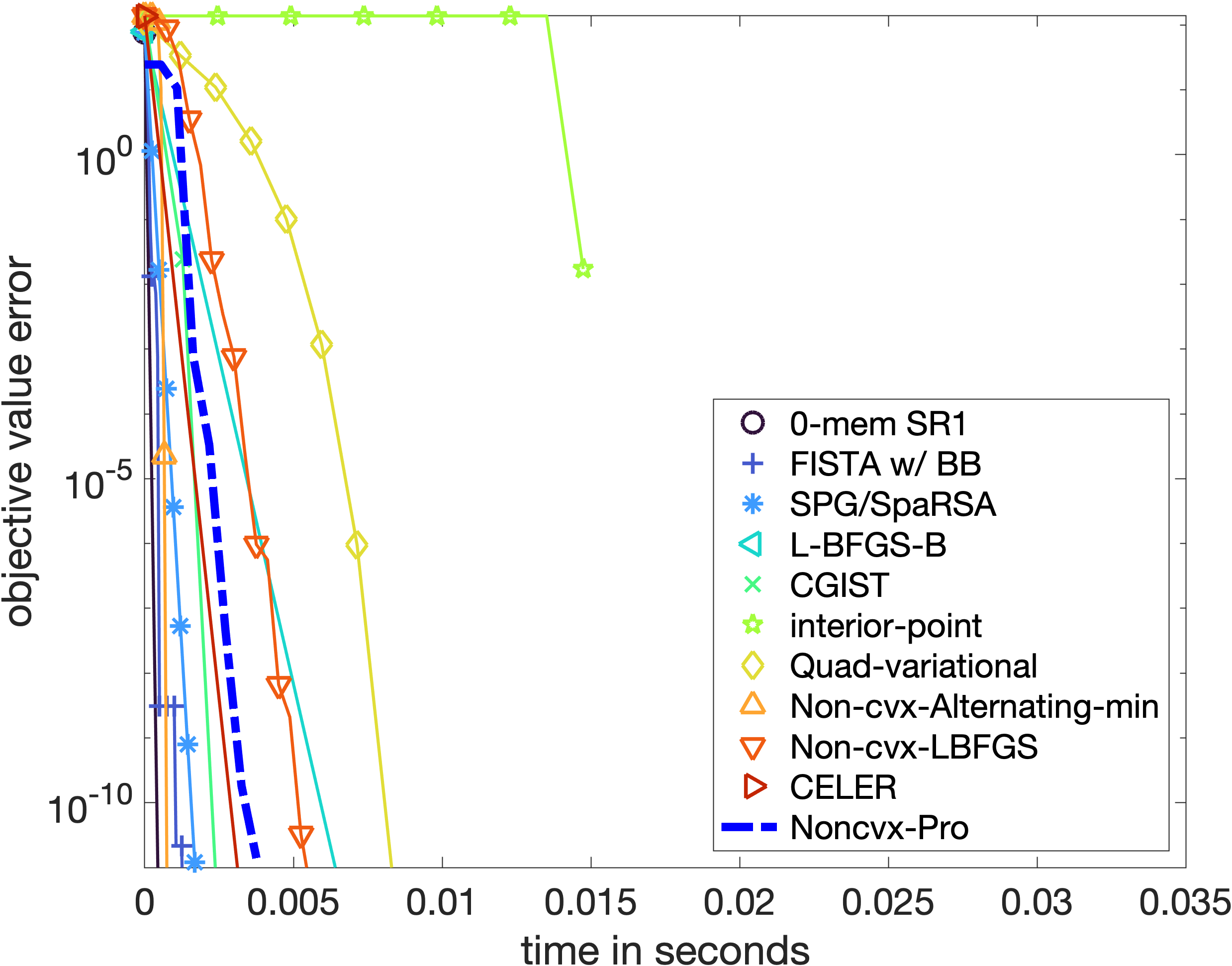}
&
\includegraphics[width=0.2\linewidth]{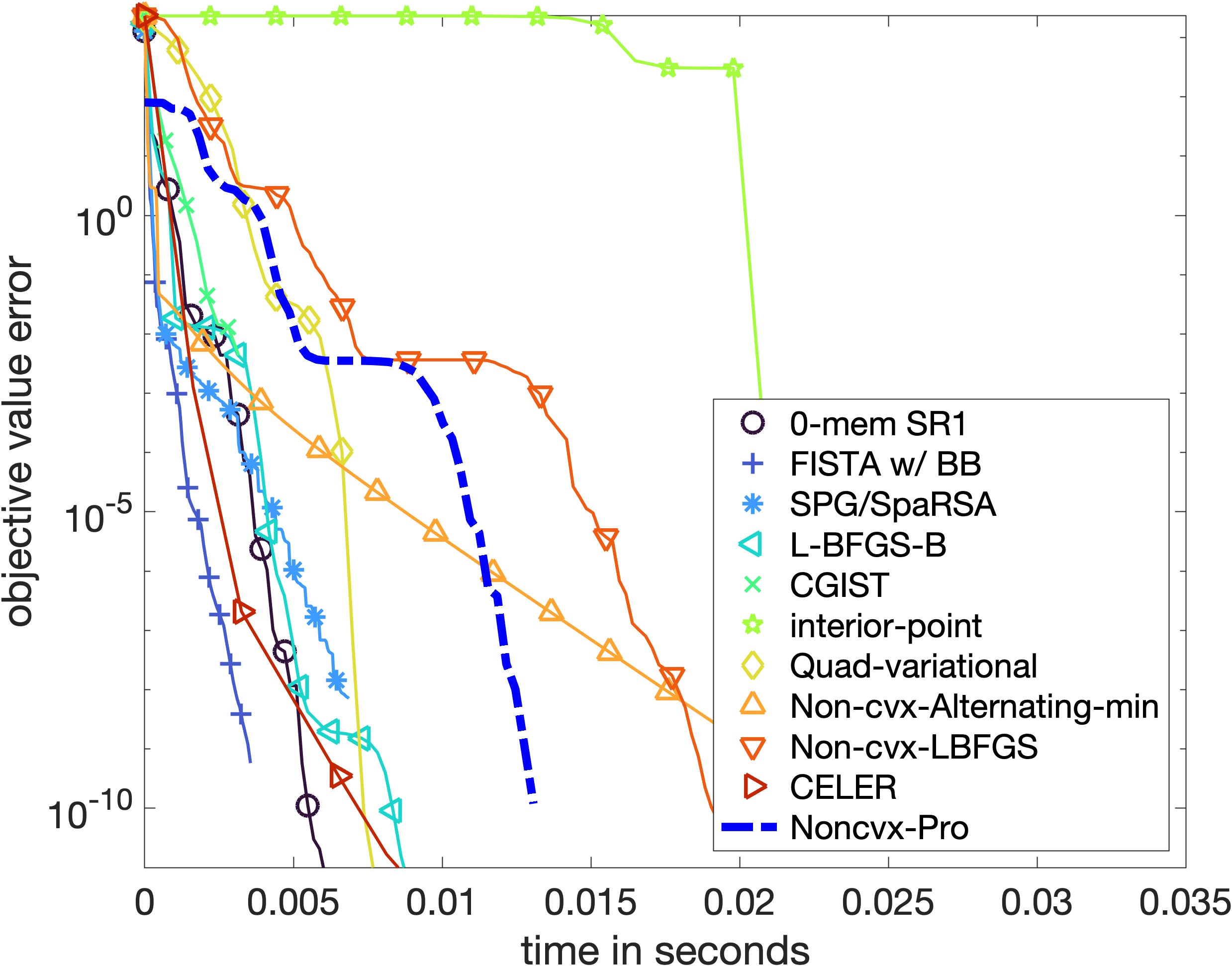}
&
\includegraphics[width=0.2\linewidth]{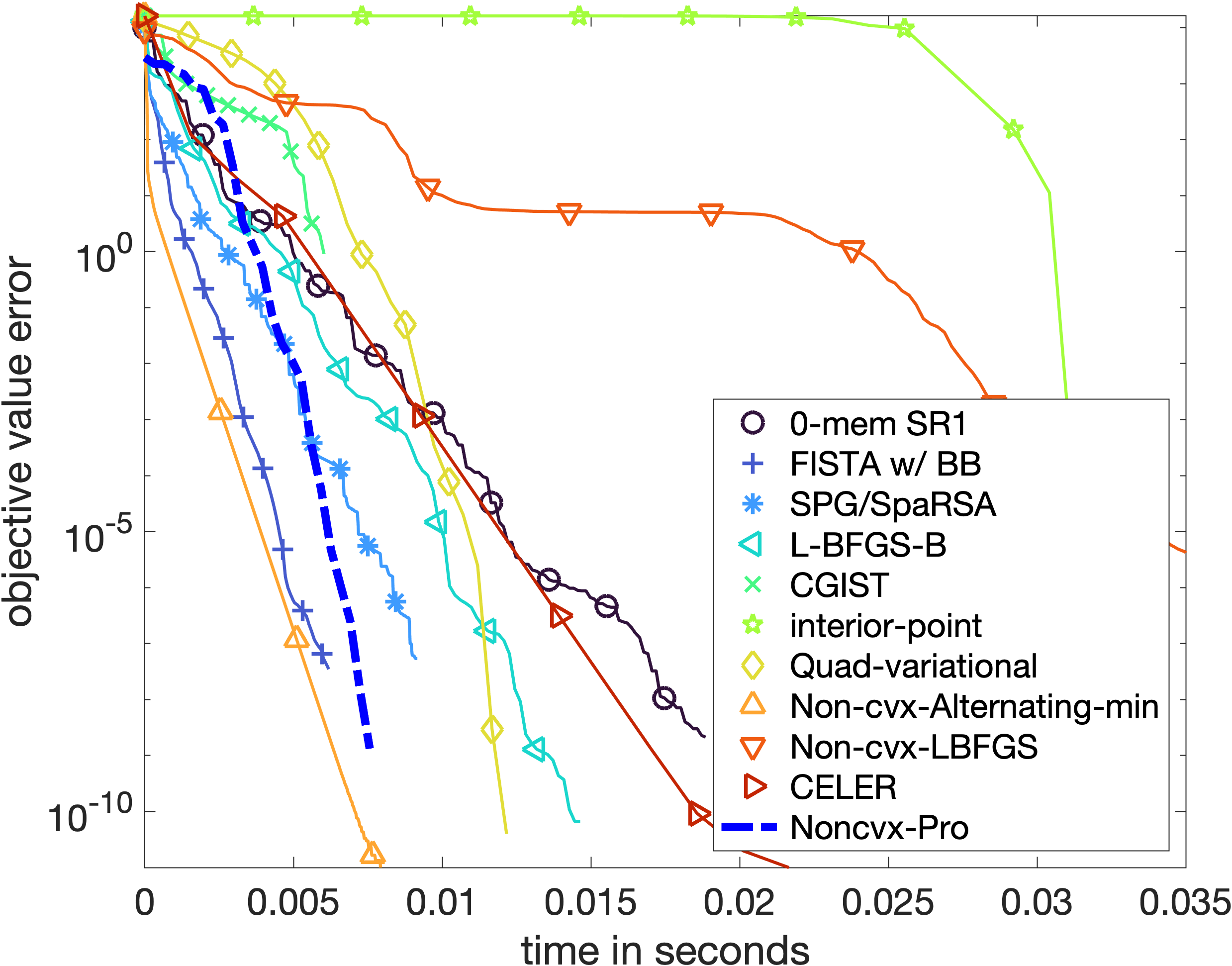}
\\
\rotatebox{90}{\hspace{2em}\footnotesize a8a}
\rotatebox{90}{\hspace{1em} \footnotesize(22696,122)}&
\includegraphics[width=0.2\linewidth]{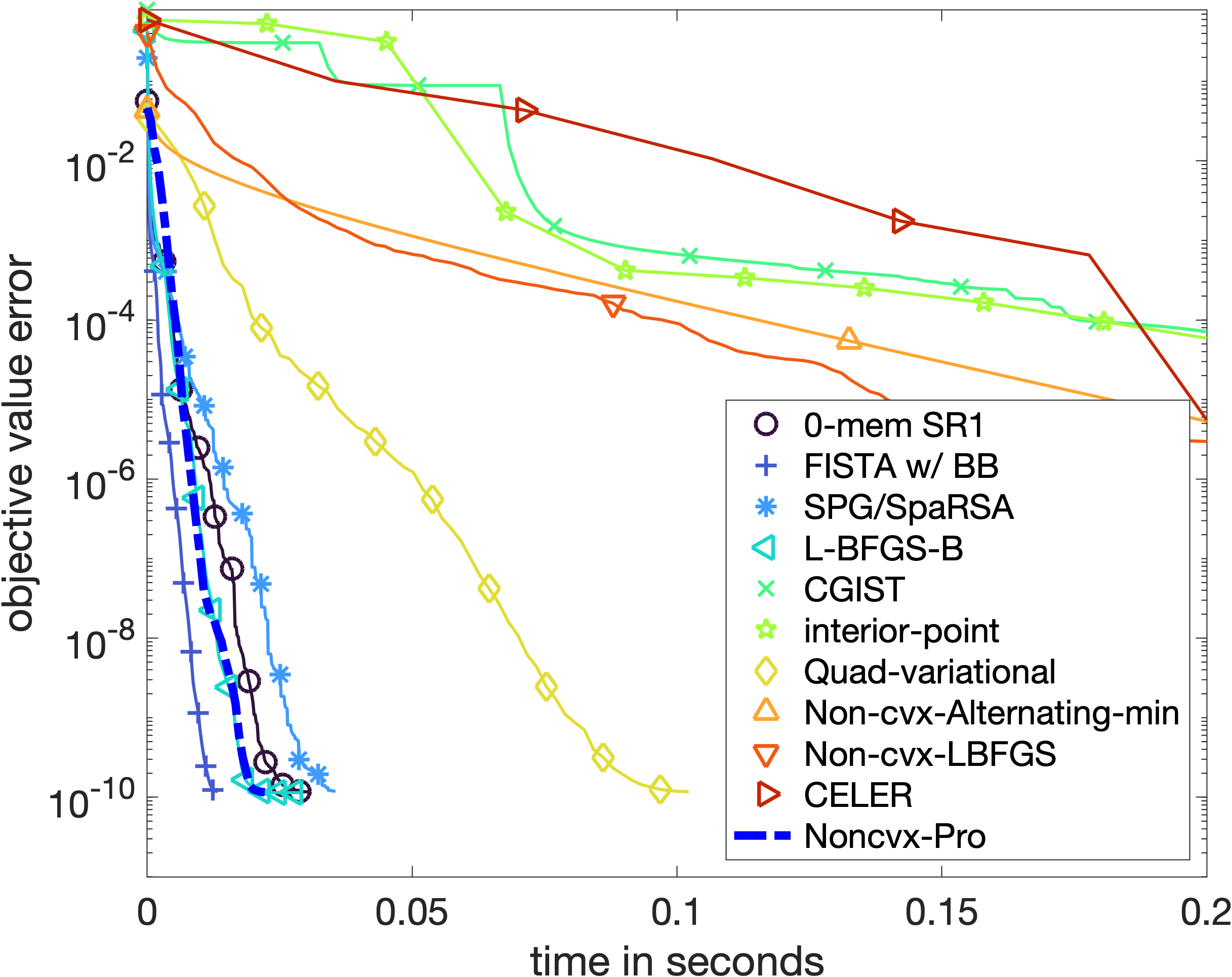}&
\includegraphics[width=0.2\linewidth]{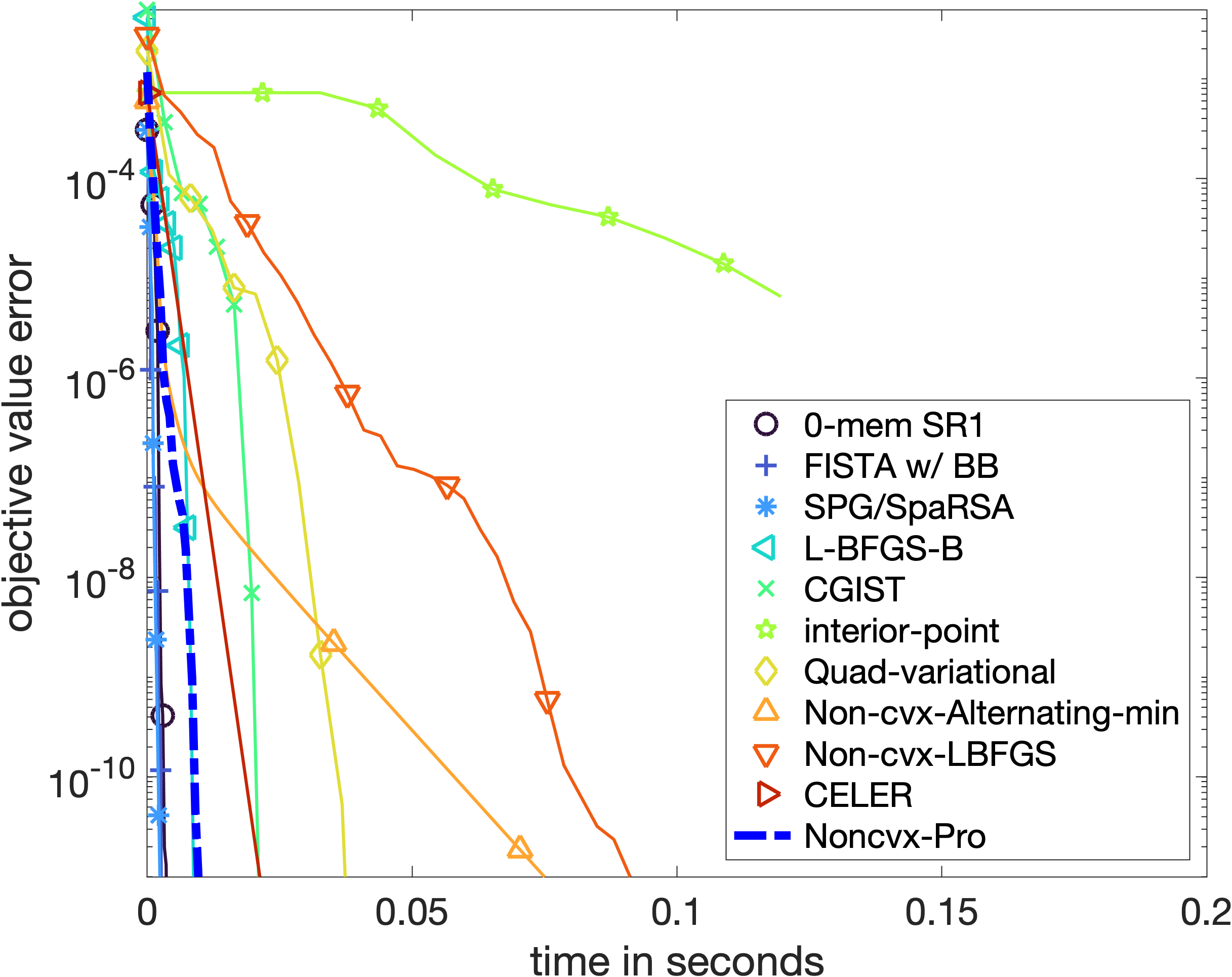}
&
\includegraphics[width=0.2\linewidth]{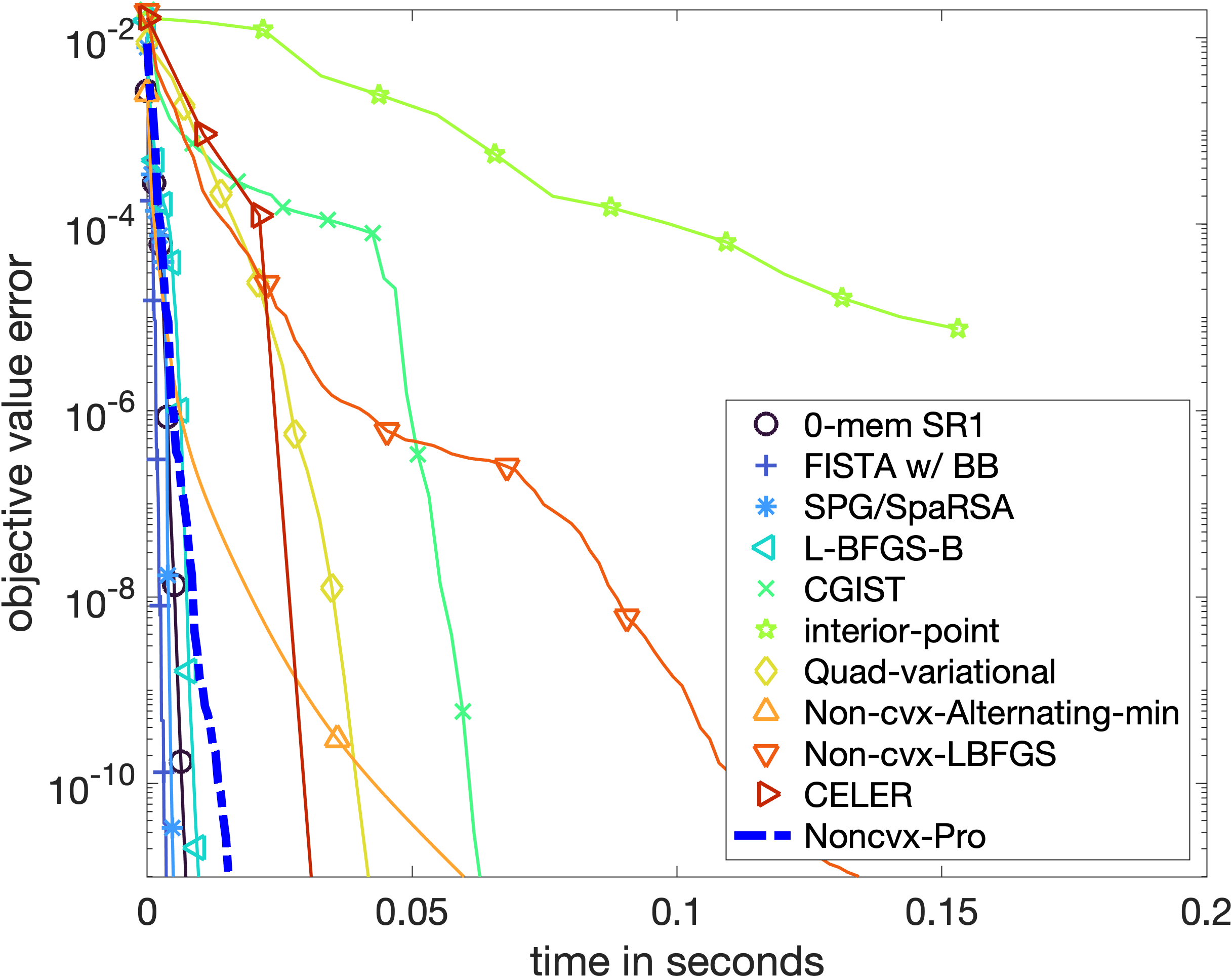}
&
\includegraphics[width=0.2\linewidth]{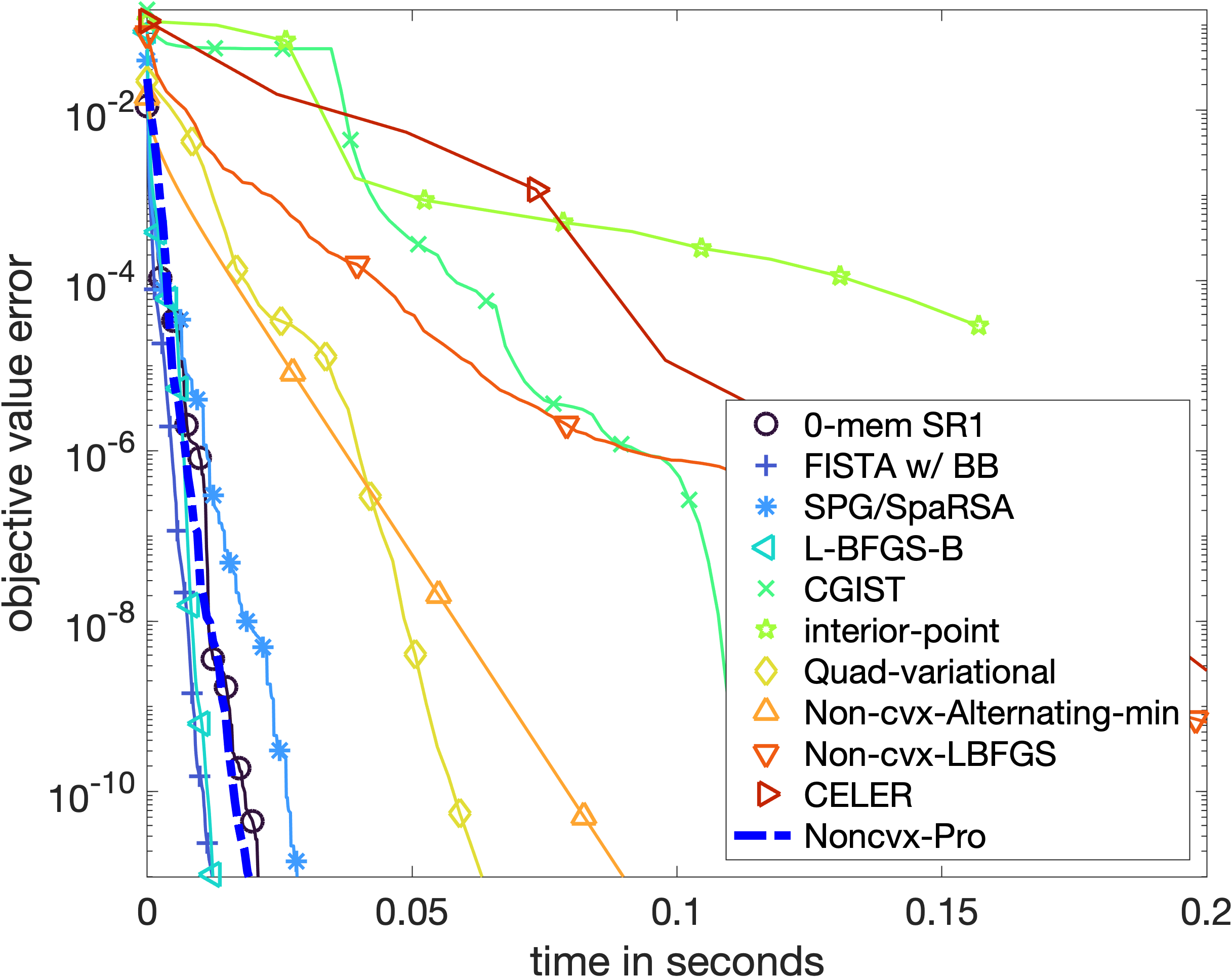}
\\
\rotatebox{90}{\hspace{2em}\footnotesize w8a}
\rotatebox{90}{\hspace{1em} \footnotesize(49749,300)}
&\includegraphics[width=0.2\linewidth]{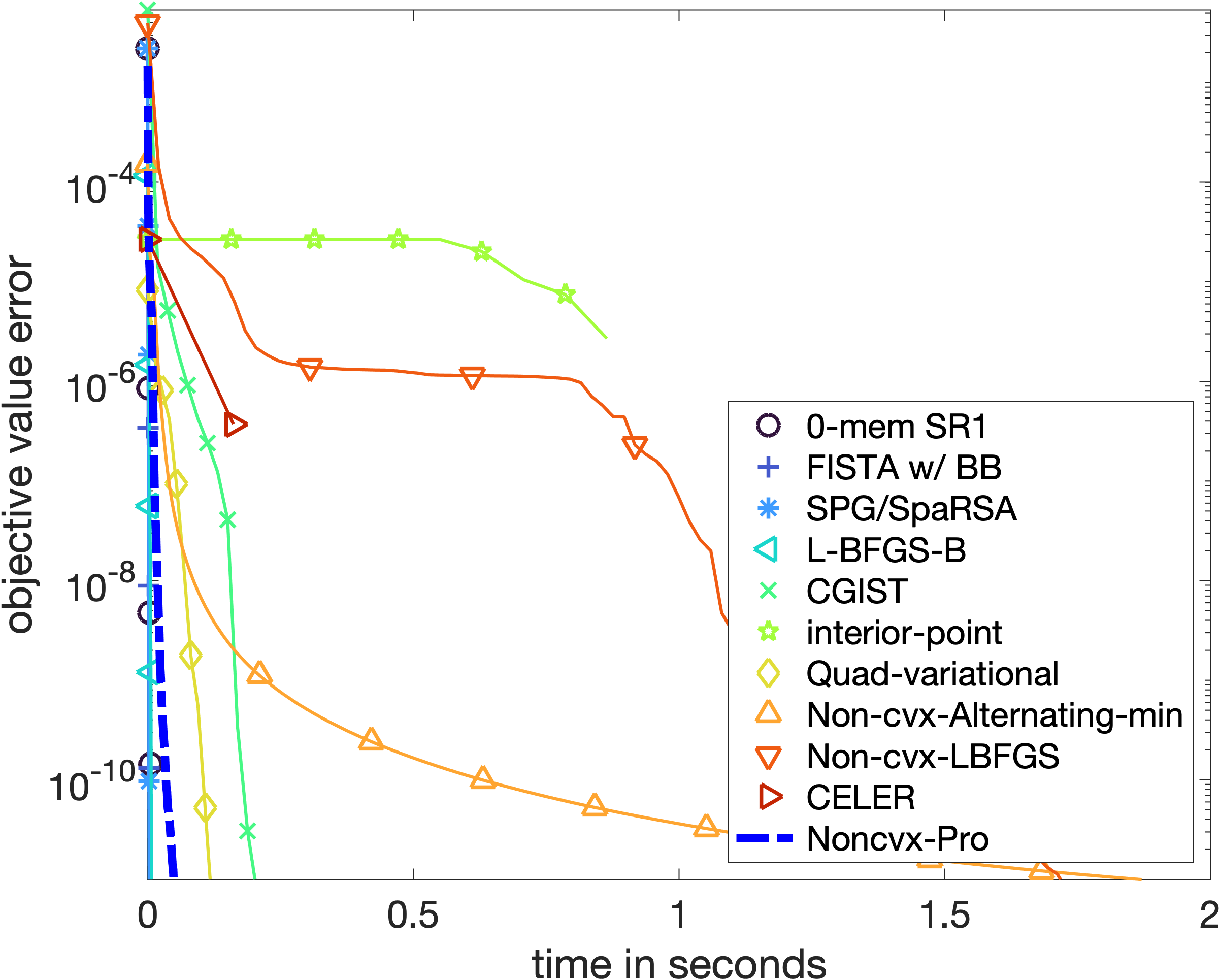}
&
\includegraphics[width=0.2\linewidth]{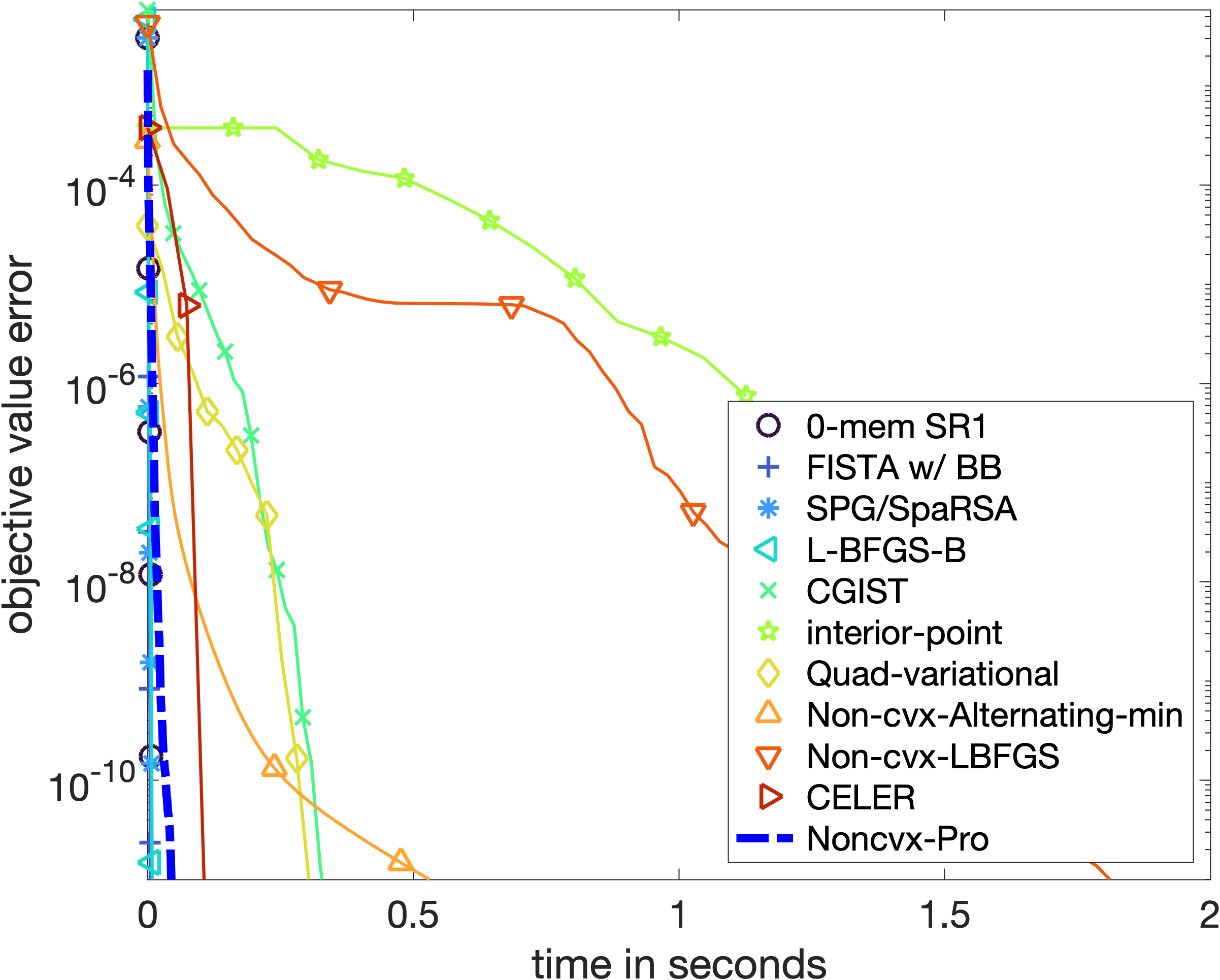}&
\includegraphics[width=0.2\linewidth]{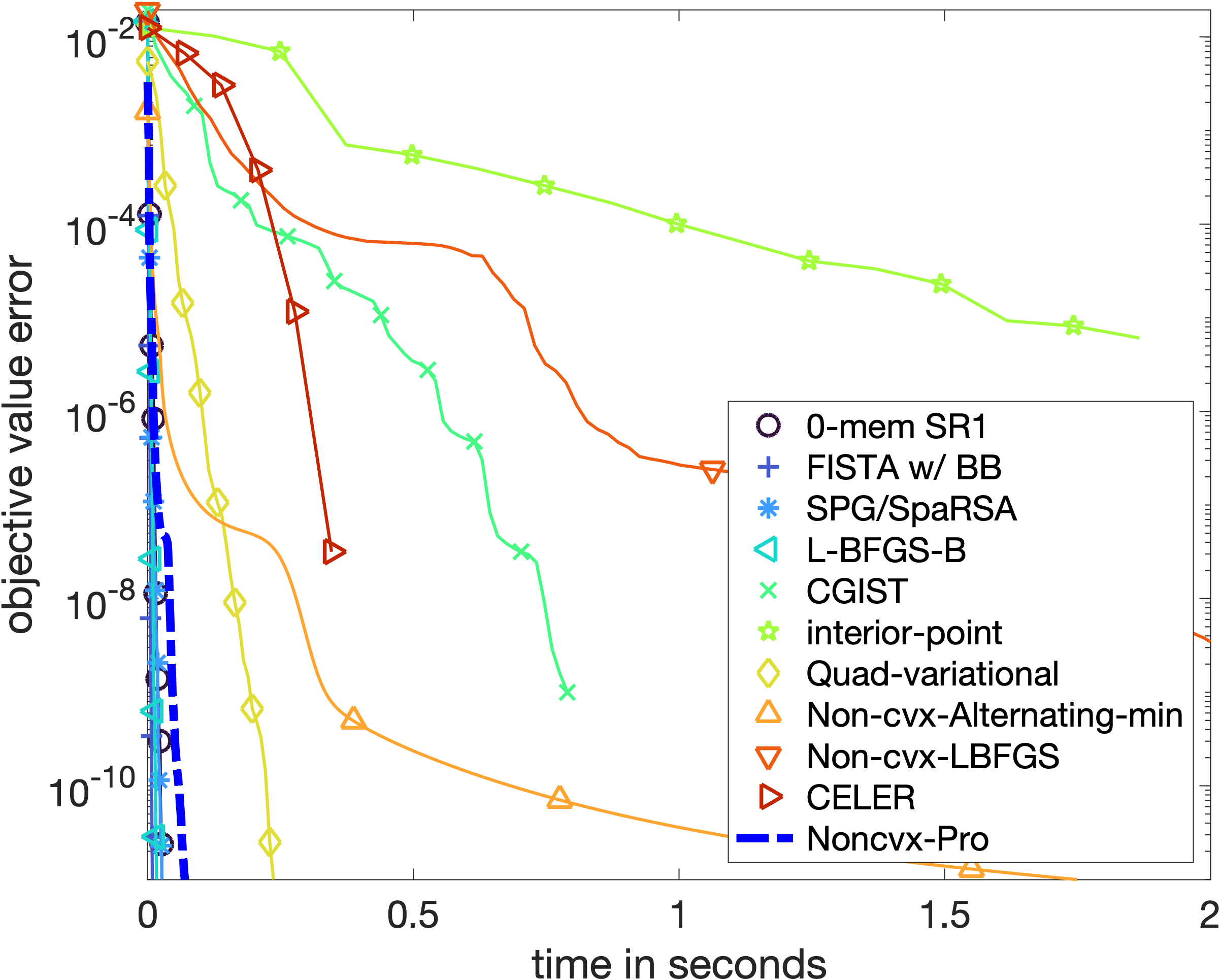}&
\includegraphics[width=0.2\linewidth]{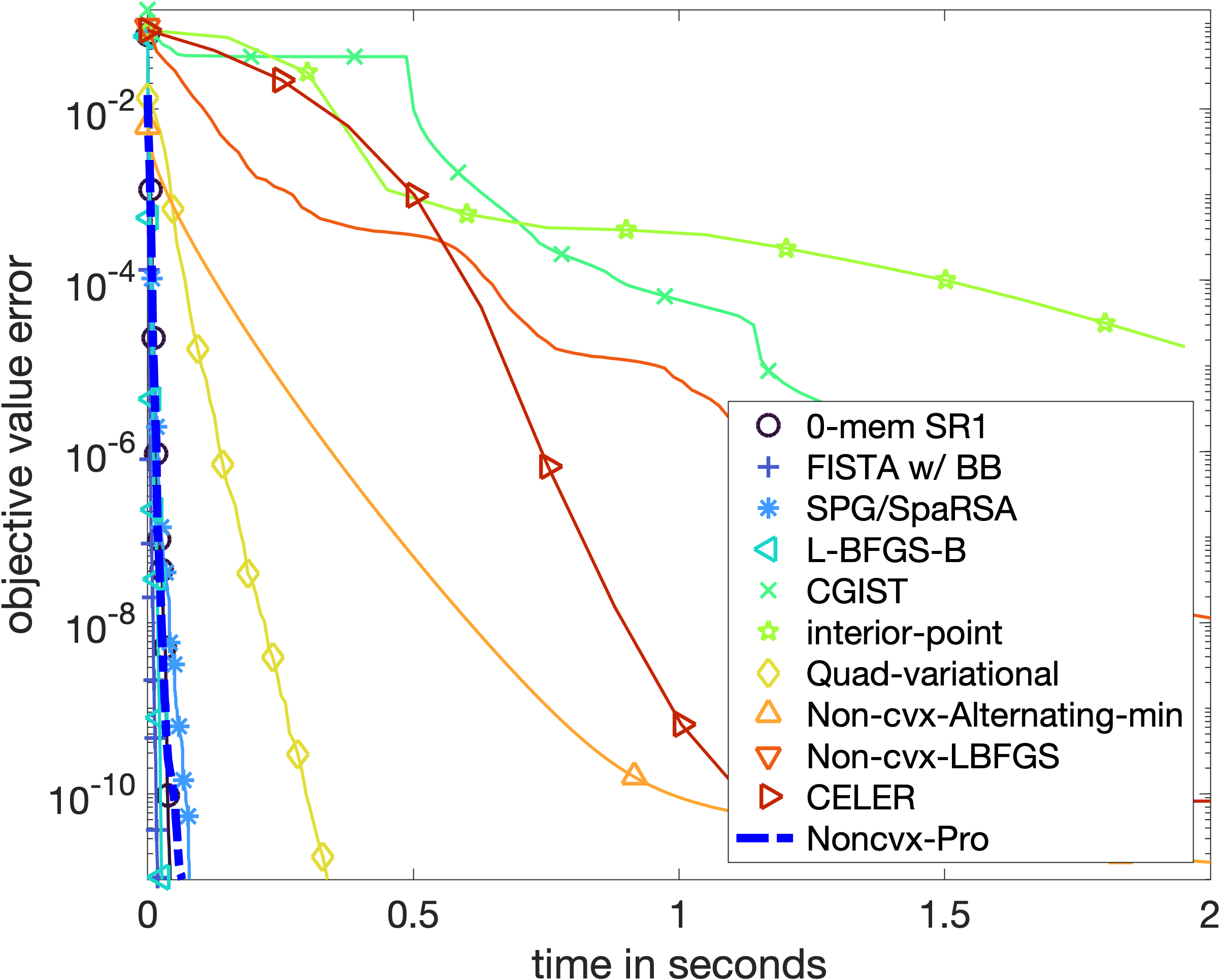}
\\
\rotatebox{90}{\hspace{2em}\footnotesize leukemia}
\rotatebox{90}{\hspace{1em} \footnotesize(38,7129)}&
\includegraphics[width=0.2\linewidth]{figures/Lasso_libsvm/libsvm-leukemia-1.png}
&
\includegraphics[width=0.2\linewidth]{figures/Lasso_libsvm/libsvm-leukemia2.png}&
\includegraphics[width=0.2\linewidth]{figures/Lasso_libsvm/libsvm-leukemia10.png}&
\includegraphics[width=0.2\linewidth]{figures/Lasso_libsvm/libsvm-leukemia50.png}
\\
\rotatebox{90}{\hspace{2em}\footnotesize mnist}
\rotatebox{90}{\hspace{1em} \footnotesize(60000,683)}&\includegraphics[width=0.2\linewidth]{figures/Lasso_libsvm/libsvm-mnist-1.png} &
\includegraphics[width=0.2\linewidth]{figures/Lasso_libsvm/libsvm-mnist2.png}&
\includegraphics[width=0.2\linewidth]{figures/Lasso_libsvm/libsvm-mnist10.png}&
\includegraphics[width=0.2\linewidth]{figures/Lasso_libsvm/libsvm-mnist50.png}

\\
\rotatebox{90}{\hspace{2em}\footnotesize connect-4}
\rotatebox{90}{\hspace{1em} \footnotesize(67557,126)}&\includegraphics[width=0.2\linewidth]{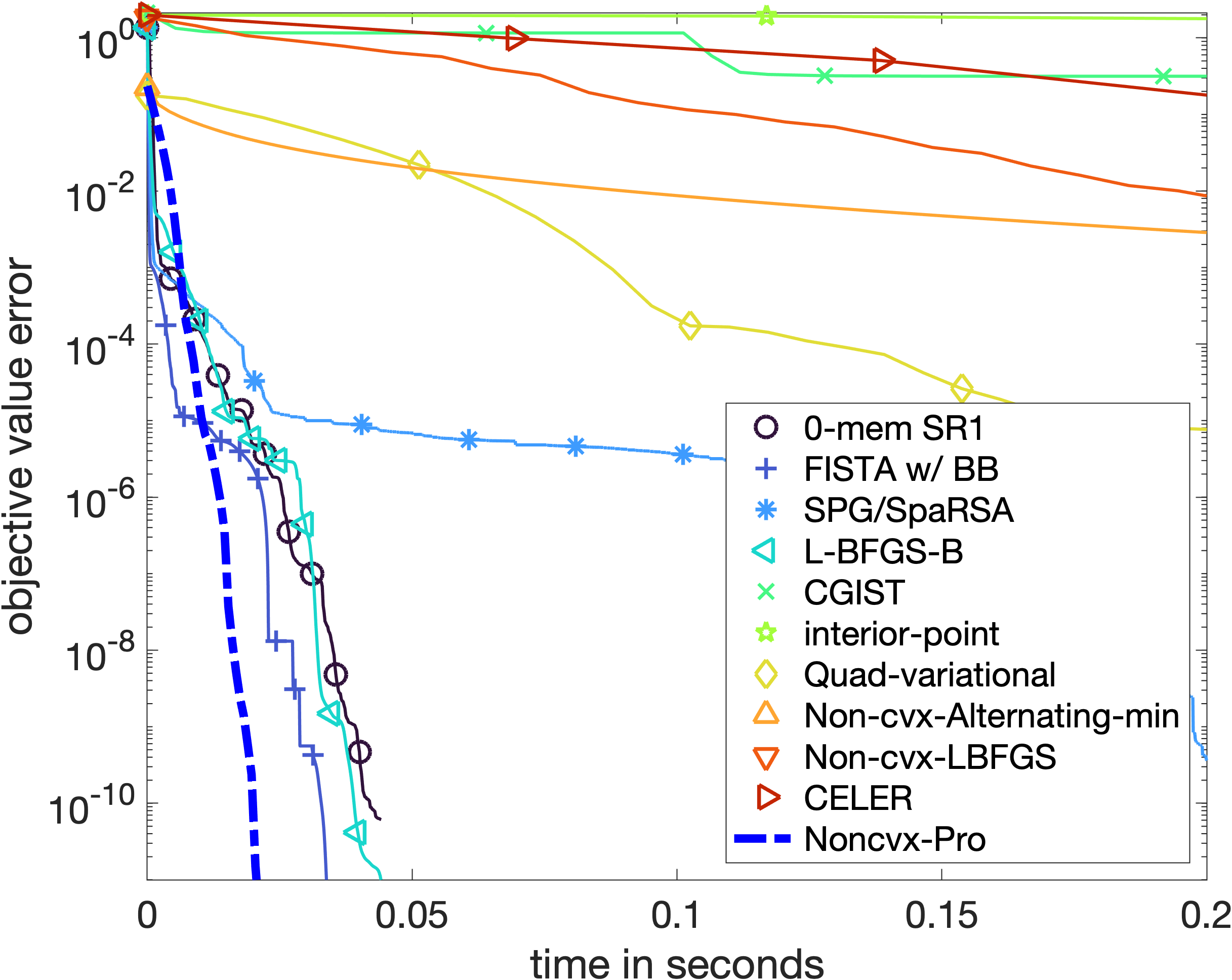} &
\includegraphics[width=0.2\linewidth]{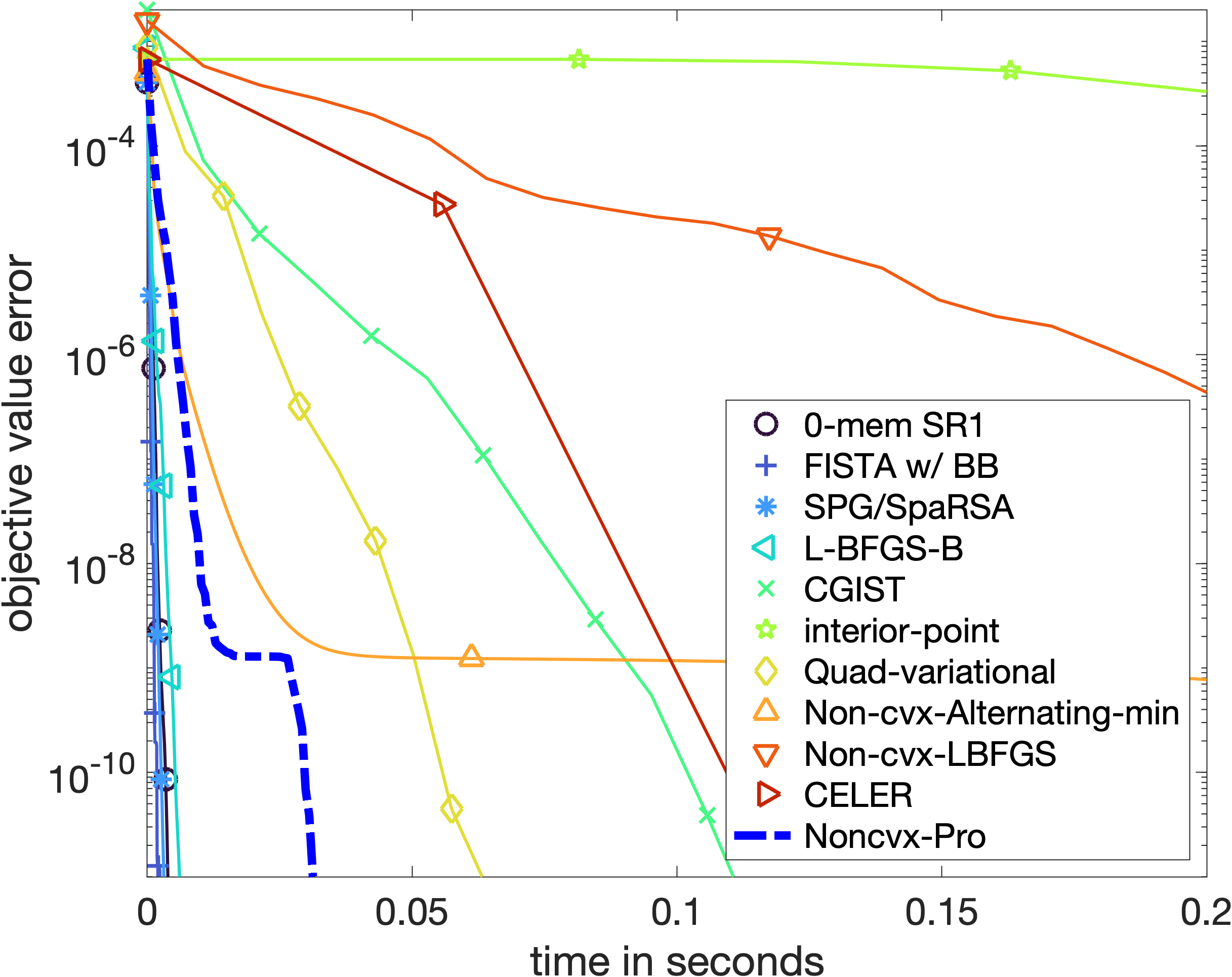}&
\includegraphics[width=0.2\linewidth]{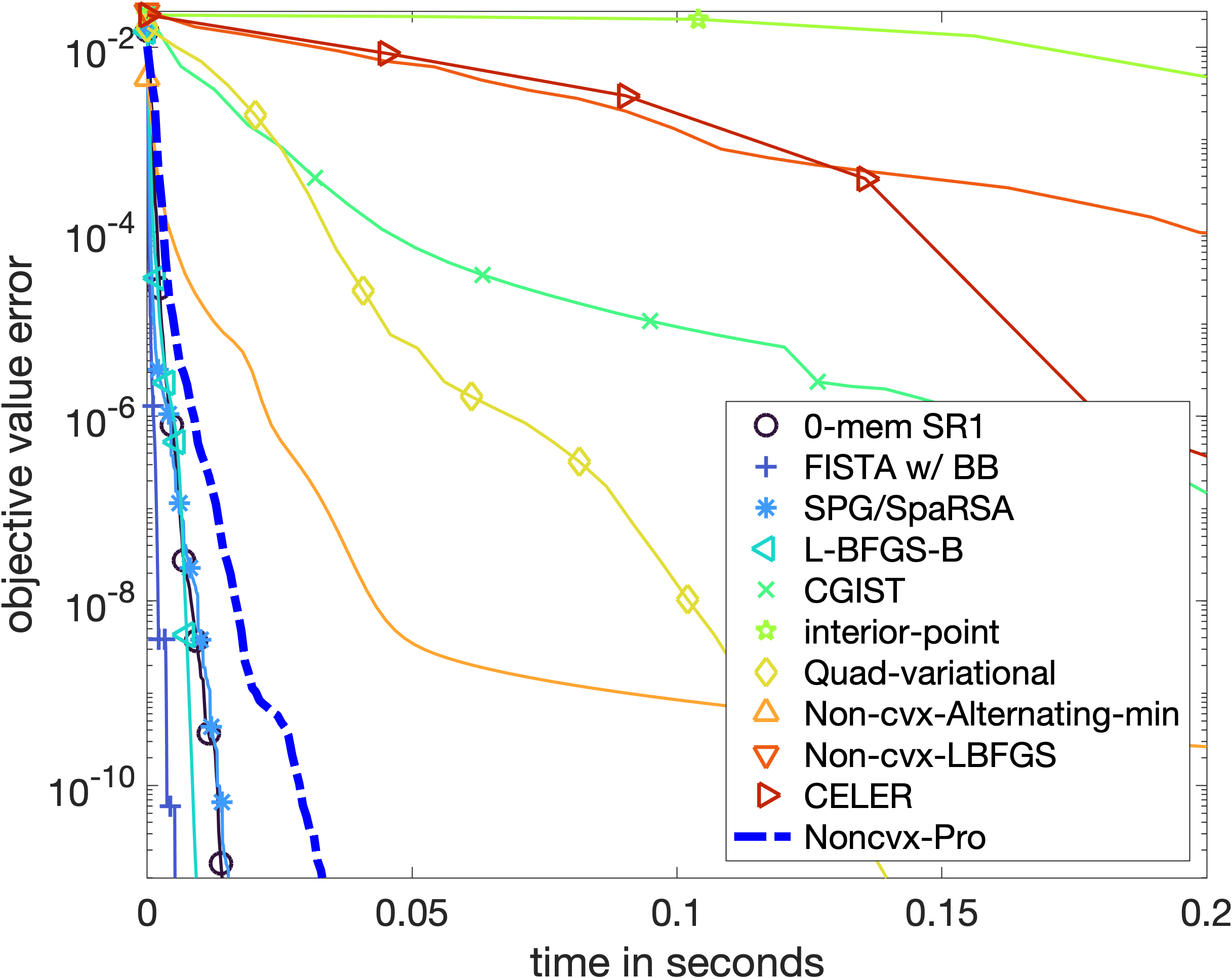}&
\includegraphics[width=0.2\linewidth]{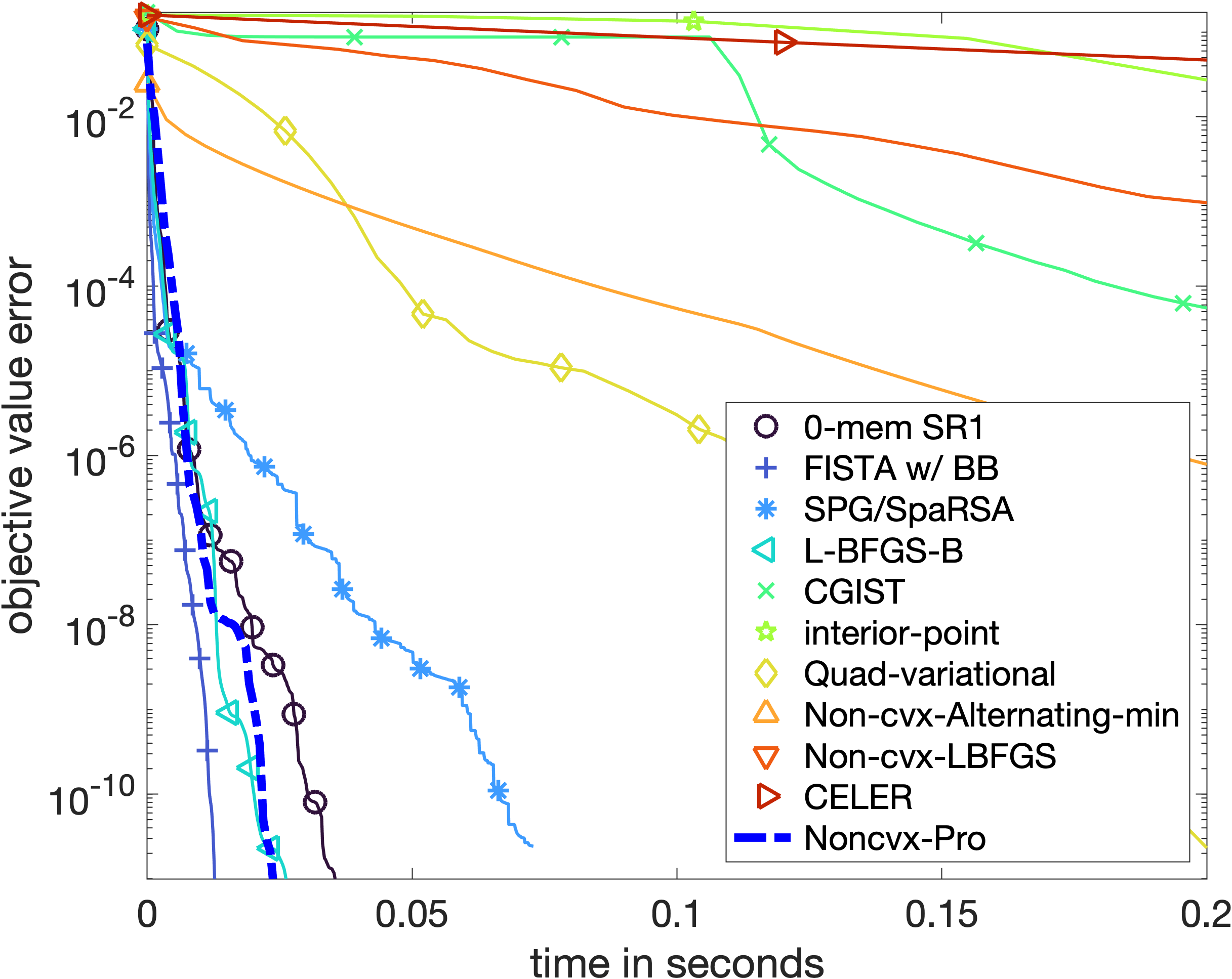}
\\
&$\lambda_*$&$\frac12\lambda_{\max}$&$\frac{1}{10}\lambda_{\max}$&$\frac{1}{50}\lambda_{\max}$

\end{tabular}
\end{center}
\caption{Comparisons of Lasso with different regularisation parameters  on datasets from Libsvm. The first column shows the optimal regularisation parameter $\lambda_*$ found by cross validation. The second, third and fourth columns correspond to different fractions of $\lambda_{\max}= \norm{X^\top y}_\infty$ which is the parameter for which the Lasso solution is identically zero. The smaller this fraction, the less sparse the solution. \label{fig:libsvm_more}
}
\end{figure}

\paragraph{Group Lasso}
In Figure \ref{fig:eeg_more}, we show additional numerics  for the multitask Lasso setup described in Section \ref{sec:grouplasso_experiments}. We test on two synthetic datasets of size $(m,n,q)= (300, 1000, 100)$ with 5 relevant features and $(m,n,q) = (50,1200,20)$ with 10 relevant features. The data matrix $X$ has entries drawn from a normal distribution. We also test on a MEG/EEG dataset with $(m,n,q) = (305, 22494,85)$ from the MNE repository \url{https://mne.tools/0.11/manual/datasets_index.html}. We display convergence plots for different regularisation parameters.

\paragraph{Trace norm} 
In Figure \ref{fig:synthetic_mtl} we show additional numerics for the multifeature learning setup described in Section \ref{sec:nuclearnorm_numerics}. The data matrices $X_t$ has entries drawn uniformly at random from $[0,1]$.  We consider different number of tasks $T$ tasks, $n=30$ features and $m=10,000$ samples in total (the samples are split at random across the different tasks). 

\begin{figure}
\begin{center}
\begin{tabular}{ccccc}
\rotatebox{90}{\hspace{.5em} \footnotesize(300,1000,100)} &
\includegraphics[width=0.2\linewidth]{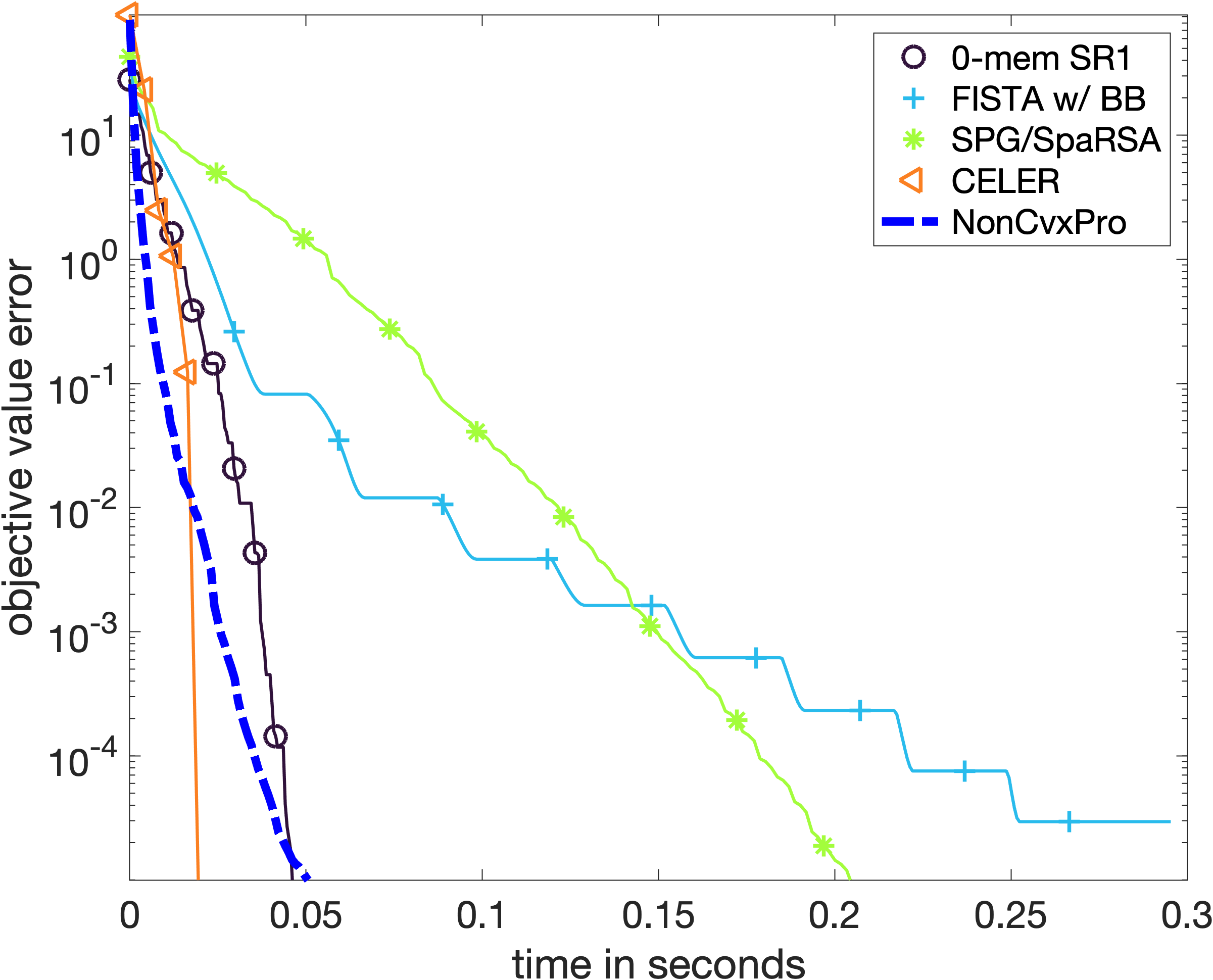}&
\includegraphics[width=0.2\linewidth]{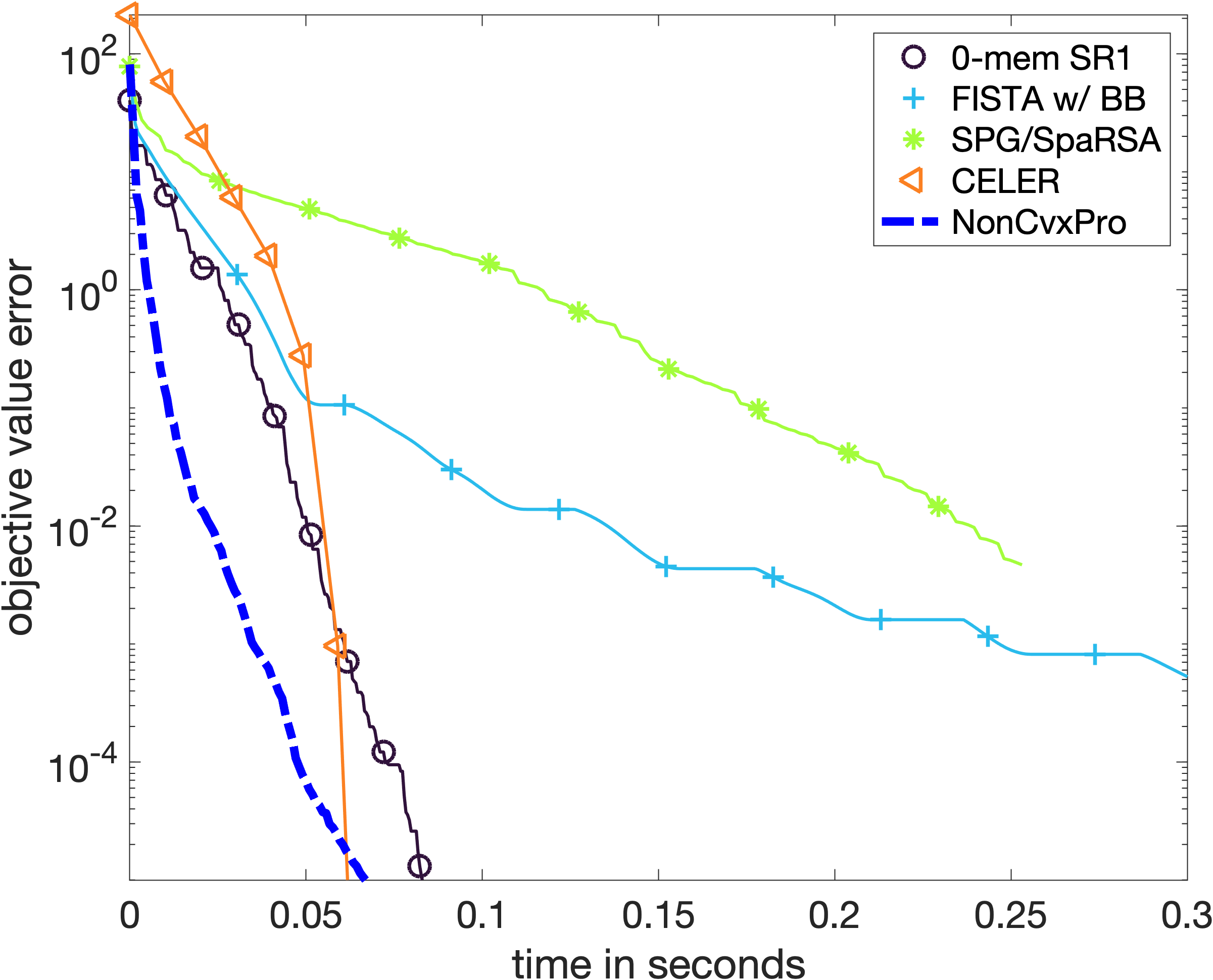}&
\includegraphics[width=0.2\linewidth]{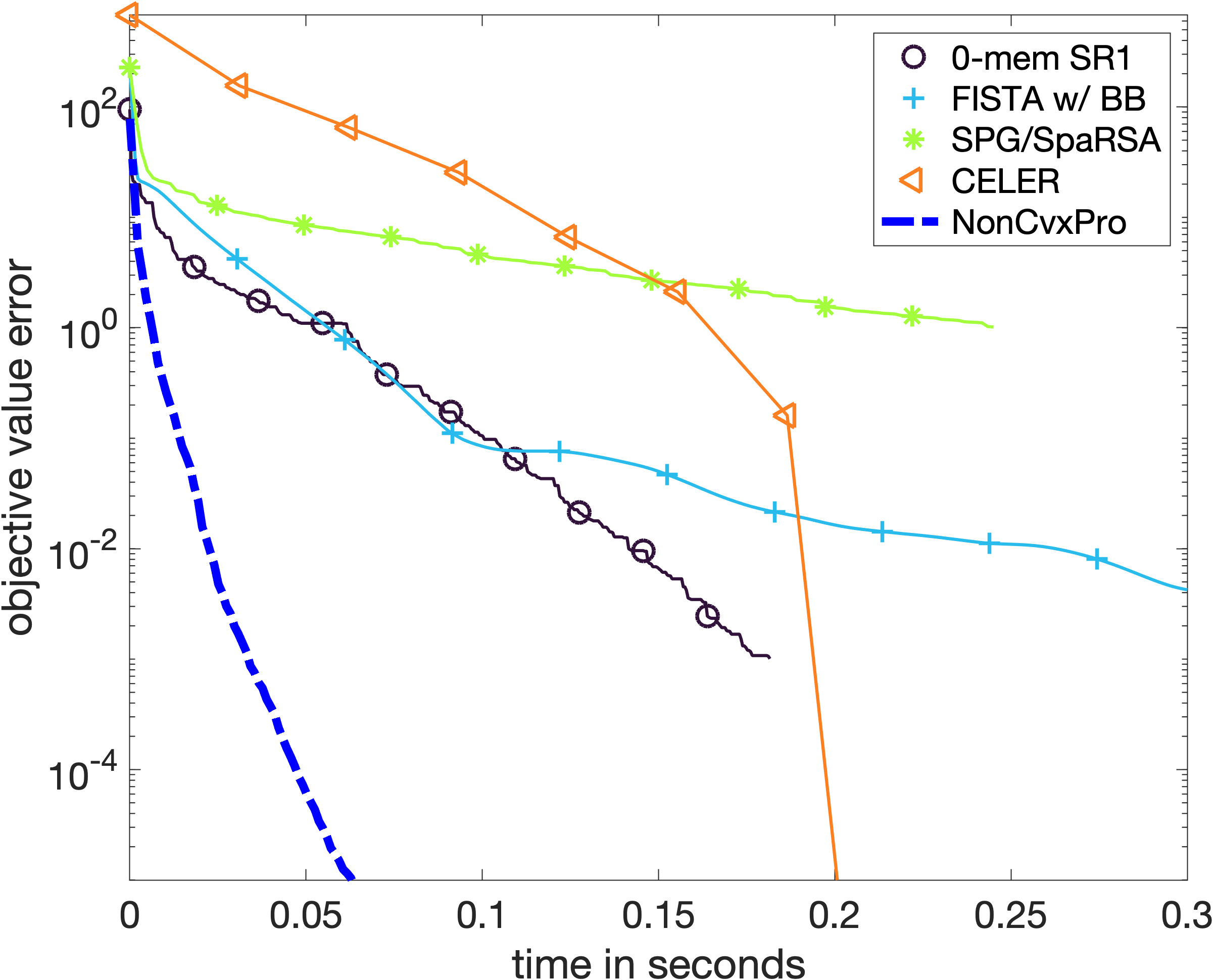}&
\includegraphics[width=0.2\linewidth]{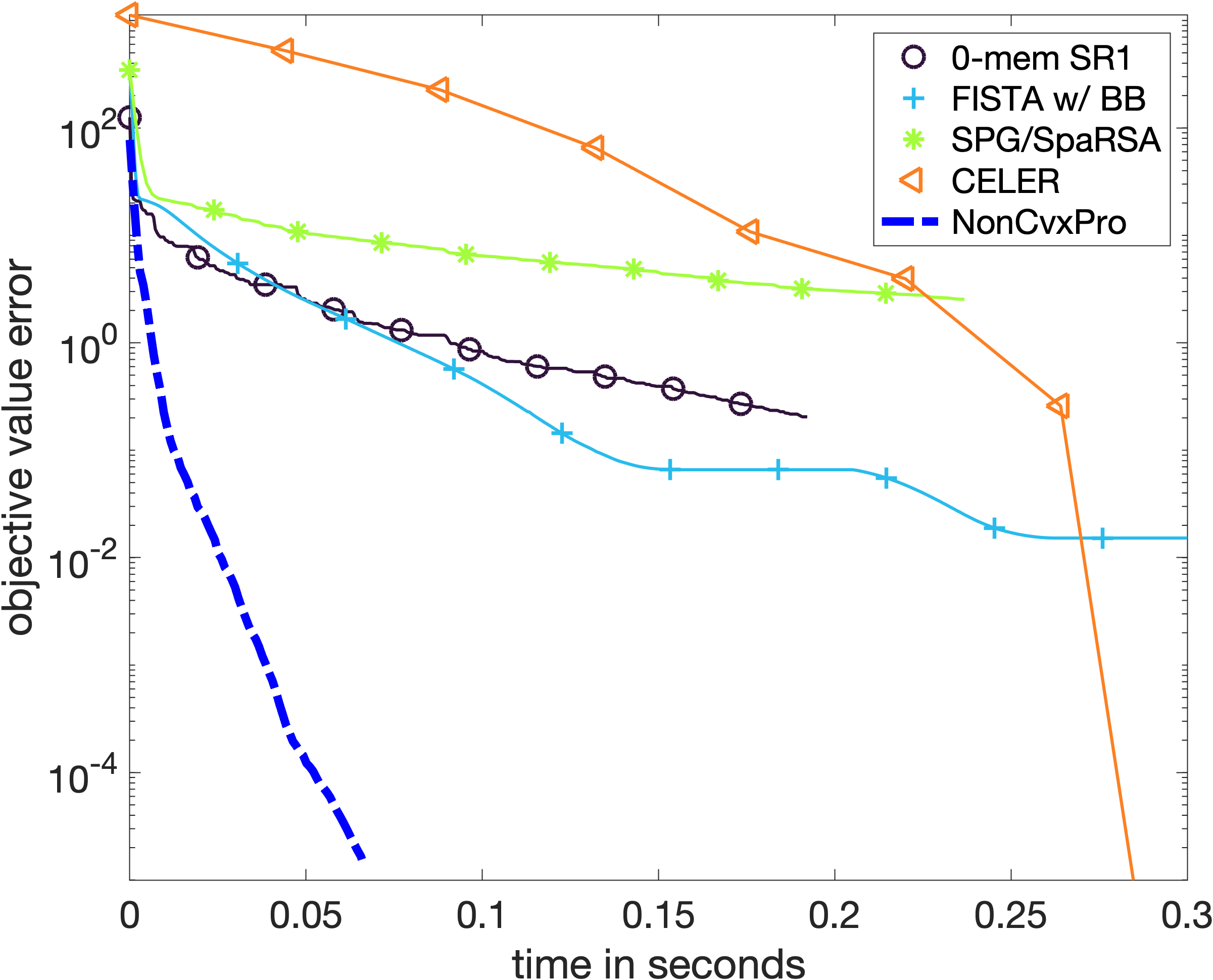}\\
\rotatebox{90}{\hspace{.5em} \footnotesize (50,1200,20)} &
\includegraphics[width=0.2\linewidth]{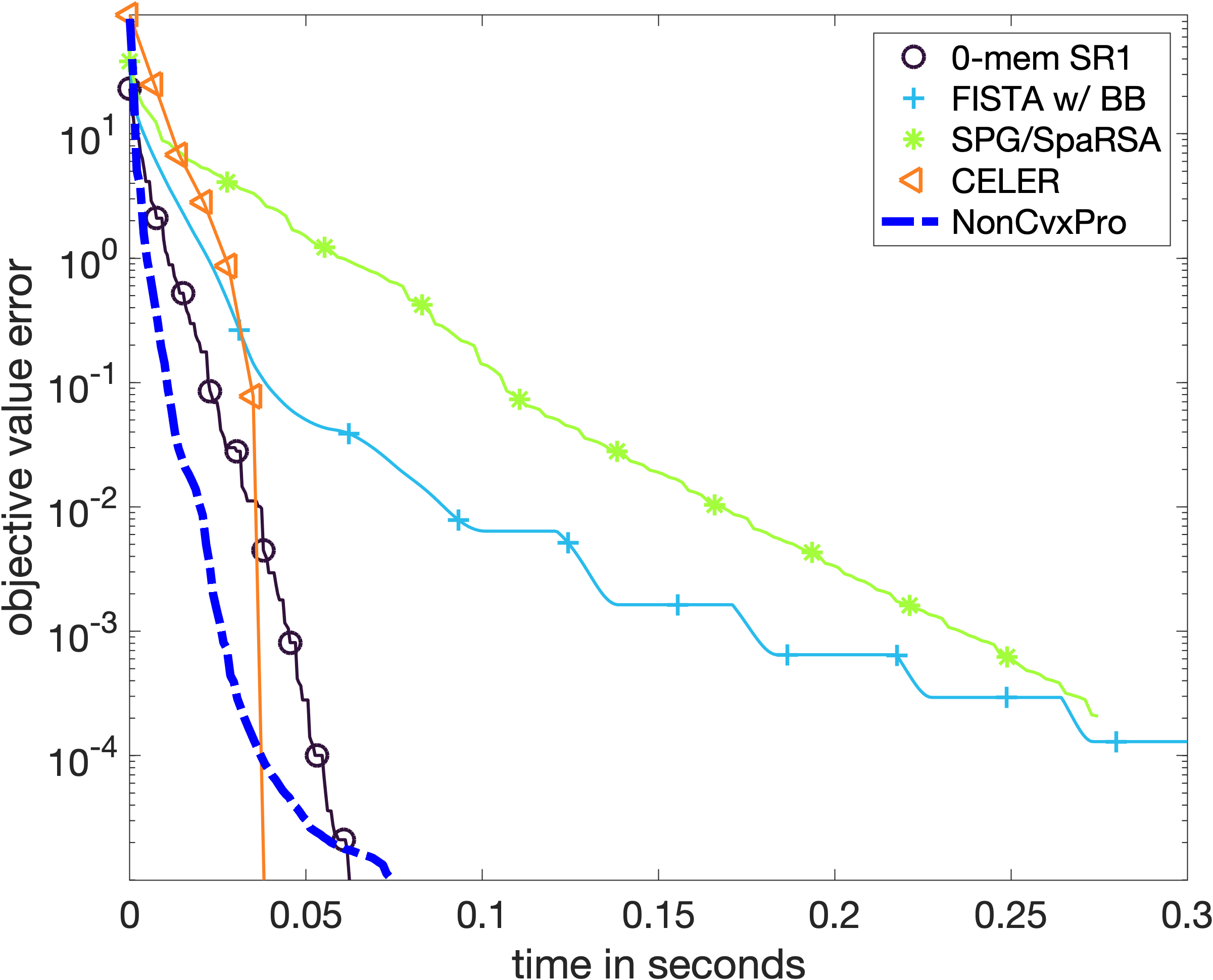}&
\includegraphics[width=0.2\linewidth]{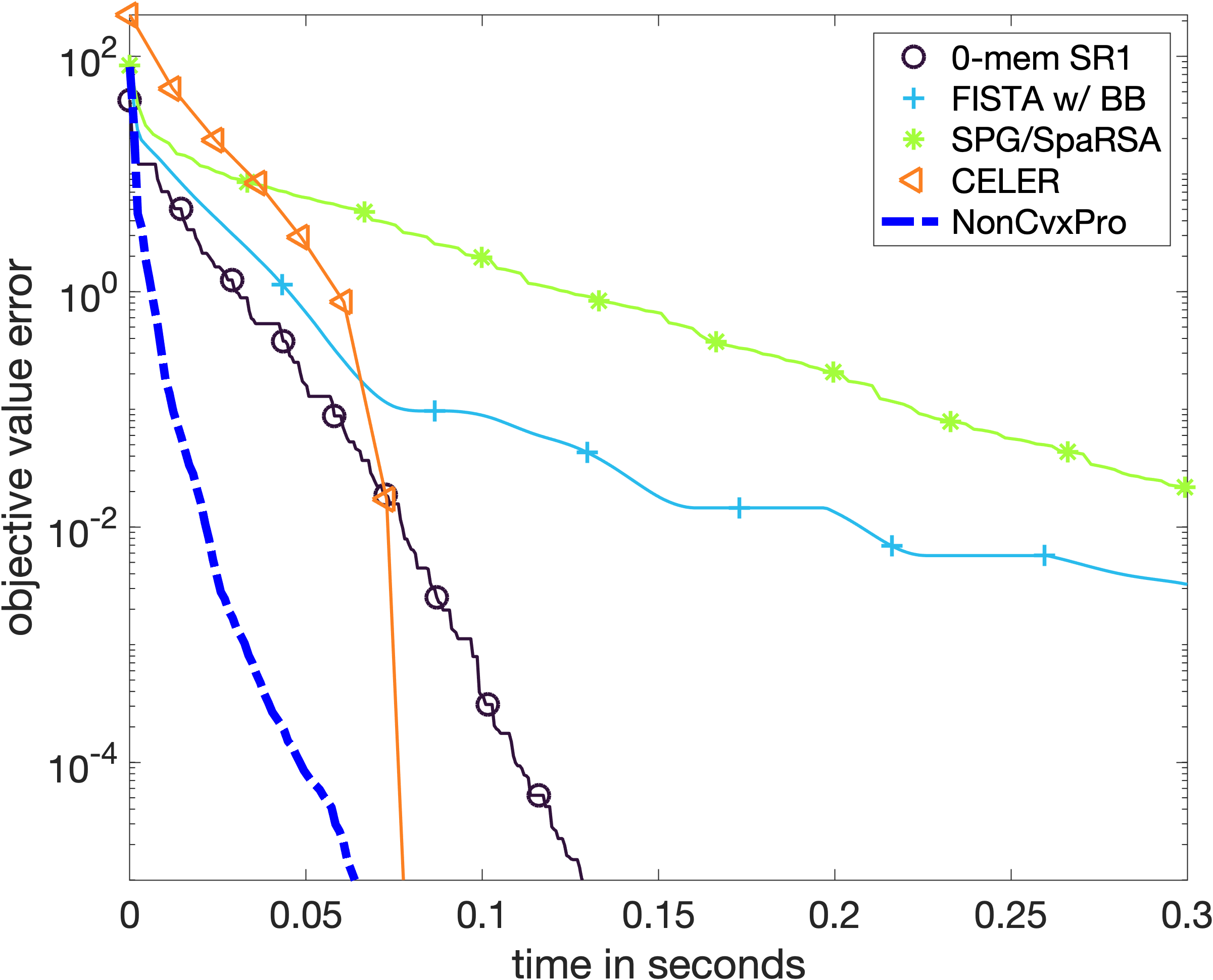}&
\includegraphics[width=0.2\linewidth]{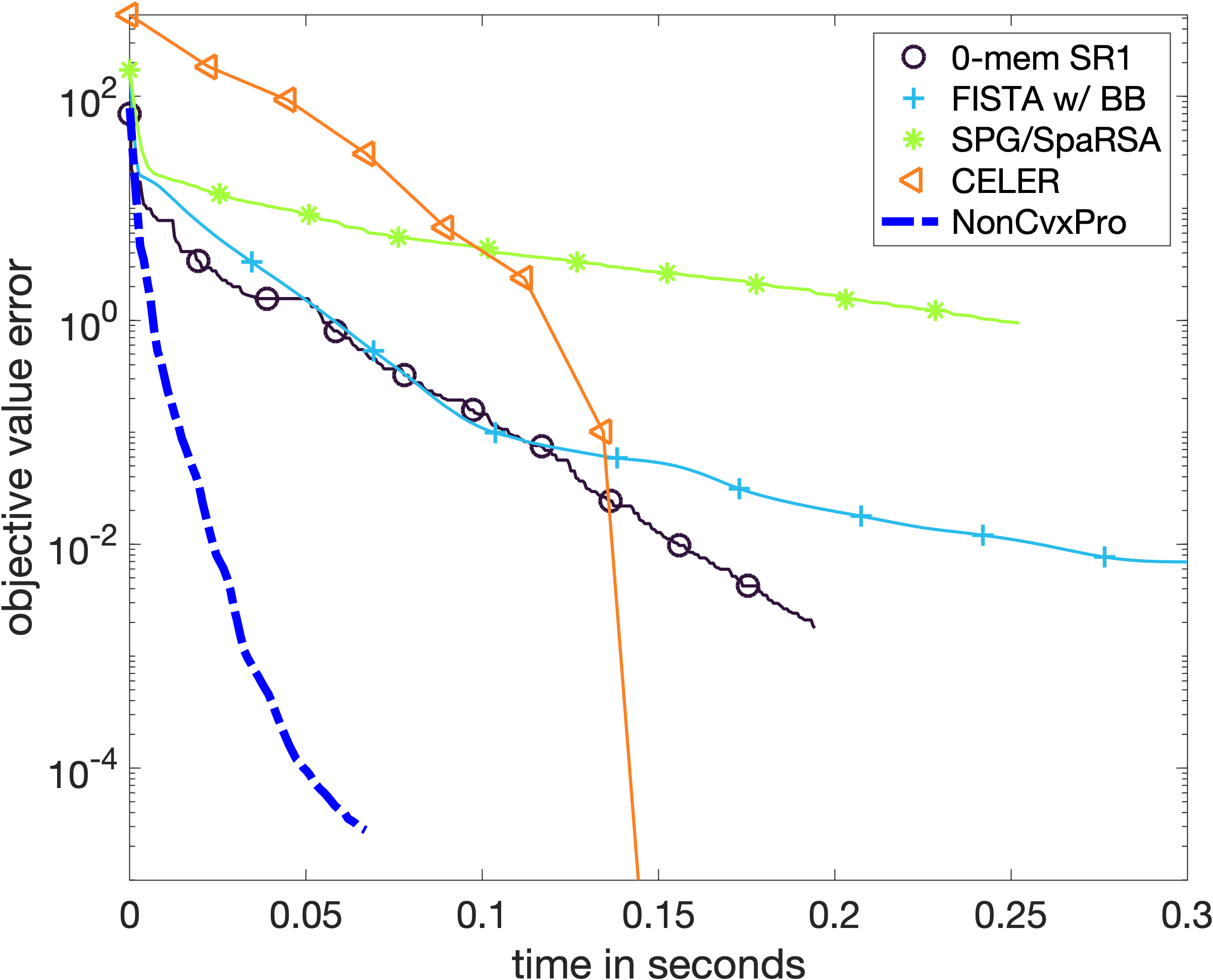}&
\includegraphics[width=0.2\linewidth]{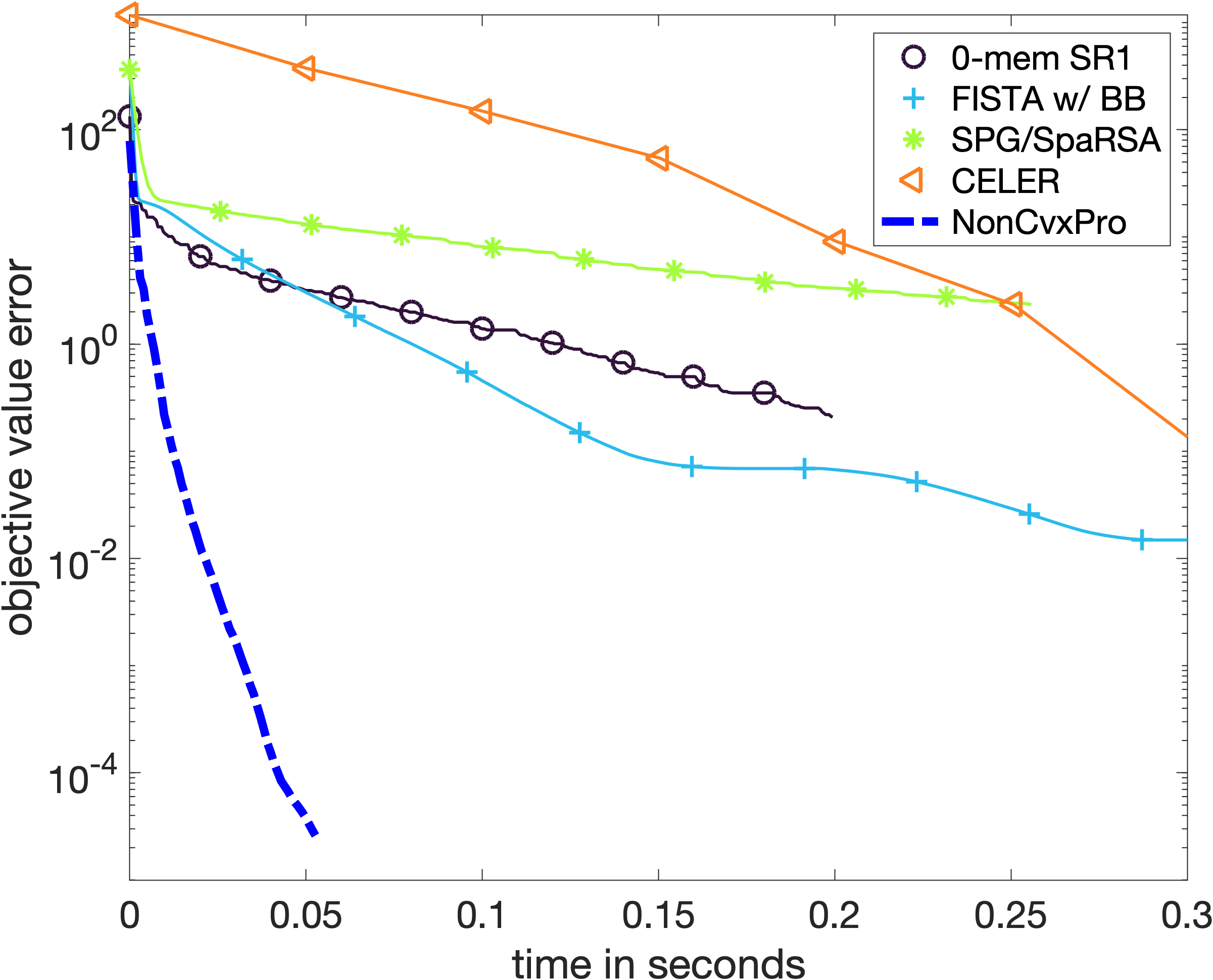}\\

\rotatebox{90}{\hspace{.5em} \footnotesize (305, 22494,85)} &
\includegraphics[width=0.2\linewidth]{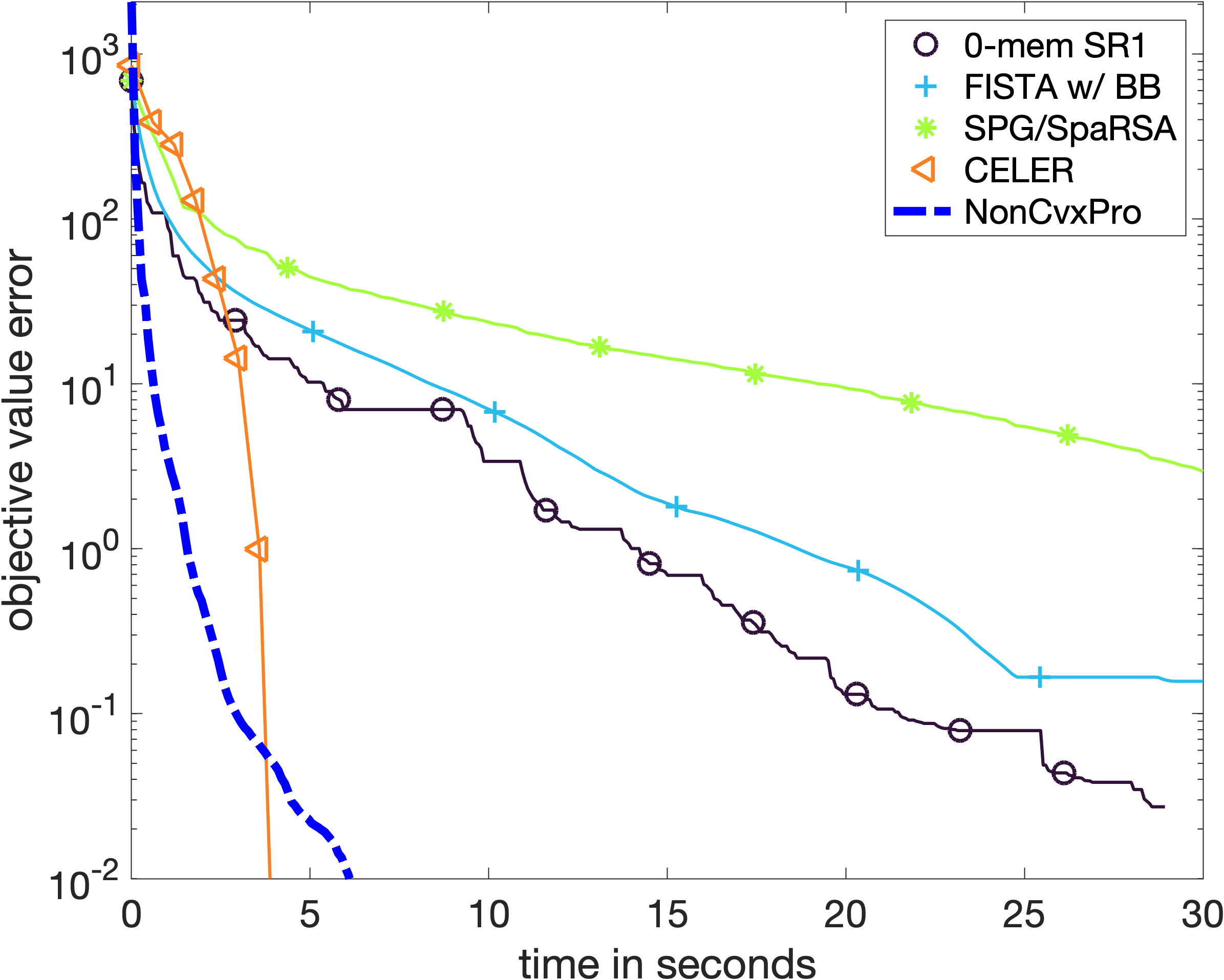}&
\includegraphics[width=0.2\linewidth]{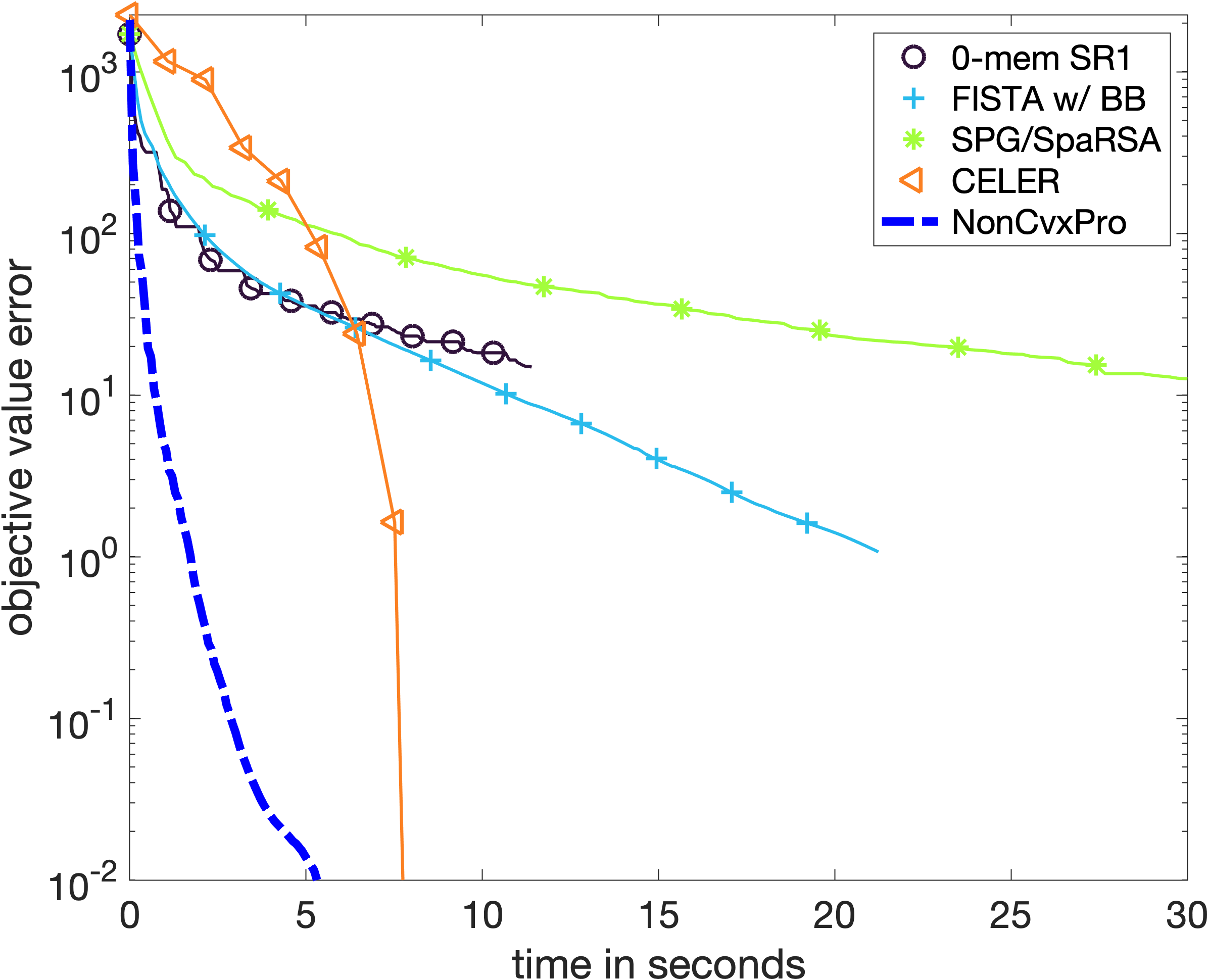}&
\includegraphics[width=0.2\linewidth]{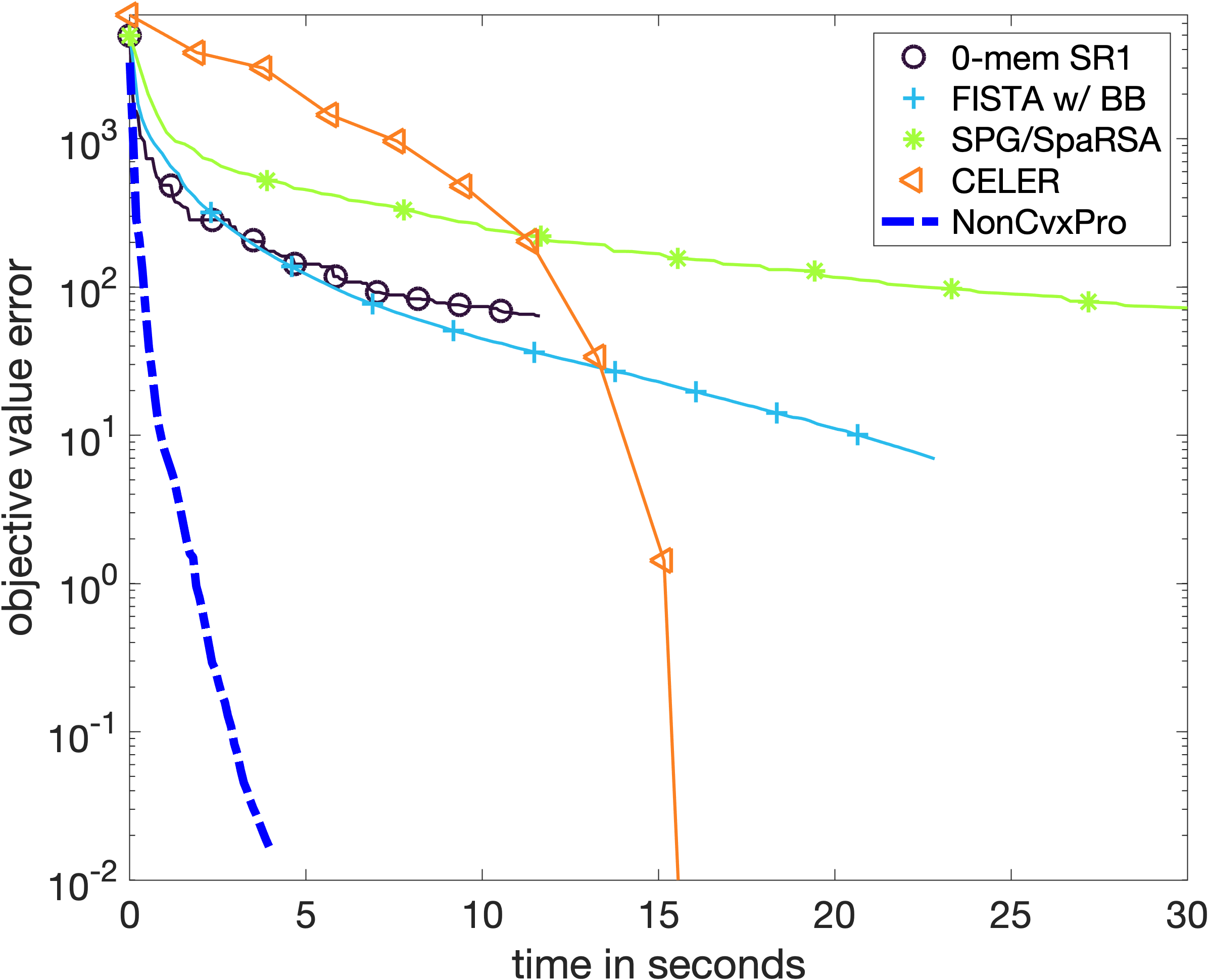}&
\includegraphics[width=0.2\linewidth]{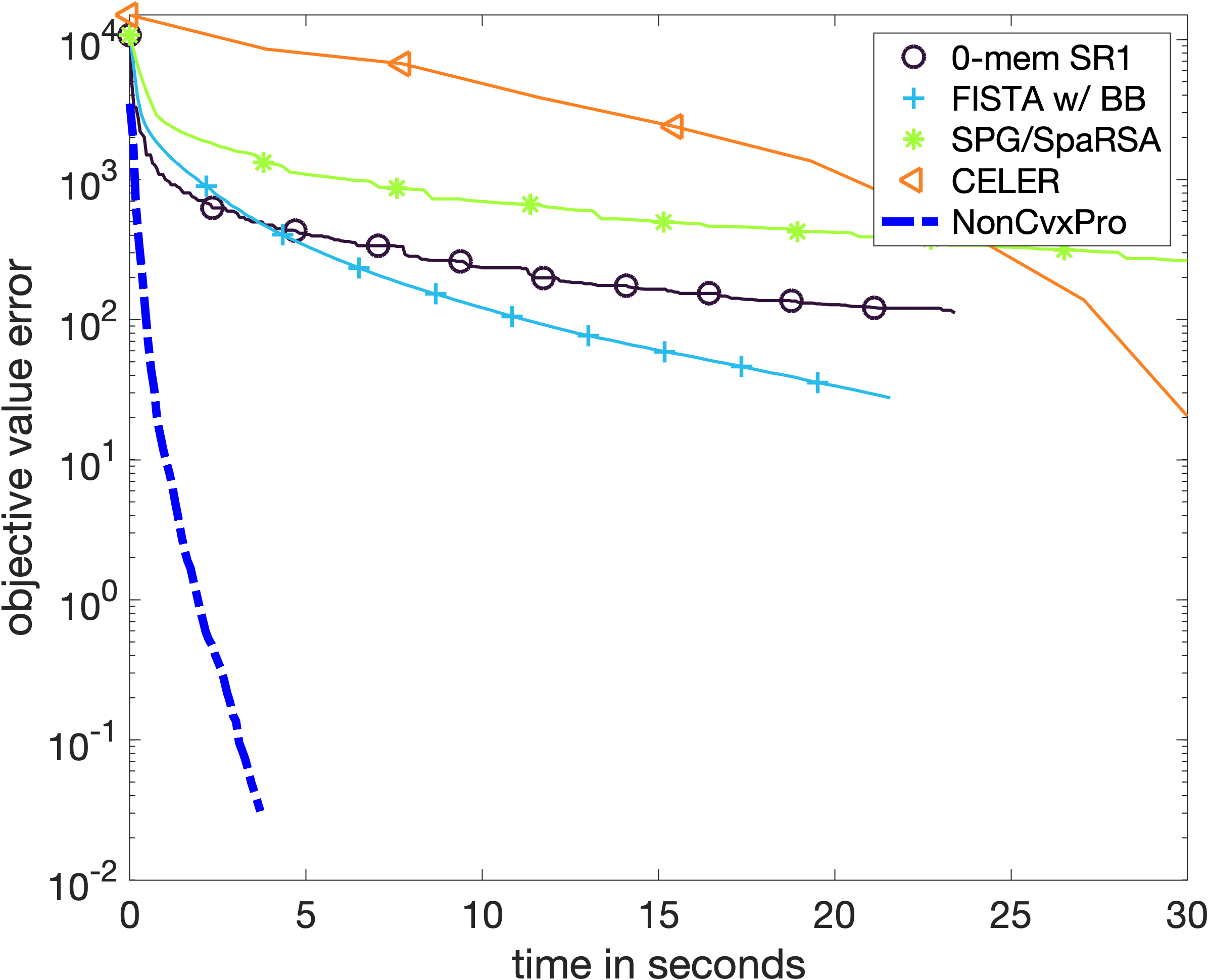}\\
&
$\lambda= \frac{1}{10}\lambda_{\max}$&$\lambda= \frac{1}{20}\lambda_{\max}$&$\lambda= \frac{1}{50}\lambda_{\max}$ &$\lambda= \frac{1}{100}\lambda_{\max}$
\end{tabular}
\end{center}
\caption{Comparisons for multitask Lasso at different regularisation strengths. The problem sizes  $(m,n,q)$ are displayed on the left of each row. The top two rows are synthetic datasets generated by random Gaussian variables with 5 and 10 active features respectively. The last row corresponds to a MEG/EEG dataset from the MNE website \label{fig:eeg_more}}
\end{figure}

\section{Douglas-Rachford and Primal-Dual Algorithms}
\label{sec-appendix-dr-pd}


We consider the resolution of a constrained group Lasso problem
\eq{
	\umin{X\be=y} \norm{\be}_{1,2}= \sum_g \norm{\be_g}_2
}
which we write as the minimization of either $F(\be)+G(\be)$ (for DR) or $F(\be)+G_0(X \be)$ where
$F=\norm{\cdot}_{1,2}$, $G=\iota_{\Cc}$ where the constraint set is $\Cc=\enscond{\be}{X\be=y}$ and $G_0=\iota_{\{y\}}$.
Here $\iota_\Cc$ is the convex indicator function of a closed convex set $\Cc$.

DR and PD are generic algorithm to solve minimization of function of the form $F+G$ and $F+G_0 \circ X$ when one is able to compute efficiently the so-called proximal operator of the involved functionals, where the proximal operator of some convex function $H$ and some step size $\tau \geq 0$ is
\eq{
	\Prox_{\tau H}(\be) \triangleq \uargmin{\be'} \frac{1}{2}\norm{\be-\be'}_2^2 + H(\be').
}
In our special case, one has 
\eq{
	\Prox_{\tau F}(\be) = \Big( \max( \norm{\be_g}-\tau, 0 ) \frac{\be_g}{\norm{\be_g}}  \Big)_g, \quad
	\Prox_{\tau G}(\be) = \beta + X^\top  (XX^\top)^{-1} ( y - X \beta ),
}
and $\Prox_{\tau G_0}(\be) = y$.

\paragraph{DR algorithm.}

We denoted the reflected proximal map as $\rProx_{\tau H}(\be) = 2 \Prox_{\tau H}(\be)-\be$. 
For some step size $\mu>0$ and weight $0 < \ga < 2$ (which is set to $\ga=1$ in our experiments), the iterates $(\be_k)_k$ of DR are $\be_{k} \triangleq \Prox_{\mu G}(z_{k})$ where $z_k$ satisfies 
\begin{align*}
	z_{k+1} = (1-\frac{\ga}{2}) z_k + \frac{\ga}{2} \rProx_{\mu F}( \rProx_{\mu G}(z_k) ).
\end{align*}

\paragraph{PD algorithm.}

Denoting $G_0^*(u) = \sup_{\be} \dotp{\be}{u}-G_0(\be)$ the Legendre transform of $G_0$, the PD iterations read
\begin{align*}
  w_{k+1} &= \Prox_{\si G_0^*}( w_k + \sigma X(\tilde \be_k) ) \\
  \be_{k+1} &= \Prox_{\tau F}( \be_k - \tau K^\top( w_{k+1} ) ) \\
  \tilde \be_{k+1} &= \be_{k+1} + \theta ( \be_{k+1} - \be_k).
\end{align*}
In our case, one has $G_0^*(u)=\dotp{u}{y}$ so that $\Prox_{\si G_0^*}(u) = u-\tau y$.
Convergence of the PD algorithm is ensure as long as $\tau \si \norm{X}^2 < 1$ where $\norm{X}$ is the operator norm, and $0 < \theta \leq 1$ (we use $\theta=1$ in the numerical simulation).
In our numerical simulation, we set $\tau \si \norm{X}^2=0.9$ and tuned the value of the parameter $\si$.


\section{Non-convex optimisation with $\ell_q$ quasi-norms}
\label{app:lq}
As mentioned, for $q\in (0,2)$, $R(\beta)\eqdef \norm{\beta}_q^q =  \sum_j \abs{\beta_j}^q $ has a quadratic variational form. In the case where $q>2/3$, we have the following bilevel smooth formulation:
\begin{cor}
When $q>2/3$, \eqref{eq:lasso_gen} is equivalent to
\begin{equation}\label{eq:lq_smooth}
\inf_{v\in\RR^n} f(v) \eqdef \inf_{u\in\RR^n} \frac12 \norm{u}_2^2 + \frac{C_q}{2}\sum_{j=1}^n \abs{v_j}^{\frac{2q}{2-q}} + \frac{1}{\lambda} L(X(u\odot v) ,y)
\end{equation}
where $C_q = (2-q) q^{q/(2-q)}$. 
The function $f$ is  differentiable function provided that $q>2/3$. Its gradient can be computed as in Theorem \ref{thm:grad}.
\end{cor}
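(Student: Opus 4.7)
The plan is to reduce the corollary to a direct application of Theorem~\ref{thm:grad}, the only real work being to verify its hypothesis that $v\mapsto h(v\odot v)$ is differentiable on all of $\RR^n$.

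First, I would record the quadratic variational form of $R(\beta)=\sum_j |\beta_j|^q$. This is the separable case of the one-dimensional $\ell^q$ example given in the bullet list preceding the corollary: applying Theorem~\ref{thm:folklore} to $\phi(t)=t^{q/2}$ (which is proper, concave and upper semicontinuous on $\RR_+$ for $q\in(0,2)$), a direct computation of the convex conjugate gives, coordinatewise, $h(\eta)=C_q\sum_j \eta_j^{q/(2-q)}$ with $C_q=(2-q)q^{q/(2-q)}$. Taking the singletons $g=\{j\}$ as the groups, the reparametrization $v_j=\sqrt{\eta_j}$, $u_j=\beta_j/\sqrt{\eta_j}$ prescribed in \eqref{eq:intro_noncvx} yields $h(v\odot v)=C_q\sum_j |v_j|^{\gamma}$ with $\gamma\eqdef \tfrac{2q}{2-q}$, so the inner formulation of Theorem~\ref{thm:grad} takes exactly the form \eqref{eq:lq_smooth}.

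Second, I would check the differentiability hypothesis of Theorem~\ref{thm:grad}. Because $h(v\odot v)$ is separable, it is enough to study the scalar map $\tau(t)\eqdef |t|^\gamma$. Away from the origin $\tau$ is smooth with derivative $\gamma|t|^{\gamma-1}\mathrm{sign}(t)$, and this expression admits a continuous extension to $t=0$ (with value $0$) precisely when $\gamma>1$, i.e.\ when $2q>2-q$, equivalently $q>2/3$. In that regime $\tau\in C^1(\RR)$, so $v\mapsto h(v\odot v)$ is $C^1(\RR^n)$, and Theorem~\ref{thm:grad} applies and delivers both the differentiability of $f$ and the gradient formulas \eqref{eq:gengradform2} (and, for $\lambda>0$ with the smooth quadratic loss of Corollary~\ref{cor:lasso}, \eqref{eq:gengradform1}), now with $\partial h(v^2)$ replaced by its explicit closed form $\partial h(v^2)_j = \tfrac{q\,C_q}{2-q}\,|v_j|^{2(q-1)/(2-q)}$.

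The only mild subtlety I anticipate is the boundary behavior at $v_j=0$: for $q\leq 2/3$ one has $\gamma\leq 1$ and $|v|^\gamma$ is not $C^1$ at the origin, which corresponds to the intrinsic non-smoothness of the original $\ell^q$ quasi-norm and shows that the threshold $q>2/3$ is sharp rather than an artefact. Beyond this check, no further arguments are needed: once the variational form and the scalar differentiability are in hand, the corollary is an immediate specialization of the general bilevel framework of Section~\ref{sec:theory}.
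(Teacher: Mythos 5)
Your argument matches the paper's: the corollary is given without a separate proof precisely because, as you observe, it is the specialization of Theorem~\ref{thm:grad} to the separable penalty $h(\eta)=C_q\sum_j\eta_j^{q/(2-q)}$ computed in the $\ell^q$ example of Section~\ref{sec:quad-var}, with differentiability of $v\mapsto h(v\odot v)=C_q\sum_j|v_j|^{2q/(2-q)}$ holding exactly when the exponent $\gamma=2q/(2-q)$ exceeds $1$, i.e.\ $q>2/3$. One small arithmetic slip in your added closed form: $\partial h(v^2)_j=\frac{qC_q}{2-q}|v_j|^{4(q-1)/(2-q)}$ rather than $|v_j|^{2(q-1)/(2-q)}$ (the latter is the exponent of $\eta_j=v_j^2$, not of $|v_j|$), which is consistent with the gradient $q^{2/(2-q)}|v|^{\gamma-1}\mathrm{sign}(v)-v\odot|X^\top\alpha|^2$ that the paper records in Appendix~\ref{app:lq}.
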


\begin{rem}[Existing approaches]
Existing approaches to $\ell_q$ minimisation are typically iterative thresholding/proximal algorithms \cite{bredies2015minimization}, IRLS \cite{chartrand2008restricted,daubechies2004iterative} or iterative reweighted $\ell_1$ algorithms \cite{foucart2009sparsest}.  Iterative thresholding algorithms are applicable only for the case where the loss function is differentiable, and hence not applicable for Basis pursuit problems which we describe below. Moreover, computation of the proximal operation requires solving a nonlinear equation. For iterative reweighted algorithms, they  require gradually decreasing an additional regularisation parameter $\epsilon>0$. This can be problematic in practice and  for finite $\epsilon$, one does not solve the original optimisation problem.
\end{rem}

\begin{rem}
Since we have a differentiable unconstrained problem, the problem \eqref{eq:lq_smooth} can be handled using descent  algorithms and convergence analysis is standard. For example, since  $f$ is coercive, for any descent algorithm applied to $f$, we can assume that the generated sequence $v_k$ is uniformly bounded and $\nabla f(v_k)$ is also uniformly bounded. So, by applying standard results \cite[Proposition 1.2.1]{bertsekas1997nonlinear}, we can   conclude that all limit points of sequences $v_k$ generated by descent methods under line search on the stepsize are stationary points. In fact, since we have an unconstrained minimisation problem with a continuously differentiable  $f$ which is also semialgebraic (for rational $q$) and hence satisfy the KL inequality \cite{attouch2013convergence}, convergence of the full sequence by descent methods with line search can be guaranteed \cite{noll2013convergence}.  
\end{rem}

\subsection{Basis pursuit}
In this section, we focus on the basis pursuit problem with $q\in (2/3,1)$,$$
\min_\beta \norm{\beta}_q^q \qwhereq X\beta = y.
$$
The set of local minimums are all $\beta$ for which $X\beta = y$ and there exists $\alpha$ such that $\pa{X^\top \alpha}_i = q \abs{\beta_i}^{q-1}\sign(\beta_i)$ on the support of $\beta$.
When $q>2/3$ and $f$ is differentiable with
$$
\nabla f(v) = q^{\frac{2}{2-q}} \abs{v}^{\gamma - 1} \sign(v) - v\odot \abs{X^\top \alpha}^2, \qwhereq\gamma \eqdef 2q/(2-q)>1.
$$
At a stationary point $v$, letting $\beta = -v^2 \odot X^\top \alpha$, we have $X\beta = y$ and $\nabla f(v) = 0$  implies that on the support of $v$, $q\abs{v}^2 = \abs{\beta}^{2-q}$ and so,
$$
X^\top \alpha = -v^{-2} \beta = -q \sign(\beta)\odot \abs{\beta}^{q-1},
$$
which is precisely the optimality condition of the original problem. 

\paragraph{Illustrations for Basis pursuit}
In Figure \ref{fig:flowlq}, we show that gradient descent dynamics for $f$ in the case of the indicator function $L(\cdot,y) = \iota_{\ens{y}}$ and a random Gaussian matrix $X\in \RR^{10\times 20}$, that is
$$
v_{k+1} = v_k - \tau \nabla f(v_k) = v_k - \tau\pa{  q^{\frac{2}{2-q}} \abs{v_k}^{\frac{3q-2}{2-q} } \odot \sign(v) - v_k \odot \abs{X^\top \alpha_k}^2}
$$
where 
$$
X\diag(v_k\odot v_k)X^\top \alpha_k = -y.
$$
Observe that as $q \to 2/3$, the evolution paths of $v_k$ becomes increasingly linear.

In Figure \ref{fig:lqphase}, we follow the experiment setup of \cite{chartrand2008restricted} and generate 100 problem instances $(\bar X,\bar y,\bar \beta)$. Each problem instance consist of a matrix $\bar X\in\RR^{m\times n}$ with $m=140$ rows and $n=256$ columns whose entries are identical independent distributed Gaussian random variable with mean 0 and variance, a vector $\bar \beta$ of  size $n$ with $K=40$ entries uniformly distributed on $\ens{1,\ldots, n}$ and whose nonzero entries are iid Gaussian with mean 0 and variance 1 and  $\bar y \eqdef \bar  X\bar \beta$. For each problem, we carry out the following procedure. For each $m\in \ens{60,\ldots, 140} \cap 2\NN$, we let $X$ be the matrix from the first $m$ rows of $\bar X$, and $y$ be the first $m$ entries of $\bar y$. We then compute $\beta$ by minimising $f$ for this  $X$ and $y$ using BFGS  with 10 randomly generated starting points and declare ``success" if $\norm{\beta - \bar \beta}_2 \leq 10^{-3}$ for one of these starting points.

\begin{figure}
\begin{center}
\begin{tabular}{cccc}
\rotatebox{90}{\hspace{2em}\footnotesize $\beta_k$}&
\includegraphics[width=0.22\linewidth]{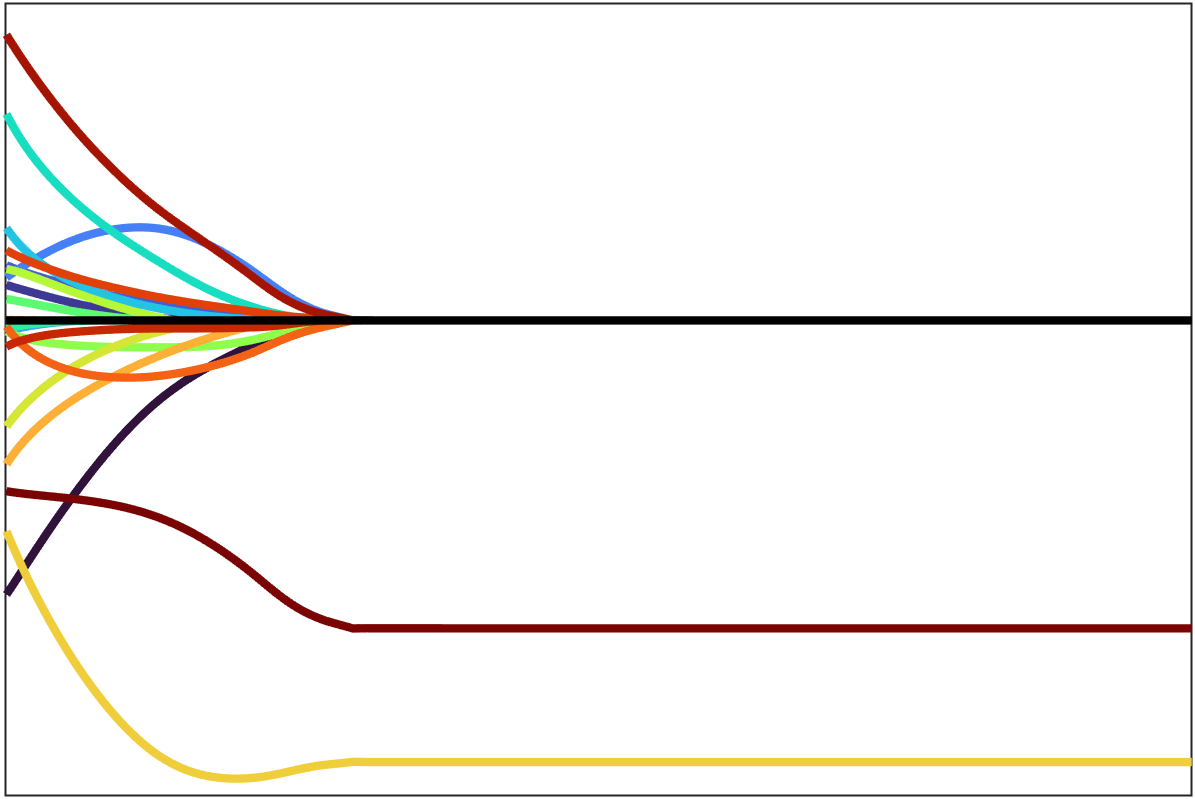}&
\includegraphics[width=0.22\linewidth]{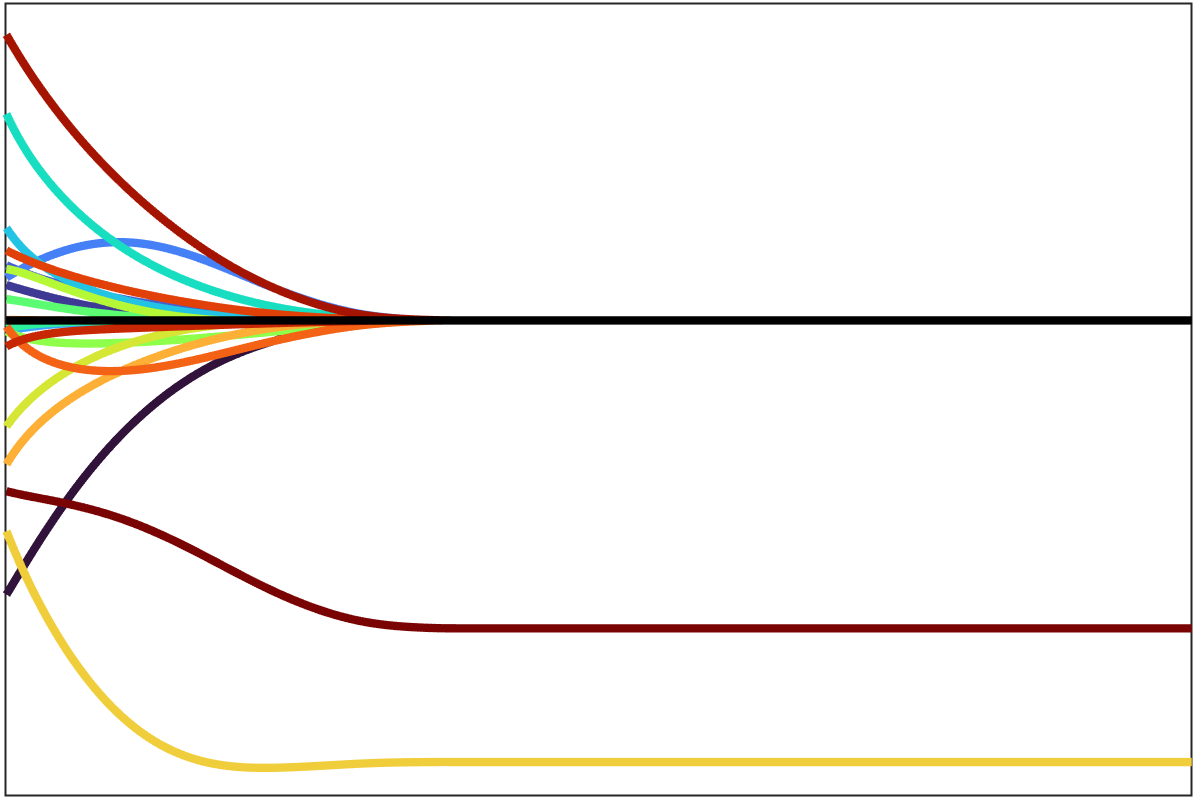}&
\includegraphics[width=0.22\linewidth]{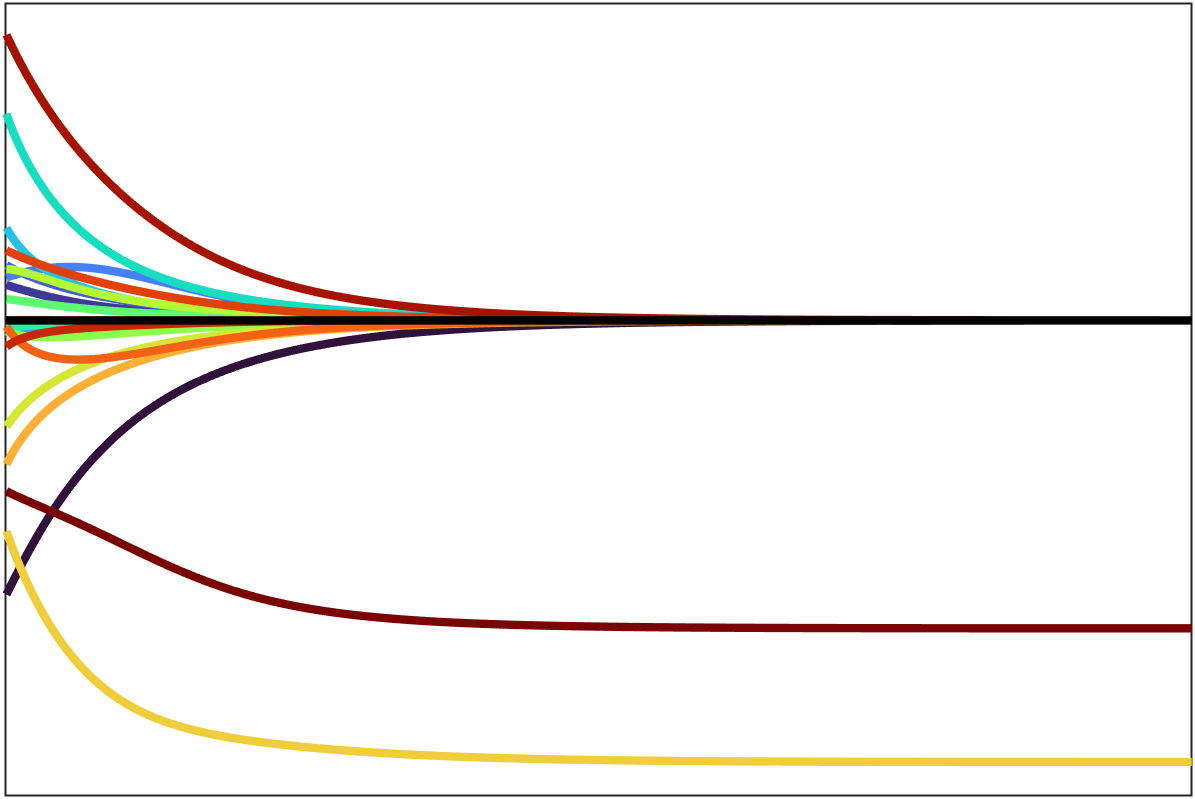}\\
\rotatebox{90}{\hspace{2em}\footnotesize $v_k$}
&
\includegraphics[width=0.22\linewidth]{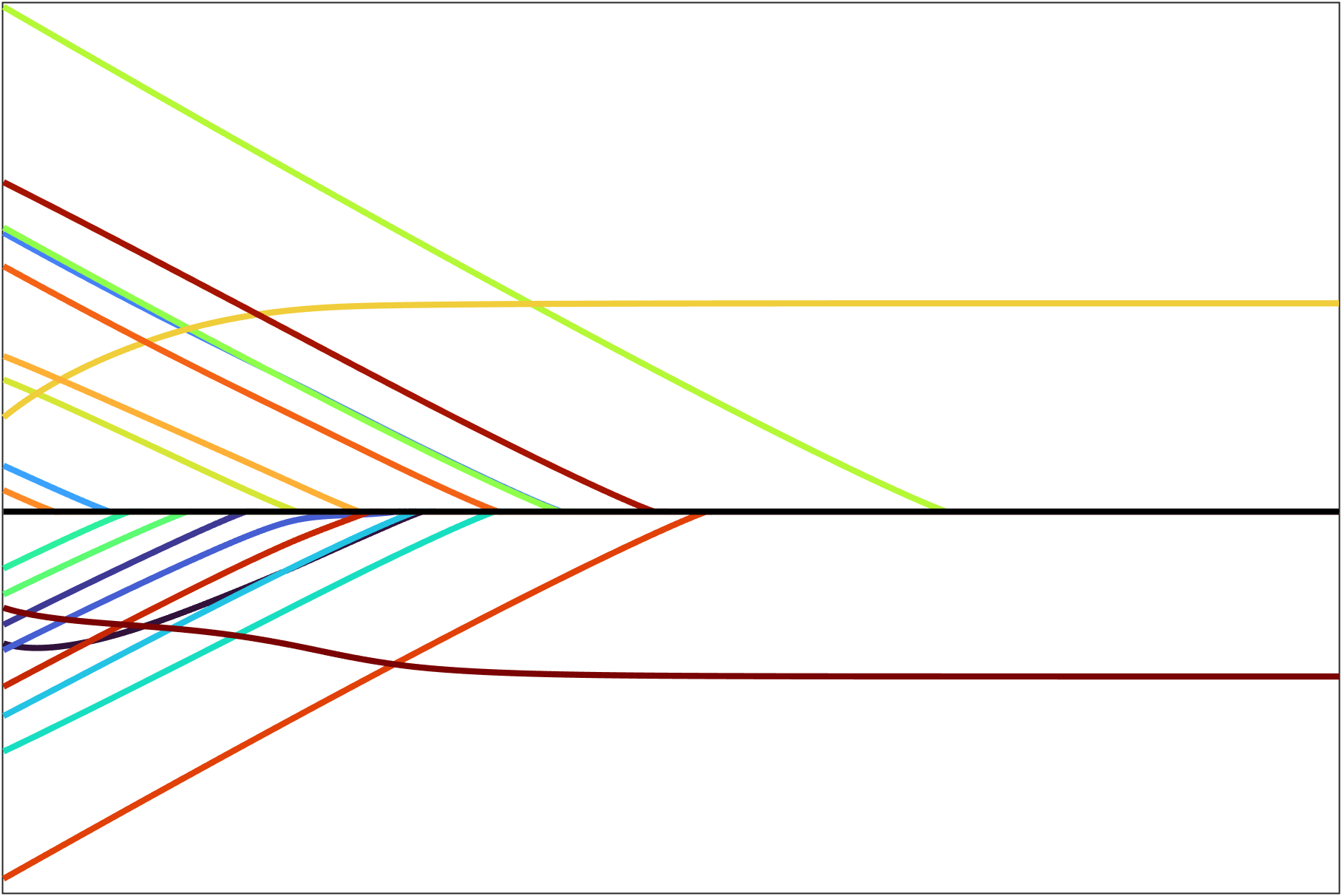}&
\includegraphics[width=0.22\linewidth]{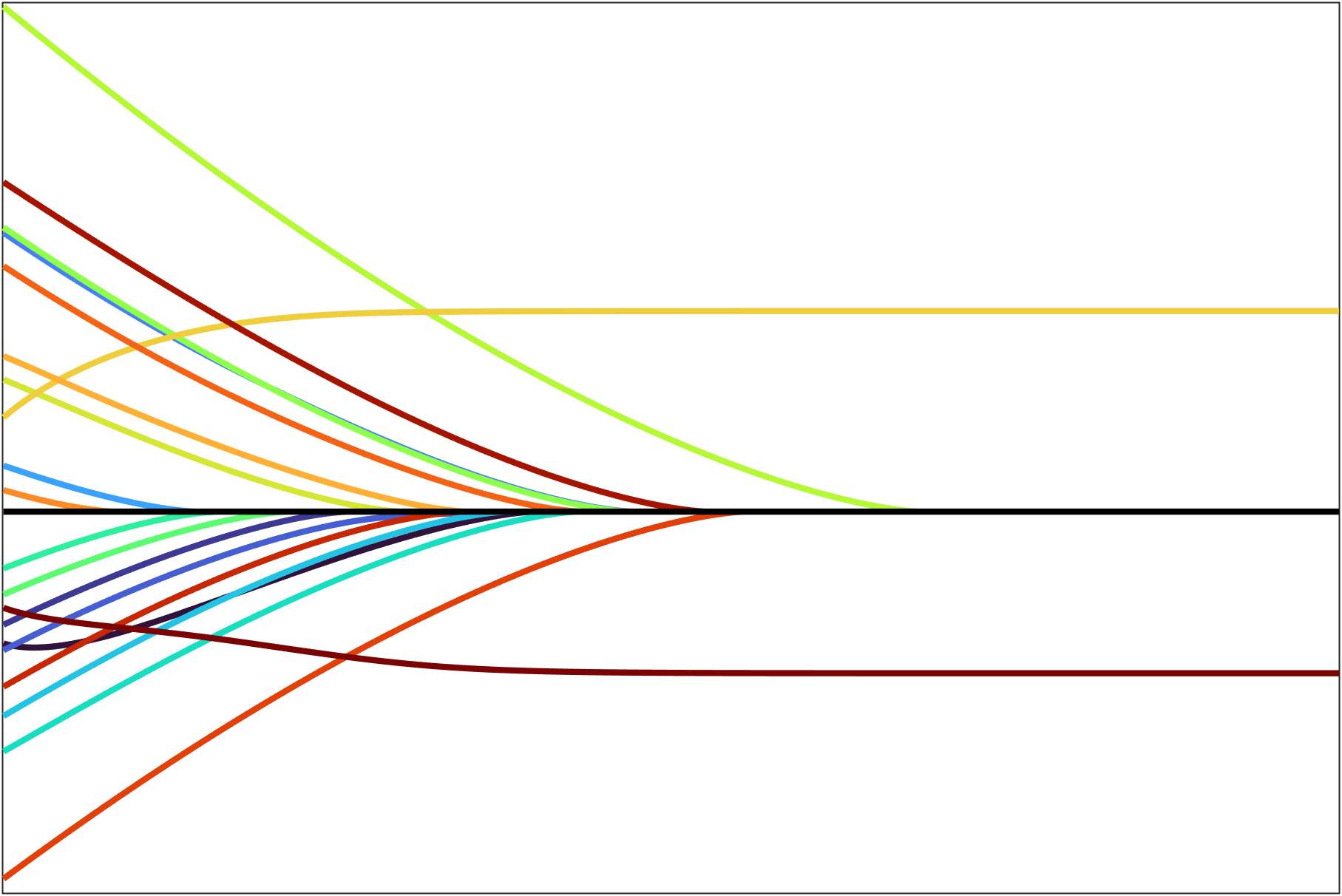}&

\includegraphics[width=0.22\linewidth]{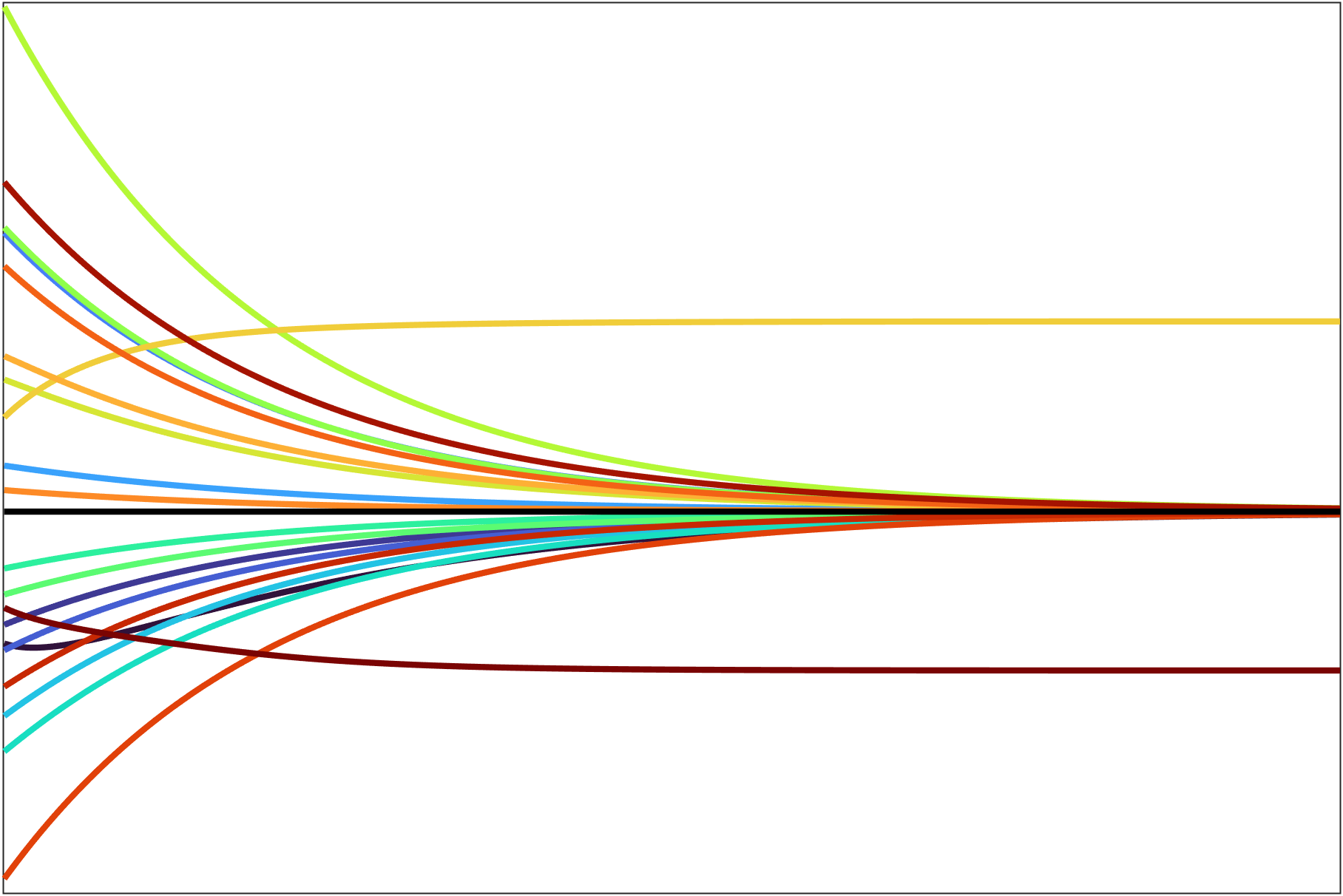}
\\
&q=0.7&q=0.8&q=1
\end{tabular}
\end{center}

\caption{Evolution of 20 coefficients for Basis pursuit with $\ell_q$ regularisation. The same stepsize $\tau$ is used for all plots. Top  row show the evolution of $\beta_k$ and the bottom  row show the evolution of $v_k$.\label{fig:flowlq}}
\end{figure}

\begin{figure}
\begin{center}

\includegraphics[trim={0cm 0 0 0cm},clip,width=0.4\linewidth]{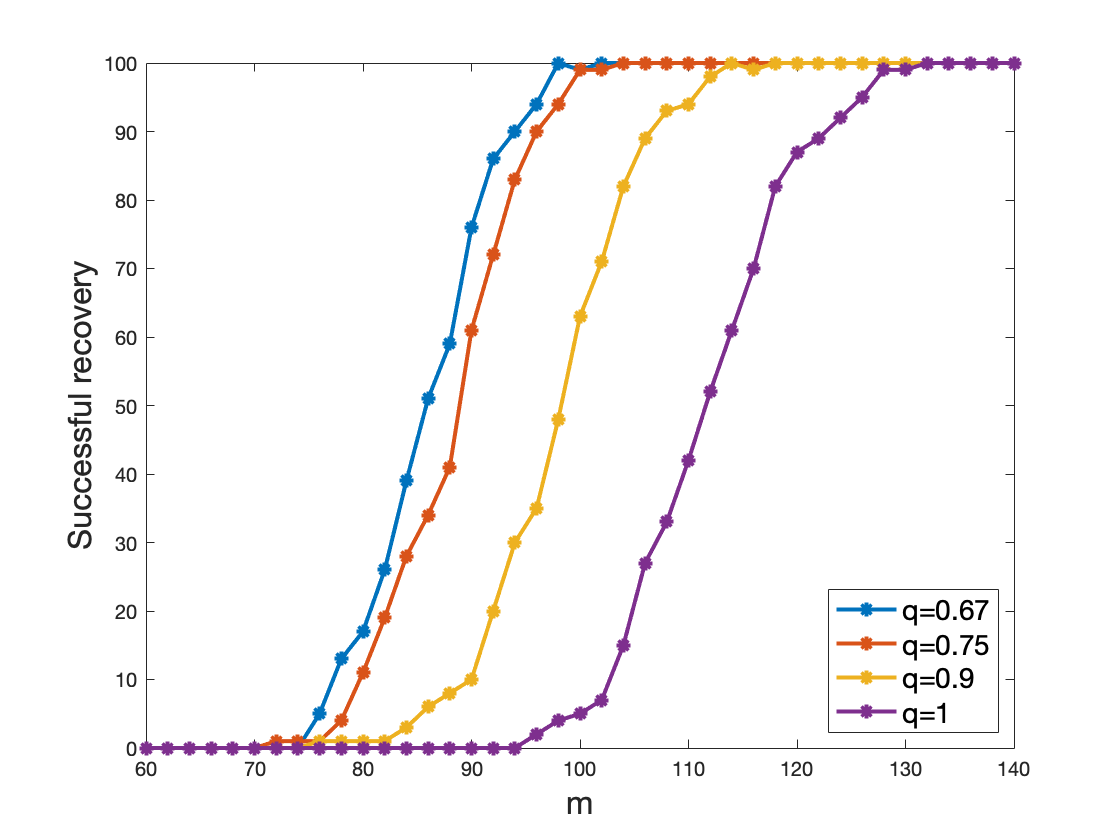}
\end{center}

\caption{Number of successful recovery by $\ell_q$ minimisation. \label{fig:lqphase}}
\end{figure}

\bibliographystyle{plain}
\bibliography{biblio}

\end{document}